\documentclass{article}


\usepackage{amsmath,amsfonts,bm}









\def\eqref#1{equation~\ref{#1}}









\def\1{\bm{1}}




\def\rvb{{\mathbf{b}}}



\def\rmH{{\mathbf{H}}}

\def\rmX{{\mathbf{X}}}





\def\mD{{\bm{D}}}
\def\mE{{\bm{E}}}
\def\mF{{\bm{F}}}

\def\mH{{\bm{H}}}
\def\mI{{\bm{I}}}

\def\mO{{\bm{O}}}

\def\mW{{\bm{W}}}

\DeclareMathAlphabet{\mathsfit}{\encodingdefault}{\sfdefault}{m}{sl}
\SetMathAlphabet{\mathsfit}{bold}{\encodingdefault}{\sfdefault}{bx}{n}


















\usepackage[preprint]{neurips_2024}



\usepackage[utf8]{inputenc} 
\usepackage[T1]{fontenc}    
\usepackage{hyperref}       
\usepackage{url}            
\usepackage{booktabs}       
\usepackage{amsfonts}       
\usepackage{nicefrac}       
\usepackage{microtype}      
\usepackage{xcolor}         
\usepackage{microtype}
\usepackage{graphicx}
\usepackage{subcaption}
\usepackage{hyperref}
\usepackage{here}
\usepackage{cancel}
\usepackage{amsmath}
\usepackage{amssymb}
\usepackage{mathtools}
\usepackage{amsthm}
\usepackage{multirow}
\usepackage{bbding}
\usepackage{tabularx}

\usepackage[capitalize]{cleveref}
\makeatletter
\usepackage{xspace}
\def\@onedot{\ifx\@let@token.\else.\null\fi\xspace}
\DeclareRobustCommand\onedot{\futurelet\@let@token\@onedot}
\DeclarePairedDelimiterX\setc[2]{\{}{\}}{\,#1 \;\delimsize\vert\; #2\,}

\def\eg{\emph{e.g}\onedot}

\def\ie{\emph{i.e}\onedot}

\def\aka{a.k.a\onedot}
\def\iid{i.i.d\onedot}

\usepackage{thmtools}
\usepackage{thm-restate}

\theoremstyle{plain}
\newtheorem{theorem}{Theorem}[section]

\newtheorem{lemma}[theorem]{Lemma}

\theoremstyle{definition}
\newtheorem{definition}[theorem]{Definition}

\theoremstyle{remark}

\title{Understanding the Expressivity and Trainability of Fourier Neural Operator: A Mean-Field Perspective}

%

\author{%
  Takeshi Koshizuka \\
  Department of Computer Science \\
  The University of Tokyo\\
  \texttt{koshizuka-takeshi938444@g.ecc.u-tokyo.ac.jp} \\
  \And
  Masahiro Fujisawa \\
  RIKEN AIP \\
  \texttt{masahiro.fujisawa@riken.jp} \\
  \AND
  Yusuke Tanaka \\
  NTT Communication Science Laboratories \\
  \texttt{ysk.tanaka@ntt.com}
  \And 
  Issei Sato \\
  Department of Computer Science \\
  The University of Tokyo\\
  \texttt{sato@g.ecc.u-tokyo.ac.jp} \\
}

\begin{document}

\maketitle
\begin{abstract}
  In this paper, we explores the expressivity and trainability of the Fourier Neural Operator (FNO). We establish a mean-field theory for the FNO, analyzing the behavior of the random FNO from an \emph{edge of chaos} perspective. Our investigation into the expressivity of a random FNO involves examining the ordered-chaos phase transition of the network based on the weight distribution. This phase transition demonstrates characteristics unique to the FNO, induced by mode truncation, while also showcasing similarities to those of densely connected networks. Furthermore, we identify a connection between expressivity and trainability: the ordered and chaotic phases correspond to regions of vanishing and exploding gradients, respectively. This finding provides a practical prerequisite for the stable training of the FNO. Our experimental results corroborate our theoretical findings. 
\end{abstract}

\section{Introduction}
\label{sec: introduction}
The recent surge in interest in solving partial differential equations (PDEs) has led to the use of neural network (NN)-based surrogate models. One promising line of work is the neural operator (NO), which learns the solution operator of PDEs, thereby bypassing the need for mesh dependency. Among the variants of NO, the Fourier neural operator (FNO)~\citep{li2020fourier} has gained popularity because of its advantageous cost/accuracy trade-off. The FNO can capture long-distance spatial interactions using the Fourier transform, whereas convolutional neural networks (CNNs)~\citep{wen2019multiphase,jiang2021deep} and message-passing graph neural networks (GNNs)~\citep{li2020neural,li2020multipole} are limited to operating solely on local variables. From a computational cost perspective, the Fourier transform is performed in quasi-linear time by the fast Fourier transform (FFT), making it significantly faster than the Transformer~\citep{li2023transformer}.

Despite the widespread use of FNO as an architecture, there is a lack of comprehensive theoretical analysis on its expressivity and trainability. The universal approximation property~\citep{kovachki2021universal}, recognized as the basic expressivity of the FNO, is well-known; however, the exponential expressivity depending on the weight distribution, which are known for the densely connected network (DCN)~\citep{schoenholz2016deep}, \aka fully connected network, and CNN~\citep{xiao2018dynamical}, remains unexplored. Regarding the trainability of FNO, the training instability in deep FNO has been experimentally reported by~\citet{tran2022factorized}, but the causes and conditions of the difficulty have not been clarified either theoretically or experimentally. 

We analyze the exponential expressivity (how far can two different input vectors be pulled apart) and trainability (how much gradient explosion on average) of the random FNO from the perspective of whether the network is \emph{ordered} or \emph{chaotic}. This viewpoint is grounded in mean-field theory, an analytical framework for NN established by~\citet{poole2016exponential,schoenholz2016deep,yang2017mean,xiao2018dynamical}. 
A network is considered ordered when it brings all representations of two different spatial positions closer together, and chaotic when it drives them apart during forward propagation. 
Furthermore, a network can only be stably trained when initialized close to the \emph{edge of chaos}, which is the transition point between the ordered phase and the chaotic phase. 
In fact, He initialization~\citep{he2015delving} is an example of a commonly used edge of chaos initialization for the DCN with ReLU activation~\citep{burkholz2019initialization}. 

In this study, we establish a mean-field theory to analyze the expressivity and trainability of the FNO. 
Our investigation reveals the expressivity of random FNO at initialization by examining the transition point between ordered and chaos phases. The phase transition exhibits FNO-specific characteristics induced by mode truncation, as well as similarities with the characteristics of DCN and CNN. We also find a link between expressivity and trainability: the ordered and chaotic phases correspond to regions of vanishing and exploding gradient, respectively. This discovery offers a practical initialization prerequisite for the stable training of the FNO.



\section{Background}
\subsection{Fourier Neural Operators}
The FNO~\citep{li2020fourier} is one of the well-established methods for solving PDEs across many scientific problems~\citep{yang2021seismic,wen2022accelerating,hwang2022solving,jiang2021digital,pathak2022fourcastnet}. An $M$-dimensional FNO with the number of hidden features $D$ and the spatial size $N$ for learning the operators between scalar-valued functions is defined as follows. 
\begin{gather}
    \label{eq:fno-ori}
    \begin{gathered}
    \rmX^{(\ell+1)}=\phi\left( \mathcal{D}^{(\ell)}\left(\rmX^{(\ell)}\right) +\mathcal{K}^{(\ell)}\left(\rmX^{(\ell)}\right)\right),
    \end{gathered}
\end{gather}
where $\rmX^{(\ell)}\in \mathbb{R}^{\overbrace{\scriptstyle N \times \cdots \times N}^{M} \times D}$ is the $\ell$-th hidden representation, $M$ is the number of spatial dimensions, and $\phi$ is activation. The hidden representations $\rmX^{(0)}$ and $\rmX^{(L)}$ are the output of the lifting operator and the input of the projection operator, respectively. 
The architecture of these operators does not affect our analysis as long as the network stays shallow, as implemented in~\citep{li2020fourier}. 
The $\ell$-th densely connected (DC) module $\mathcal{D}^{(\ell)}$ is an affine point-wise map in the physical space and the $\ell$-th Fourier convolution module $\mathcal{K}^{(\ell)}$ is a parameterized kernel integral operator using Fourier transform. The bias term is considered to be included in either or both modules $\mathcal{K}^{(\ell)}$ and $\mathcal{D}^{(\ell)}$. 

Several FNO variants have been developed to address specific challenges, such as geo-FNO~\citep{li2022fourier} for irregular regions and group equivariant FNO (G-FNO)~\citep{helwig2023group}, which maintains equivariance to rotation, reflection, and translation. U-NO~\citep{rahman2022u} and U-FNO~\citep{wen2022u} integrate FNO with U-Net for multiscale modeling. Additionally, WNO~\citep{tripura2022wavelet} utilizes wavelet bases, while CFNO~\citep{brandstetterclifford} enhances the use of geometric relations between different fields and field components through Clifford algebras. Adaptive FNO~\citep{zhao2022incremental,guibas2021adaptive} and F-FNO~\citep{tran2022factorized} have improved computational and memory efficiency through incremental learning and architectural modifications. Other approaches for improving performance include methods with increasing physical inductive bias~\citep{li2024physics}, data augmentation~\citep{brandstetter2022lie}, and a variance-preserving weight initialization scheme~\citep{poli2022transform}.

While numerous new models and learning methods have been proposed, relatively little research has been conducted to understand the intrinsic nature of these methods. Issues such as spectral bias~\citep{zhao2022incremental} and training instability~\citep{tran2022factorized} have been reported. 
\citet{tran2022factorized} observed that training did not converge even at $24$ layers. They successfully addressed the stability and accuracy degradation issues associated with an increase in the number of layers by implementing skip connections behind activation and introducing various training techniques. However, it is still unknown that the theoretical basis for why the original architecture of the FNO has problems with training instability and accuracy degradation.

\subsection{Mean-field Theory for Neural Networks}
The mean-field theory has been used to provide a mathematical framework for understanding the expressivity and trainability of neural networks~\citep{poole2016exponential,schoenholz2016deep,yang2017mean,hayou2018selection,xiao2018dynamical}. 
A series of papers~\citep{poole2016exponential,schoenholz2016deep} delved into the average behavior of infinite-width random deep DCN, with weights and biases initialized by a zero-mean Gaussian distribution. The formulation is given below.
\begin{gather}
    \begin{gathered}
        \label{eq:mf-dcn}
        \bm{x}^{(\ell)} = \phi\left(\bm{h}^{(\ell)}\right),\ \bm{h}^{(\ell)}=\mW^{(\ell)} \bm{x}^{(\ell-1)} + \bm{b}^{(\ell)} ,\\ W^{(\ell)}_{i, j} \overset{\iid}{\sim} \mathcal{N}\left(0, \frac{\sigma^2}{D}\right),\ b^{(\ell)}_i \overset{\iid}{\sim} \mathcal{N}\left(0, \sigma_b^2\right),
    \end{gathered}
\end{gather}
where $\bm{x}^{(\ell)} \in \mathbb{R}^D$ is the $\ell$-th hidden representation, $\mW^{(\ell)} \in \mathbb{R}^{D \times D},\ \bm{b}^{(\ell)} \in \mathbb{R}^D$ are the $\ell$-th learnable parameters, and the width is assumed to be sufficiently large $D \gg 1$. 

\citet{poole2016exponential} and \citet{schoenholz2016deep} explored the exponential expressivity of random DCN determined by two phases depending on the initial variance parameters $\sigma^2$ and $\sigma_b^2$, as shown in~\cref{fig:phase_diagram_tanh}. \citet{poole2016exponential} first examined the forward propagation of a random DCN with Tanh activation. 
They demonstrated that the covariance $\bm{\Sigma}^{(\ell)}$ of the $\ell$-th pre-activation representations $\bm{h}^{(\ell)}$ and $\tilde{\bm{h}}^{(\ell)}$ corresponding to two different inputs $\bm{x}^{(0)}$ and $\tilde{\bm{x}}^{(0)}$ are obtained by
\begin{align*}
    \forall d \in [D],\ \bm{\Sigma}^{(\ell)} = \sigma^2\mathbb{E}\left[ \phi\left(h_d^{(\ell-1)}\right) \phi\left(\tilde{h}_d^{(\ell-1)}\right) \right] + \sigma_b^2,
\end{align*}
where the expectation is taken over the pre-activations $[h_d^{(\ell-1)}, \tilde{h}_d^{(\ell-1)}] \sim \mathcal{N}(\bm{0}, \bm{\Sigma}^{(\ell-1)})$. 
The covariance converges exponentially to a fixed point $\bm{\Sigma}^*$ determined by parameters $\sigma^2$ and $\sigma_b^2$. 

A network is considered \emph{ordered} when it brings two distinct representations closer together, which implies a state of small expressivity. Conversely, a network is \emph{chaotic} when it drives them apart during forward propagation, implying a state of large expressivity.
Networks with either excessively small or large expressivity can disrupt the structure of the input: the difference between two distinct inputs quickly becomes indistinguishable in networks with small expressivity, while similarities between inputs are no longer recognized in networks with large expressivity.
The network is \emph{ordered} if the initial variance of the weights is small. For larger values, and beyond a certain threshold, the phase shifts, and the network behaves chaotically. This phase shift point is termed \emph{the edge of chaos}. 

Subsequently, \citet{schoenholz2016deep} discovered the connection between expressivity and trainability of DCN through analysis of the backpropagation behavior. In an ordered network, the expected value of the gradient norm becomes exponentially small during backpropagation, while it becomes exponentially large in a chaotic network. This implies that the gradient vanishes/explodes in ordered or chaotic networks, respectively. These findings suggest that deep DCN can be stably trained only near the edge of chaos. \citet{schoenholz2016deep} also provided an estimate of the maximum depth at which a network can be trained when initialized away from the edge of chaos. These insights are not limited to DCN and similar results have been observed for residual networks~\citep{yang2017mean} and CNN~\citep{xiao2018dynamical}. 

\section{A mean-field theory for FNO}
\label{sec:mf_theory}
In this section, we establish a mean-field theory of FNO. We demonstrate the exponential expressivity of random FNO by examining the ordered-chaos phase transition during the forward propagation. Furthermore, we identify the connection between expressivity and trainability by concentrating on backward propagation behaviors. 
Our analysis is an advanced version of the approach developed by~\citet{poole2016exponential,schoenholz2016deep,yang2017mean,xiao2018dynamical}. In~\cref{sec:problem_setting}, we outline the problem setup. In~\cref{sec:fp_bp}, we analyze the forward and backward propagation behavior of random FNO at initialization. In~\cref{sec:init_scheme}, we discuss the practical prerequisites for initialization to stabilize the training of FNO, leveraging the similarities between FNO and DCN. The proofs for all the lemmas and theorems are provided in~\cref{app:proof_forward,app:proof_backward}.

\subsection{Problem setting}
\label{sec:problem_setting}
Here, we consider a simplified one-dimensional (1D) FNO. Note that our theory is extensively applicable to the original FNO, as discussed in~\cref{sec:init_scheme}. 
The simplified 1D FNO, with a depth of $L$, is defined by the number of hidden features $D$, a spatial size $N = 2^m$ (where $m$ is an integer), the number of Fourier modes $K \leq \frac{N}{2}+1$, two learnable weights $\mathbf{\Theta}^{(\ell, k)}\in \mathbb{R}^{D \times D}$ and $\mathbf{\Xi}^{(\ell, k)} \in \mathbb{R}^{D \times D}$, and a bias $\rvb^{(\ell)} \in \mathbb{R}^{D}$.
Denote $\phi\colon \mathbb{R} \rightarrow \mathbb{R}$ by the non-decreasing activation function. Let $\rmX^{(\ell)}\in \mathbb{R}^{N \times D}$ and $\rmH^{(\ell)} \in \mathbb{R}^{N \times D}$ be the post and pre-activation representations defined by
\begin{gather}
    \label{eq:mf-fno}
    \begin{gathered}
    \rmX^{(\ell)} = \phi\left(\rmH^{(\ell)} \right),\
    \rmH^{(\ell)} = \sum_{k=0}^{K-1} \sqrt{\frac{c_k}{2}} \left(\rmH^{(\ell, k)} + \overline{\rmH}^{(\ell, k)}\right) + \rvb^{(\ell)}\bm{1}_N^{\top},\\
    \rmH^{(\ell, k)} \coloneqq \mF^{\dagger} \mD^{(k)} \mF \rmX^{(\ell-1)} \left( \mathbf{\Theta}^{(\ell, k)} + \sqrt{-1} \mathbf{\Xi}^{(\ell, k)}\right),
    \end{gathered}
\end{gather}
where $\delta_{a, b}$ is the Kronecker-delta, $c_k = 2 - \delta_{k, 0} - \delta_{k, N/2}$ is a constant, $\bm{1}_N$ is all-ones column vector with the size $N$, $\overline{\rmH}^{(\ell, k)}$ is the conjugate of $\rmH^{(\ell, k)}$ corresponding to the $(N-k)$-th frequency components, $\dagger$ is the transpose conjugate, $\mF \in \mathbb{C}^{N \times N}$ is the Discrete Fourier Transform (DFT) matrix defined by $F_{k, n} = \frac{1}{N} \exp(-\frac{2\pi k}{N} n)$, and $\mD^{(k)}$ is a diagonal matrix with a 1 at position $D_{k, k}^{(k)}$. 

There are two differences from the original FNO proposed by~\citet{li2020neural}: (1)
the DC module is dropped for the simplicity, and (2) $\rmH^{(\ell, k)}$ is multiplied by $\sqrt{2}$ with respect to $k=0, \frac{N}{2}$ for appropriate normalization. 
We assume that the weights of FNO are initialized by \iid samples from Gaussian distribution,\ \ie $\Theta^{(\ell, k)}_{i,j} \overset{\iid}{\sim} \mathcal{N}(0, \frac{\sigma^2}{2D}),\ \Xi^{(\ell, k)}_{i,j} \overset{\iid}{\sim} \mathcal{N}(0, \frac{\sigma^2}{2D}),\ b_i^{(\ell)} \overset{\iid}{\sim} \mathcal{N}(0, \sigma_b^2)$. 
For $k = 0, \frac{N}{2}$, the parameter $\bm{\Xi}^{(\ell, k)}$ is set to zero exceptionally. For all $d \in [D] = \{ 0,\ \dots,\ D-1 \}$, the pre-activations $\rmH_{:, d}^{(\ell)} \in \mathbb{R}^{N}$ are \iid random variables. 
When $D \gg 1$, by the central limit theorem, the variables $\rmH_{:, d}^{(\ell)}$ follow Gaussian distribution with mean $0$ and covariance matrix  $\Sigma^{(\ell)}_{\alpha, \alpha'} \coloneqq \mathbb{E}_{\Theta^{1:\ell},\Xi^{1:\ell}} \left[ H_{\alpha, d}^{(\ell)} H_{\alpha', d}^{(\ell)} \right]$, where the expectation is taken over all random variables $[\mathbf{\Theta}^{1:\ell}, \mathbf{\Xi}^{1:\ell}] \coloneqq \{ \mathbf{\Theta}^{(\ell', k')}, \mathbf{\Xi}^{(\ell', k')} \}_{\ell' \in [\ell], k' \in [K]}$. 
Our theory can be easily extended to 2D and 3D FNOs. 

\subsection{Expressivity and trainability of FNO}
\label{sec:fp_bp}
Firstly, the forward propagation of a single input signal with spatial features is described as follows.
\begin{restatable}[Iterated map]{lemma}{iteratedmap} \label{lem:iterated_map}
    For all $d \in [D]$, the covariance $\bm{\Sigma}^{(\ell)} \coloneqq \mathbb{E}_{\Theta^{1:\ell},\Xi^{1:\ell}} \left[ \rmH_{:, d}^{(\ell)} \left.\rmH_{:, d}^{(\ell)}\right.^{\top} \right]$ is obtained recursively by the iterated map $\mathcal{C}$ defined by
    \begin{gather}
        \label{eq:iterated_map}
        \Sigma_{\alpha, \alpha'}^{(\ell)} = \underbrace{\sigma^2 \sum_{k=0}^{K-1} c_k \mathbb{E}\left[\left|\left[\mF\phi\left(\rmH_{:, d}\right)\right]_{k}\right|^2 \right] \cos\left( \theta^{(k)}_{\alpha, \alpha'} \right) + \sigma^2_b}_{\displaystyle{\eqqcolon \mathcal{C}(\bm{\Sigma}^{(\ell-1)})_{\alpha, \alpha'}}},
    \end{gather}
    where the expectation is taken over the pre-activations $\rmH_{:, d} \sim \mathcal{N}(0, \bm{\Sigma}^{(\ell-1)})$, $\theta^{(k)}_{\alpha, \alpha'} \coloneqq \frac{2\pi k}{N} (\alpha - \alpha')$ represents the scaled positional difference. 
\end{restatable}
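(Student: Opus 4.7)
The plan is to compute $\Sigma^{(\ell)}_{\alpha,\alpha'}=\mathbb{E}[H^{(\ell)}_{\alpha,d}H^{(\ell)}_{\alpha',d}]$ by unfolding the mode-truncated Fourier convolution, exploiting independence and Gaussianity of the layer-$\ell$ parameters, and then taking the infinite-width ($D\gg 1$) mean-field limit, in the spirit of \citet{poole2016exponential} and \citet{xiao2018dynamical}.

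First I would separate the spatial phase from the spectral amplitude. Because $\mD^{(k)}$ keeps only the $k$-th mode, the operator $\mF^{\dagger}\mD^{(k)}\mF$ is a rank-one outer product, so each summand in the definition of $\rmH^{(\ell)}$ factorizes as
$$H^{(\ell,k)}_{\alpha,d}=F^{\dagger}_{\alpha,k}\,W^{(\ell,k)}_d,\qquad W^{(\ell,k)}_d=\sum_{d'}[\mF\rmX^{(\ell-1)}]_{k,d'}\bigl(\Theta^{(\ell,k)}_{d',d}+i\,\Xi^{(\ell,k)}_{d',d}\bigr),$$
and all spatial dependence is collected in the phase $F^{\dagger}_{\alpha,k}\propto e^{2\pi i k\alpha/N}$. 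Adding the conjugate partner $\overline{\rmH}^{(\ell,k)}$ then expresses $H^{(\ell)}_{\alpha,d}$ as a real sum of rank-one cosine patterns plus a constant bias, which is the structural observation making the rest of the computation tractable.

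Next I would expand $H^{(\ell)}_{\alpha,d}H^{(\ell)}_{\alpha',d}$ and take expectation over the fresh layer-$\ell$ parameters, conditioning on $\rmX^{(\ell-1)}$. The bias contributes $\sigma_b^2$ plus zero cross-terms. Cross-mode terms vanish because $\mathbf{\Theta}^{(\ell,k)},\mathbf{\Xi}^{(\ell,k)}$ are independent across $k$ with zero mean. Within a single mode $k\notin\{0,N/2\}$, the equal variance $\sigma^2/(2D)$ of the real and imaginary weight parts yields $\mathbb{E}[(W^{(\ell,k)}_d)^2]=0$ while $\mathbb{E}[|W^{(\ell,k)}_d|^2\mid\rmX^{(\ell-1)}]=(\sigma^2/D)\sum_{d'}|[\mF\rmX^{(\ell-1)}]_{k,d'}|^2$. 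Pairing $(\alpha,\alpha')$ and using $e^{i\theta}+e^{-i\theta}=2\cos\theta$ then gives a per-mode contribution proportional to $\cos\theta^{(k)}_{\alpha,\alpha'}$, with the constant $c_k$ emerging from the $\sqrt{c_k/2}$ prefactor. The exceptional real modes $k\in\{0,N/2\}$ must be handled separately, but the vanishing of $\mathbf{\Xi}^{(\ell,k)}$ and the compensating $c_k=1$ deliver the same scalar, so the formula holds uniformly in $k$.

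Finally I would invoke the mean-field limit $D\gg 1$. By the induction hypothesis the columns $\rmH^{(\ell-1)}_{:,d}$ are asymptotically iid Gaussian with covariance $\bm\Sigma^{(\ell-1)}$, so a law-of-large-numbers step replaces $\tfrac{1}{D}\sum_{d'}|[\mF\phi(\rmH^{(\ell-1)}_{:,d'})]_k|^2$ by $\mathbb{E}[|[\mF\phi(\rmH_{:,d})]_k|^2]$ with $\rmH_{:,d}\sim\mathcal{N}(0,\bm\Sigma^{(\ell-1)})$, and taking the outer expectation over $\mathbf{\Theta}^{1:\ell-1},\mathbf{\Xi}^{1:\ell-1}$ yields the stated iterated map $\mathcal{C}$. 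The hardest part is precisely this mean-field step: one must propagate asymptotic Gaussianity and independence across $d$ through the layers using a central-limit argument, and verify that restricting to only $K\leq N/2+1$ Fourier modes does not disturb the iid-across-$d$ structure on which the concentration rests; carefully tracking the $\sqrt{c_k/2}$ normalization across the real and complex modes is the other place where a routine but easy-to-miscount book-keeping occurs.
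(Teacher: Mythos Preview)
Your proposal is correct and follows essentially the same route as the paper: factor out the spatial phase $F^{\dagger}_{\alpha,k}$, use independence of $(\mathbf{\Theta}^{(\ell,k)},\mathbf{\Xi}^{(\ell,k)})$ across modes to kill cross-terms, compute within each mode that the equal real/imaginary variances make $\mathbb{E}[(W^{(\ell,k)}_d)^2]=0$ while $\mathbb{E}[|W^{(\ell,k)}_d|^2]=(\sigma^2/D)\sum_{d'}|\hat X^{(\ell-1)}_{k,d'}|^2$, treat the real modes $k\in\{0,N/2\}$ separately with $c_k=1$ compensating the missing imaginary weight, and finish with the $D\gg1$ law-of-large-numbers replacement. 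The paper's proof carries out exactly this calculation, only with the phase and weight not bundled into your $W^{(\ell,k)}_d$ but written out via $Y_{\alpha,k,\beta,i}=F^{\dagger}_{\alpha,k}F_{k,\beta}X^{(\ell-1)}_{\beta,i}$; the four-term expansion of $(H+\overline H)(H'+\overline{H'})$ and the cosine recombination are identical to yours.
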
 
The indices $\alpha$ and $\alpha'$ correspond to different spatial locations as with the mean-field theory for CNN~\citep{xiao2018dynamical}. Note that $\left[\mF\phi\left(\rmH_{:, d}\right)\right]_{k}$ is the $k$-th Fourier modes of the post-activation representation. When applying DCN~\citep{poole2016exponential,schoenholz2016deep} or CNN~\citep{xiao2018dynamical} to the spatial signal, the iterated map depends only on local spatial locations, while in the case of FNO, the iterated map depends on all spatial locations because of the global Fourier convolution. In addition, only periodic spatial correlations with shift-invariant are propagated, and high-frequency components exceeding mode $K$ are eliminated.

Next, we explore the fixed point $\bm{\Sigma}^*$ of the iterated map $\mathcal{C}$ satisfying $\bm{\Sigma}^* = \mathcal{C}(\bm{\Sigma}^*)$.
By linearizing the dynamics of signal propagation around this fixed point and analyzing the stability and rate of convergence to the fixed point, we can determine the depth to which each component of the input can propagate. 
\citet{schoenholz2016deep} showed that the iterated map of DCN defined in~\cref{eq:mf-dcn} has a fixed point of the form: 
\begin{align}
    \label{eq:sigma_fixed}
    \bm{\Sigma}^* = q^*\mI_N + q^* c^* (\bm{1}_N\bm{1}_N^{\top} - \mI_N),
\end{align}
where $q^*, c^*$ are the fixed points of variance and correlation, and $\mI_N$ is the identity matrix. Meanwhile, \citet{xiao2018dynamical} showed that any fixed point for the iterated map of the DCN is also a fixed point for that of CNN. 
We show that random FNO has the same fixed points of the form of~\cref{eq:sigma_fixed} with $c^*=1$ in the following lemma. 
\begin{restatable}[Exsistance of fixed points]{lemma}{exsistancelemma} \label{lem:exist_fixedpoint}
When a random DCN defined by~\cref{eq:mf-dcn} has the fixed point $(q^*, c^*=1)$ for the initial parameters $(\sigma^2, \sigma_b^2)$, then a random simplified FNO defined by~\cref{eq:mf-fno} has a fixed point $\bm{\Sigma}^*$ of the form
\begin{align*}
    \bm{\Sigma}^* = q^* \mI_N + q^*c^*(\bm{1}_N \bm{1}_N^{\top} - \mI_N) = q^*\bm{1}_N \bm{1}_N^{\top}.
\end{align*}
\end{restatable}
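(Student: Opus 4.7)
The plan is to verify directly that $\bm{\Sigma}^* = q^* \bm{1}_N \bm{1}_N^\top$ satisfies $\bm{\Sigma}^* = \mathcal{C}(\bm{\Sigma}^*)$ by substituting into the iterated map of~\cref{lem:iterated_map} and showing that the right-hand side collapses onto the scalar DCN variance recursion at the assumed fixed point. The key structural observation is that $\bm{\Sigma}^*$ is rank $1$, so the Gaussian vector $\rmH_{:, d} \sim \mathcal{N}(\bm{0}, \bm{\Sigma}^*)$ is almost surely spatially constant: $\rmH_{:, d} = \xi_d \bm{1}_N$ for a single scalar $\xi_d \sim \mathcal{N}(0, q^*)$. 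Intuitively, pinning the spatial correlation at $c^* = 1$ forces the pre-activations to depend only on the feature index $d$, turning the Fourier convolution module into a purely zero-mode operation.

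First I would compute the DFT of the resulting constant post-activation vector $\phi(\rmH_{:, d}) = \phi(\xi_d) \bm{1}_N$. With the normalization adopted in the setup, $\mF \bm{1}_N$ places all its mass on the zero mode, so $[\mF \phi(\rmH_{:, d})]_k = \phi(\xi_d) \, \delta_{k, 0}$ and hence
\begin{align*}
\mathbb{E}\bigl[\lvert [\mF \phi(\rmH_{:, d})]_k \rvert^2\bigr] = \mathbb{E}_{\xi_d \sim \mathcal{N}(0, q^*)}\bigl[\phi(\xi_d)^2\bigr] \, \delta_{k, 0}.
\end{align*}
Substituting this into~\cref{eq:iterated_map} eliminates every $k \geq 1$ term, regardless of the mode-truncation threshold $K$. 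The surviving $k = 0$ contribution has prefactor $c_0 = 2 - \delta_{0, 0} - \delta_{0, N/2} = 1$ and $\theta^{(0)}_{\alpha, \alpha'} = 0$, so $\cos(\theta^{(0)}_{\alpha, \alpha'}) = 1$ and
\begin{align*}
\mathcal{C}(\bm{\Sigma}^*)_{\alpha, \alpha'} = \sigma^2 \, \mathbb{E}_{\xi_d \sim \mathcal{N}(0, q^*)}\bigl[\phi(\xi_d)^2\bigr] + \sigma_b^2,
\end{align*}
independent of $(\alpha, \alpha')$.

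Next I would match the right-hand side against the DCN variance recursion. The hypothesis that $(q^*, c^* = 1)$ is a fixed point of the DCN map in~\cref{eq:mf-dcn} is equivalent, via the $V$-map analysis of~\citet{poole2016exponential,schoenholz2016deep}, to the scalar identity $q^* = \sigma^2 \, \mathbb{E}_{\xi \sim \mathcal{N}(0, q^*)}[\phi(\xi)^2] + \sigma_b^2$. Combining this with the display above gives $\mathcal{C}(\bm{\Sigma}^*)_{\alpha, \alpha'} = q^*$ for every $\alpha, \alpha'$, i.e., $\mathcal{C}(\bm{\Sigma}^*) = q^* \bm{1}_N \bm{1}_N^\top = \bm{\Sigma}^*$, which completes the argument.

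The main obstacle is essentially conceptual rather than computational: one must recognize that the Fourier-side iterated map of the FNO interacts cleanly with a rank-$1$, spatially constant covariance, so that mode truncation is harmless here and the weights $c_k \cos(\theta^{(k)}_{\alpha, \alpha'})$ collapse to the single value $1$ at $k = 0$. Once that structural reduction is in hand, verifying that $\bm{\Sigma}^* = q^* \bm{1}_N \bm{1}_N^\top$ is a fixed point is a short direct calculation that reuses the DCN variance fixed-point equation as a black box.
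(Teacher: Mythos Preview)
Your proof is correct. Both your argument and the paper's reduce to the same core observation---that at $\bm{\Sigma}^* = q^*\bm{1}_N\bm{1}_N^\top$ only the zero Fourier mode contributes and the FNO map collapses onto the scalar DCN variance recursion---but you reach it by a slightly different route. You exploit the rank-$1$ structure of $\bm{\Sigma}^*$ probabilistically: the Gaussian $\rmH_{:,d}\sim\mathcal{N}(\bm{0},\bm{\Sigma}^*)$ is almost surely the constant vector $\xi_d\bm{1}_N$, so $\phi(\rmH_{:,d})$ has only a zero Fourier mode and the sum in~\cref{eq:iterated_map} trivially truncates. The paper instead works with the physical-space expansion of the iterated map (the double sum over $\beta,\beta'$ in~\cref{lem:eq:cmap_2}), invokes the DCN fixed-point identity $\mathcal{C}_{\mathrm{DCN}}(\bm{\Sigma}^*)_{\beta,\beta'}=q^*$ at the level of expectations, and then uses the orthogonality $\sum_{\beta,\beta'}\cos(\theta^{(k)}_{\beta,\beta'})=N^2\delta_{k,0}$ to kill the nonzero modes. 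Your version is a touch more direct and makes the role of mode truncation (irrelevant here) transparent; the paper's version stays closer to the algebraic machinery reused in the later Jacobian lemmas.
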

\Cref{lem:exist_fixedpoint} indicates that the fixed point for the iterated map $\bm{\Sigma}^*$ of the DCN serves as a fixed point for the iterated map of the simplified FNO (as well as CNN). 

To analyze the stability and convergence rate, we linearly approximate the C-map around the fixed point $\bm{\Sigma}^*$, i.e., $\mathcal{C}(\bm{\Sigma}) \approx \bm{\Sigma}^* + J_{\bm{\Sigma}^*} (\bm{\Sigma} - \bm{\Sigma}^*)$, where $J_{\bm{\Sigma}^*}$ is the Jacobian linear map of the iterated map defined in~\cref{eq:linearmap-jacob}. We then derive the eigenvalues and eigenvectors for the Jacobian linear map $J_{\bm{\Sigma}^*}$ as follows.
\begin{definition}
\begin{gather}
\chi_{q^*} \coloneqq \sigma^2 \mathbb{E}\left[ \phi'^2(H_{\alpha, d}) + \phi''(H_{\alpha, d}) \phi(H_{\alpha, d}) \right],\\
\chi_{c^*} \coloneqq \sigma^2 \mathbb{E}[\phi'(H_{\alpha, d}) \phi'(H_{\alpha', d})],\\
\begin{split}
\chi_{\kappa} &\coloneqq \frac{\sigma^2}{2} \mathbb{E}\left[ \phi''(H_{\alpha, d}) \phi(H_{\alpha', d}) + \phi(H_{\alpha, d}) \phi''(H_{\alpha', d}) \right] + \sigma^2 \mathbb{E} \left[  c^*\phi'(H_{\alpha, d}) \phi'(H_{\alpha', d}) \right],
\end{split}
\end{gather}
where the expectation is taken over the pre-activations $\rmH_{:, d} \sim \mathcal{N}(0, \bm{\Sigma}^*)$, and $\phi',\ \phi''$ are the first- and second-order derivatives of the activation $\phi$. 

The bases $\bm{\psi},\ \bm{\psi}^{(1)},\ \dots ,\ \bm{\psi}^{(K-1)} \in \mathbb{R}^{N \times N}$ of the covariance defined using the above quantities are defined below.
\begin{gather}
    \label{eq:eigen_basis}
    \begin{gathered}
        \psi_{\beta, \beta'} \coloneqq 1 - \frac{1}{N}\left( \frac{\chi_{\kappa} + \chi_{c^*} - \chi_{q^*}}{\chi_{\kappa}}\right) \sum_{s=0}^{K-1} c_{s} \cos\left( \theta^{(s)}_{\beta, \beta'} \right),\\
    \psi^{(k)}_{\beta, \beta'} \coloneqq \cos\left( \theta^{(k)}_{\beta, \beta'} \right) - \frac{1}{\sum_{s=0}^{K-1} c_{s}} \sum_{s=0}^{K-1} c_{s} \cos\left( \theta^{(s)}_{\beta, \beta'} \right).
    \end{gathered}
\end{gather}
\end{definition}

From Lemma A.4, $K-1$ matrices in $\{ \bm{\psi}^{(k)} \}_{k \in [K] \backslash \{0\}}$ are eigenbases with the eigenvalue $\chi_{c^*}$ of the Jacobian linear map. From Lemma A.5, the matrix $\bm{\psi}$ is the eigenbases with the eigenvalue $\chi$ of the Jacobian linear map. Since the rank of the Jacobian linear map is at most K (Lemma A.3), the deviation from the fixed point $\bm{\Sigma}^{(\ell)} - \bm{\Sigma}^*$ is spanned by K-dimensional eigenspace $\operatorname{span}\left(\{ \bm{\psi}^{(k)} \}_{k \in [K] \backslash \{0\}} \cup \{ \bm{\psi} \} \right)$. 
Then, the fixed point stability and the convergence rate are shown in the following theorem.
\begin{restatable}[Exponential expressivity]{theorem}{stability}\label{thm:stability}
Let $\mE^{(\ell)} \coloneqq \bm{\Sigma}^{(\ell)} - \bm{\Sigma}^*$ be the deviation from the fixed point at the $\ell$-th layer. Suppose that the deviation at the first layer is decomposed as $\mE^{(0)} = \epsilon \bm{\psi} + \sum_{k=1}^{K-1} \epsilon_k \bm{\psi}^{(k)} + \bm{e}$. The scalars $\epsilon,\ \epsilon_k$ represent the scale of the perturbation for each eigencomponent of the linearly approximated map $\bm{E}^{(\ell)} \mapsto \bm{E}^{(\ell+1)}$. The component $\bm{e} \in \mathbb{R}^{N \times N}$ belongs to the orthogonal complements of the space $\operatorname{span}\left(\{\bm{\psi}, \bm{\psi}^{(1)},\ \dots,\ \bm{\psi}^{(K-1)} \}\right)$. 

Then, the deviation at the $\ell$-th layer is obtained by
\begin{gather}
\label{eq:eigen_expansion_deviation}
\mE^{(\ell)} = \chi^{\ell} \epsilon \bm{\psi} + \sum_{k=1}^{K-1} \chi_{c^*}^{\ell} \epsilon_k \bm{\psi}^{(k)},\\
\chi \coloneqq \frac{1}{N} \sum_{s=0}^{K-1} c_{s} \chi_{q^*} + \left(1- \frac{1}{N} \sum_{s=0}^{K-1} c_{s} \right) (\chi_{\kappa} + \chi_{c^*}). \nonumber
\end{gather}

In particular, when the Fourier mode is $K = \frac{N}{2}+1$,~\cref{eq:eigen_basis,eq:eigen_expansion_deviation} reduce to the following.  
\begin{gather}
    \begin{gathered}
    \label{eq:full_K}
   \mE^{(\ell)} = \chi_{q^*}^{\ell} \epsilon \bm{\psi} + \sum_{k=1}^{K-1} \chi_{c^*}^{\ell} \epsilon_k \bm{\psi}^{(k)},\\ 
   \forall \beta, \beta' \in [N],\ 
   \psi_{\beta, \beta'} = 1,\ \psi^{(k)}_{\beta, \beta'} = \cos\left( \theta^{(k)}_{\beta, \beta'} \right) - \delta_{\beta, \beta'}.
   \end{gathered}
\end{gather}
\end{restatable}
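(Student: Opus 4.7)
The plan is to prove Theorem~3.3 by linearizing the iterated map $\mathcal{C}$ from Lemma~3.1 around the fixed point $\bm{\Sigma}^{*}=q^{*}\bm{1}_{N}\bm{1}_{N}^{\top}$ given by Lemma~3.2, and then diagonalizing the resulting Jacobian linear map $J_{\bm{\Sigma}^{*}}$ using the eigenstructure described by the auxiliary Lemmas~A.3--A.5. The recursion $\bm{\Sigma}^{(\ell+1)}=\mathcal{C}(\bm{\Sigma}^{(\ell)})$ becomes, to first order in the deviation $\mE^{(\ell)} = \bm{\Sigma}^{(\ell)}-\bm{\Sigma}^{*}$, the linear recursion $\mE^{(\ell+1)} = J_{\bm{\Sigma}^{*}}\mE^{(\ell)}$, so everything reduces to iterating a linear map.

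First, I would compute the Jacobian $J_{\bm{\Sigma}^{*}}$ at the entry level. Writing $\mathcal{C}(\bm{\Sigma})_{\alpha,\alpha'}$ as a sum of cosine-weighted Gaussian integrals against $\bm{\Sigma}$, I would differentiate $\mathbb{E}_{\rmH\sim\mathcal{N}(0,\bm{\Sigma})}\!\left[\,|[\mF\phi(\rmH)]_{k}|^{2}\,\right]$ with respect to the entries of $\bm{\Sigma}$ by applying Gaussian integration by parts (Price's theorem). The three characteristic quantities $\chi_{q^{*}}$, $\chi_{c^{*}}$, and $\chi_{\kappa}$ arise as the three possible partial derivatives of such integrals evaluated at $\bm{\Sigma}^{*}$: one for perturbing a diagonal entry, one for perturbing an off-diagonal entry when the two positions are distinct, and a mixed term that appears because the Fourier-mode weighting couples diagonal and off-diagonal perturbations. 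Once $J_{\bm{\Sigma}^{*}}$ is written as a sum of outer products of cosine matrices $\cos(\theta^{(k)})$, Lemma~A.3 gives that its image is contained in $\operatorname{span}\{\cos(\theta^{(0)}),\ldots,\cos(\theta^{(K-1)})\}$, hence has rank at most $K$.

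Next, I would use Lemmas~A.4 and A.5 to identify the $K$ eigendirections inside this image: the mean-zero cosine modes $\bm{\psi}^{(k)}$ ($k=1,\ldots,K-1$) constructed by subtracting the DC component $\frac{1}{N}\sum_{s}c_{s}\cos(\theta^{(s)})$ give $K-1$ eigenvectors with eigenvalue $\chi_{c^{*}}$, while the remaining direction $\bm{\psi}$, tailored so that its decomposition balances the diagonal and off-diagonal contributions through the factor $(\chi_{\kappa}+\chi_{c^{*}}-\chi_{q^{*}})/\chi_{\kappa}$, is an eigenvector with eigenvalue $\chi=\frac{1}{N}\sum_{s=0}^{K-1}c_{s}\chi_{q^{*}}+\bigl(1-\frac{1}{N}\sum_{s=0}^{K-1}c_{s}\bigr)(\chi_{\kappa}+\chi_{c^{*}})$. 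Because the map has rank at most $K$ and $\bm{e}$ was defined to lie in the orthogonal complement of $\operatorname{span}\{\bm{\psi},\bm{\psi}^{(1)},\ldots,\bm{\psi}^{(K-1)}\}$, the component $\bm{e}$ is annihilated by $J_{\bm{\Sigma}^{*}}$ after one step; iterating $\ell$ times on the eigen-expansion of $\mE^{(0)}$ then yields the claimed formula $\mE^{(\ell)}=\chi^{\ell}\epsilon\bm{\psi}+\sum_{k=1}^{K-1}\chi_{c^{*}}^{\ell}\epsilon_{k}\bm{\psi}^{(k)}$.

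Finally, for the full-mode reduction with $K=\tfrac{N}{2}+1$, I would use the identities $\sum_{s=0}^{K-1}c_{s}=N$ and, by the inverse DFT completeness relation, $\sum_{s=0}^{K-1}c_{s}\cos(\theta^{(s)}_{\beta,\beta'})=N\delta_{\beta,\beta'}$. Substituting the first into the expression for $\chi$ collapses the $(\chi_{\kappa}+\chi_{c^{*}})$ term and leaves $\chi=\chi_{q^{*}}$, and substituting into the formula for $\psi^{(k)}_{\beta,\beta'}$ gives the stated $\cos(\theta^{(k)}_{\beta,\beta'})-\delta_{\beta,\beta'}$; the formula $\psi_{\beta,\beta'}=1$ follows after simplifying the coefficient with the relation $\chi_{\kappa}=\chi_{q^{*}}$ that holds at the fixed point (where $c^{*}=1$ forces $H_{\alpha}=H_{\alpha'}$ almost surely, collapsing the two expectations defining $\chi_{q^{*}}$ and $\chi_{\kappa}$ into the same Gaussian integral). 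The main obstacle I expect is step two: carrying out the Price-theorem differentiation of the Fourier-weighted Gaussian integrals cleanly and showing that the proposed $\bm{\psi}$, with its somewhat delicate normalization, is indeed an eigenvector with eigenvalue $\chi$ rather than merely an approximate one—this is exactly what Lemma~A.5 must establish, and all the algebraic bookkeeping of the theorem funnels through that verification.
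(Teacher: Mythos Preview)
Your proposal is correct and takes essentially the same approach as the paper: linearize $\mathcal{C}$ at $\bm{\Sigma}^{*}$, invoke Lemma~A.3 for the rank-$K$ bound on $J_{\bm{\Sigma}^{*}}$, use Lemmas~A.4 and A.5 for the eigenpairs $(\bm{\psi}^{(k)},\chi_{c^{*}})$ and $(\bm{\psi},\chi)$, argue that the orthogonal component $\bm{e}$ is annihilated by the rank constraint, and iterate. Your treatment of the full-mode case $K=N/2+1$ via $\sum_{s}c_{s}=N$ and the DFT completeness identity $\sum_{s}c_{s}\cos(\theta^{(s)}_{\beta,\beta'})=N\delta_{\beta,\beta'}$ is likewise the intended route.
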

\Cref{thm:stability} shows the expressivity of the FNO, which is characterized by the ordered-chaos phases and varies exponentially with respect to the number of layers. \Cref{thm:stability} indicates that the asymptotic behavior of the zero-frequency deviation is mostly determined by $\chi$ and the periodic deviation is determined by $\chi_{c^*}$. If $\chi < 1$ and $\chi_{c^*} < 1$, the fixed point is stable as the deviation from the fixed point converges exponentially to zero.
When the fixed point remains stable at $c^*=1$, a random network exists in an ordered phase, where all spatial representations are correlated in an asymptotic manner. Conversely, when the fixed point with $c^*=1$ becomes unstable, the network transitions into a chaotic phase, exhibiting behavior dependent on the activation function $\phi$. The boundary between these two phases is referred to as \emph{the edge of chaos}. 

The convergence rates $\chi_{q^*}$ and $\chi_{c^*}$ are the same as the convergence rates of the variance and correlation to the fixed point for DCN~\citep{schoenholz2016deep} and CNN~\citep{xiao2018dynamical}. However, only periodic spatial correlations are propagated in the FNO, resulting in a different eigenspace of the map $\bm{E}^{(\ell)} \mapsto \bm{E}^{(\ell+1)}$ compared to the DCN and CNN. In DCN, the deviation belongs to a vector space with dimension $\frac{N(N-1)}{2}$ in DCN, whereas in FNO, the dimension is $K$, or at most $\frac{N}{2}+1$. CNN possess diagonal eigenspaces associated with eigenvalues $\chi_{q^*}$ and non-diagonal eigenspaces associated with eigenvalues $\chi_{c^*}$. In contrast, FNOs without mode truncation exhibit a similarity, possessing eigenspaces $\chi_{q^*}$ for zero-frequency and eigenspaces $\chi_{c^*}$ for k-frequencies with diagonal components removed.
Furthermore, mode truncation increases the convergence rate of zero-frequency deviation from $\chi_{q^*}$ to $\chi$ and affects all eigenbases as well. For further discussions on the similarities between CNN and FNO, please refer to~\cref{app:sim_btw_fno_and_dcn}. 
A visualization of the covariance of the FNO with Tanh and ReLU activations is shown in~\cref{app:visualization-forwardprop}. 

Finally, we demonstrate the connection between expressivity and trainability. By examining the covariance of the gradient in each layer during backpropagation, we investigate the conditions under which training is stable without gradient vanishing or exploding. 
\begin{restatable}[Trainability]{theorem}{backprop} \label{thm:backprop}
    Let $\tilde{\bm{\Sigma}}^{(\ell)} \in \mathbb{R}^{N \times N}$ be the gradient covariance with respect to some loss $\mathcal{L}$, \eg mean squared error, at the $\ell$-th layer. Suppose that the gradient covariance at the $L$-th layer is decomposed as $\tilde{\Sigma}^{(L)}_{\alpha, \alpha'} = \sum_{k=0}^{K-1} \tilde{\epsilon}_k \cos\left( \theta^{(k)}_{\alpha, \alpha'} \right) + \tilde{\bm{e}}$, where $\tilde{\epsilon}_k$ is the coefficient of each basis and $\tilde{\bm{e}}$ belongs to the orthogonal complements of $\operatorname{span}(\{ \cos\left( \theta^{(k)}_{\alpha, \alpha'}\right) \}_{k=0}^{K-1})$. Then, the gradient covariance at the $\ell$-th layer is obtained by
    \begin{align*}
    \tilde{\Sigma}^{(\ell)}_{\alpha, \alpha'} = \sum_{k=0}^{K-1} \chi_{c^*}^{L-\ell} \tilde{\epsilon}_k \cos\left( \theta^{(k)}_{\alpha, \alpha'} \right).
    \end{align*}
\end{restatable}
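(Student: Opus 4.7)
The plan is to derive a backward recursion for the spatial gradient covariance $\tilde{\bm{\Sigma}}^{(\ell)}$ and show that it acts as a linear operator whose eigenvectors are exactly $\{\cos(\theta^{(k)}_{\alpha,\alpha'})\}_{k=0}^{K-1}$ with common eigenvalue $\chi_{c^*}$, so that iterating $L-\ell$ times yields the claimed formula. First, I chain-rule through~(\ref{eq:mf-fno}): using $\rmH^{(\ell,k)}+\overline{\rmH}^{(\ell,k)} = 2\operatorname{Re}(\rmH^{(\ell,k)})$, the spatial operator $\mF^{\dagger}\mD^{(k)}\mF$ decomposes into a symmetric cosine kernel $T^{(k,\cos)}_{n,n'} \propto \cos(\theta^{(k)}_{n,n'})$ paired with $\mathbf{\Theta}^{(\ell,k)}$ and an antisymmetric sine kernel $T^{(k,\sin)}_{n,n'} \propto \sin(\theta^{(k)}_{n,n'})$ paired with $\mathbf{\Xi}^{(\ell,k)}$. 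Chain-ruling then yields $\tilde{\rmH}^{(\ell-1)}_{\alpha,d} = \phi'(\rmH^{(\ell-1)}_{\alpha,d})\,\tilde{\rmX}^{(\ell-1)}_{\alpha,d}$, where $\tilde{\rmX}^{(\ell-1)}_{\alpha,d}$ is a sum over $k$ of these transposed kernels applied to $\tilde{\rmH}^{(\ell)}$ and contracted against the weights.

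Second, I compute $\tilde{\Sigma}^{(\ell-1)}_{\alpha,\alpha'} = \mathbb{E}[\tilde{\rmH}^{(\ell-1)}_{\alpha,d}\tilde{\rmH}^{(\ell-1)}_{\alpha',d}]$ at infinite width via the standard mean-field gradient-independence approximation: the backward weights are treated as independent fresh Gaussians with the same variance, so the expectation factorizes into the product of $\mathbb{E}[\phi'(\rmH^{(\ell-1)}_{\alpha,d})\phi'(\rmH^{(\ell-1)}_{\alpha',d})]$ and $\mathbb{E}[\tilde{\rmX}^{(\ell-1)}_{\alpha,d}\tilde{\rmX}^{(\ell-1)}_{\alpha',d}]$. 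Independence across modes, across real/imaginary parts, and across channels, combined with $\mathbb{E}[\Theta^{(\ell,k)}_{d,d'}\Theta^{(\ell,k')}_{d,d''}] = \tfrac{\sigma^2}{2D}\delta_{k,k'}\delta_{d',d''}$ (and similarly for $\mathbf{\Xi}$), together with the channel-diagonal form $\mathbb{E}[\tilde{\rmH}^{(\ell)}_{m,d'}\tilde{\rmH}^{(\ell)}_{m',d''}] = \tilde{\Sigma}^{(\ell)}_{m,m'}\delta_{d',d''}$, reduces the second factor to $\sigma^2\sum_k c_k\sum_{m,m'}[T^{(k,\cos)}_{m,\alpha}T^{(k,\cos)}_{m',\alpha'} + T^{(k,\sin)}_{m,\alpha}T^{(k,\sin)}_{m',\alpha'}]\tilde{\Sigma}^{(\ell)}_{m,m'}$.

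Third, the identity $\cos A\cos B + \sin A\sin B = \cos(A-B)$ rewrites the spatial kernel as $\tfrac{1}{N^2}\cos(\theta^{(k)}_{m,m'}-\theta^{(k)}_{\alpha,\alpha'})$; the Fourier-type orthogonality $\sum_{m,m'}\cos(\theta^{(k)}_{m,m'})\cos(\theta^{(k')}_{m,m'}) = \tfrac{N^2}{c_k}\delta_{k,k'}$ together with the vanishing of the sine--cosine cross-sums shows that this operator diagonalizes on $\operatorname{span}\{\cos(\theta^{(k)}_{\alpha,\alpha'})\}_{k=0}^{K-1}$ and annihilates its orthogonal complement in a single step. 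Plugging in the decomposition $\tilde{\Sigma}^{(\ell)}_{m,m'} = \sum_{k'}\tilde{\epsilon}^{(\ell)}_{k'}\cos(\theta^{(k')}_{m,m'})+\tilde{\bm{e}}^{(\ell)}$ collapses the double sum to $\sigma^2\sum_k\tilde{\epsilon}^{(\ell)}_k\cos(\theta^{(k)}_{\alpha,\alpha'})$. Because $c^*=1$ at the fixed point makes $\rmH^{(\ell-1)}_{\alpha,d}$ and $\rmH^{(\ell-1)}_{\alpha',d}$ coincide almost surely, the activation prefactor equals $\chi_{c^*}/\sigma^2$ uniformly in $\alpha,\alpha'$; multiplying the two factors gives the one-step recursion $\tilde{\epsilon}^{(\ell-1)}_k = \chi_{c^*}\tilde{\epsilon}^{(\ell)}_k$ together with the annihilation of $\tilde{\bm{e}}$, and iterating from $L$ down to $\ell$ produces the claimed expression.

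The principal obstacle is the bookkeeping around the complex parameterization: the real/imaginary split of $\mathbf{\Theta}+\sqrt{-1}\,\mathbf{\Xi}$, the conjugate-pair sum $\rmH^{(\ell,k)}+\overline{\rmH}^{(\ell,k)}$, and the edge modes $k=0,N/2$ (where $\mathbf{\Xi}^{(\ell,k)} \equiv 0$ and $c_k=1$) must be handled so that the $c_k$ factors and the sine--cosine Parseval sums line up with the forward normalization used in Lemma~\ref{lem:iterated_map}. A secondary subtlety is justifying the gradient-independence approximation: I rely on the same reasoning as \citet{schoenholz2016deep,xiao2018dynamical}, namely that at $D\to\infty$ the backward weights become asymptotically independent of those in the forward pass, making the factorization exact in the mean-field regime.
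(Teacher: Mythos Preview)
Your proposal is correct and follows essentially the same route as the paper: compute the backward Jacobian of~\eqref{eq:mf-fno}, split it into cosine/sine kernels paired with $\mathbf{\Theta}$ and $\mathbf{\Xi}$, invoke the gradient-independence factorization, use $\cos A\cos B+\sin A\sin B=\cos(A-B)$ to obtain the recursion $\tilde{\Sigma}^{(\ell)}_{\alpha,\alpha'}=\frac{\chi_{c^*}}{N^2}\sum_{\beta,\beta'}\tilde{\Sigma}^{(\ell+1)}_{\beta,\beta'}\sum_k c_k\cos(\theta^{(k)}_{\beta-\beta',\alpha-\alpha'})$ under the fixed-point approximation $\bm{\Sigma}^{(\ell)}\approx\bm{\Sigma}^*$, and then diagonalize via the orthogonality $\sum_{m,m'}\cos(\theta^{(k)}_{m,m'})\cos(\theta^{(k')}_{m,m'})=\tfrac{N^2}{c_k}\delta_{k,k'}$. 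The paper additionally presents the rank-$K$ structure by factoring the linear map as $\tilde{\bm A}\tilde{\bm B}$, which is equivalent to your observation that the orthogonal complement is annihilated in a single step.
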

\Cref{thm:backprop} shows that gradient vanishing occurs when $\chi_{c^*} < 1$ (ordered phase) and gradient explosion occurs when $\chi_{c^*} > 1$ (chaos phase). Thus, stable training of the FNO can be achieved close to \emph{the edge of chaos} by setting the initial parameter $\sigma^2$ to satisfy $\chi_{c^*} \approx 1$. 
We specifically present the initial parameter choices that achieve \emph{the edge of chaos} for several FNOs in~\cref{sec:init_scheme}. 

When $c^* = 1$, there is no change in the dynamics during backpropagation due to mode truncation. When using the full mode $K=\frac{N}{2} + 1$, the condition $\chi_{c^*} = 1$ always achieves \emph{the edge of chaos}, which is consistent with the results for the DCN and CNN. Despite the architectural and iterated map differences between FNO and DCN, \Cref{thm:stability} and \Cref{thm:backprop} show the similarity in the random FNO and DCN behavior, allowing existing results based on mean-field theory to be applied to the FNO. 

\subsection{Initialization requirements for stable training}
\label{sec:init_scheme}
\begin{figure*}[!tb]
    \vskip 0.2in
    \captionsetup[subfigure]{justification=centering}
    \centering
    \subcaptionbox{Simplified FNO with Tanh\label{fig:gradient_norm_tanh}}[0.31\textwidth]{
    \includegraphics[width=0.3\textwidth]{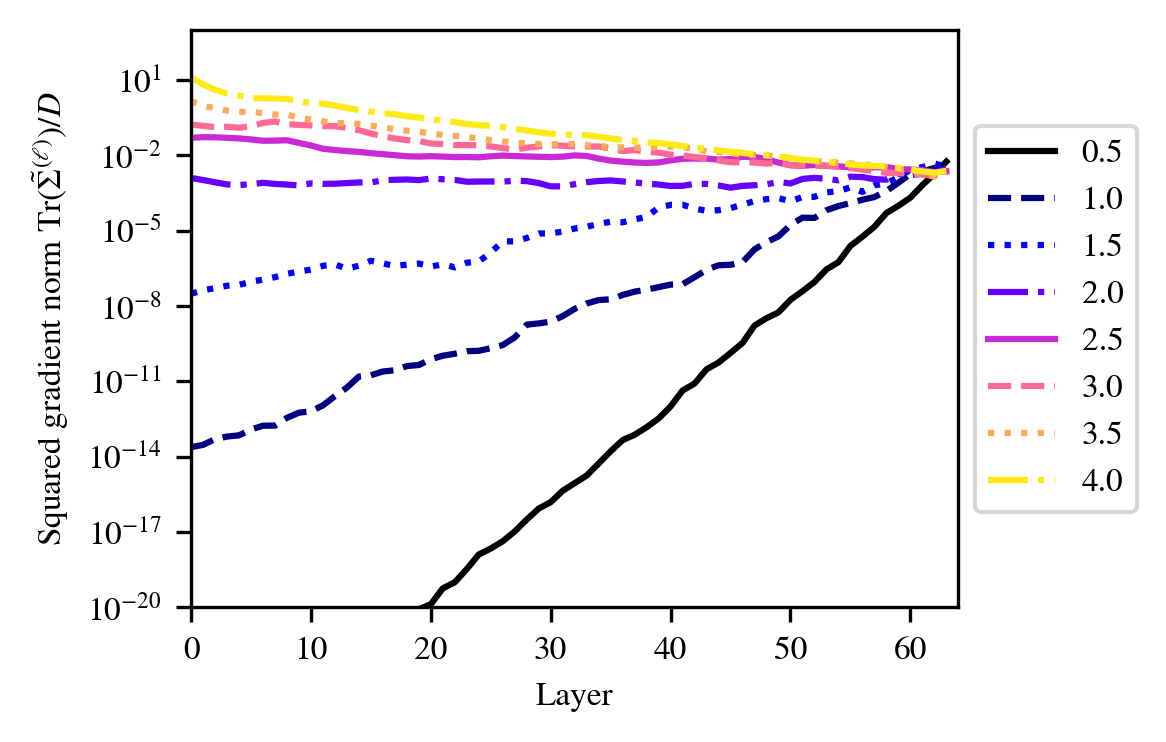}
    }
    \centering
    \subcaptionbox{Simplified FNO with ReLU \label{fig:gradient_norm_relu}}[0.31\textwidth]{
    \includegraphics[width=0.3\textwidth]{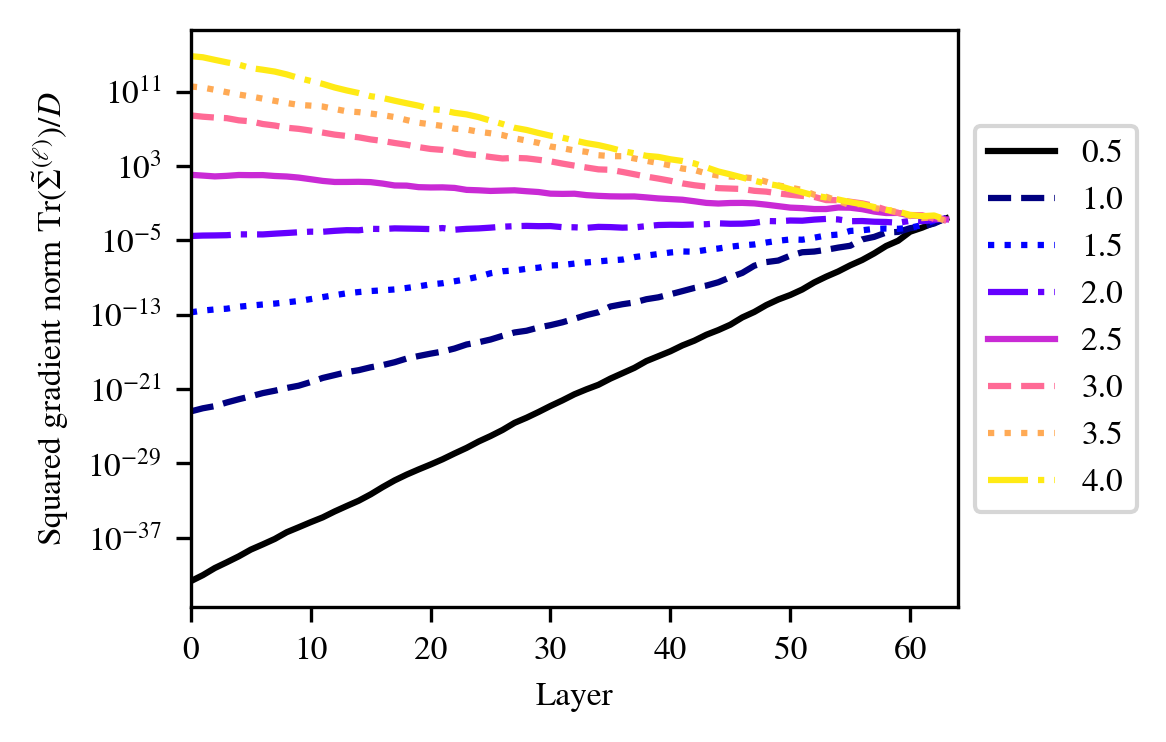}
    }
    \centering
    \subcaptionbox{Original FNO with ReLU \label{fig:gradient_norm_dcn}}[0.31\textwidth]{
    \includegraphics[width=0.3\textwidth]{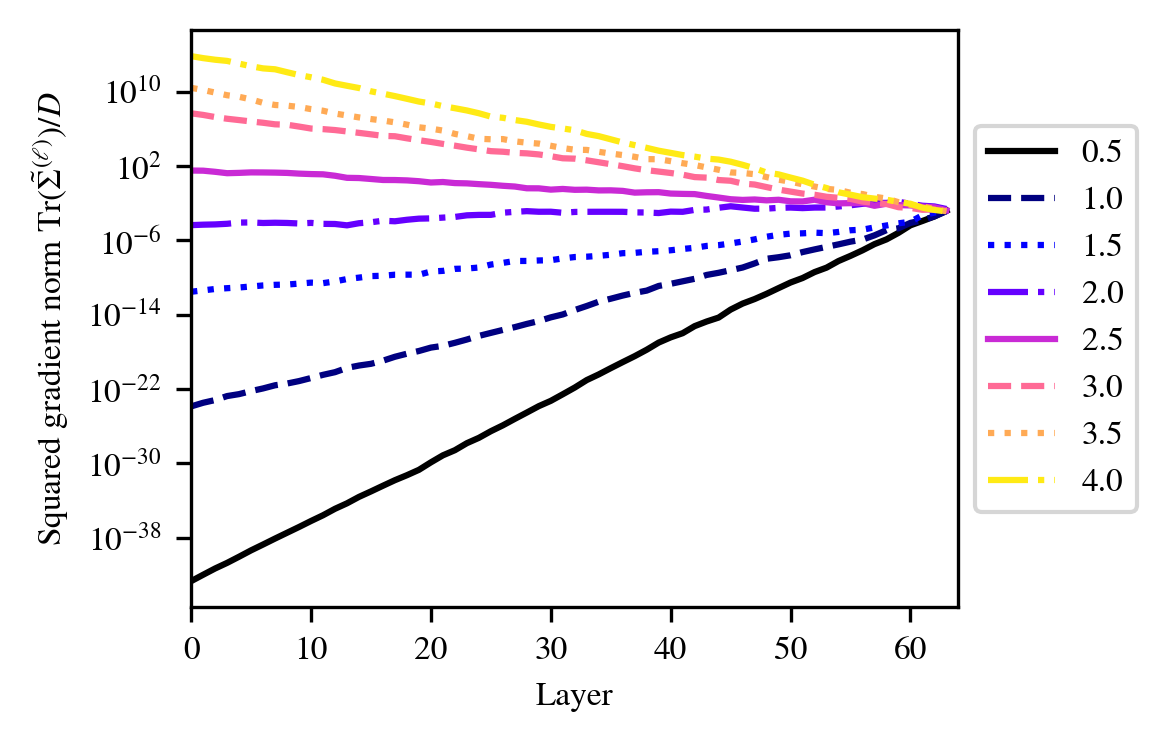}
    }%
    \caption{Average gradient norm $\operatorname{Tr}(\bm{\tilde{\bm{\Sigma}}}^{(\ell)})/D$ during the backpropagation of several FNOs plotted as a function of layer $\ell$. Each line corresponds to the result of different initial values of $\sigma^2$ from $0.5$ to $4.0$ in increments of $0.5$. The x-axis is the layer and the y-axis is the log-scale of the gradient norm. Depending on the value of $\sigma^2$, the gradient norm increases or decreases consistently as the gradient propagates to shallower layers. }
    \label{fig:gradient_norm}
\vskip -0.2in
\end{figure*}
For stable training, \Cref{thm:backprop} suggests the necessity of initializing FNO near \emph{the edge of chaos}, \ie, initializing FNO so that $\chi_{c^*} \approx 1$. 
In this section, we present the initial parameter choices that achieve \emph{the edge of chaos} for several FNOs, each with slightly different architectures such as activation functions. 
Furthermore, the behavior of the gradient norm $\operatorname{Tr}(\bm{\tilde{\bm{\Sigma}}}^{(\ell)})/D$ as a function of layer $\ell$ are visualized in~\cref{fig:gradient_norm} for several variants of random FNO with different initialization parameters $\sigma^2$. We used FNO with a width of $D=32$ and a number of layers $L=64$, and for simplicity, we computed the absolute value of the output as the loss with respect to the input sampled from the standard normal distribution. 

{\textbf{Simplified FNO with Tanh activation}.} \\
The behavior of $\chi_{c^*}$ for the parameters $\sigma^2$ and $\sigma_b^2$ of the Tanh network has been extensively studied by~\citet{poole2016exponential,schoenholz2016deep}. The phase diagram drawn by~\citet{pennington2017resurrecting} is shown in~\cref{fig:phase_diagram_tanh}. By using parameters $(\sigma^2, \sigma_b^2)$ around the two phase boundaries of ordered and chaotic that achieve $\chi_{c^*} = 1$, the training of the simplified FNO with Tanh activation can be stabilized. \Cref{fig:gradient_norm_tanh} depicts the behavior of the gradient backpropagation in the simplified FNO with Tanh activation and the bias parameter being $\sigma^2_b = 0.1$. \Cref{fig:gradient_norm_tanh} shows that when $\sigma^2 \lessapprox 2$, the gradient diminishes exponentially; otherwise, it explodes exponentially.

{\textbf{Simplified FNO with ReLU activation}.} \\
The iterated map $\mathcal{C}$ of the DCN with ReLU activation is given by~\citet{cho2009kernel} in closed form as follows.
\begin{align}
    \label{eq:relu-q}
    q^{(\ell+1)} &= \frac{1}{2} \sigma^2 q^{(\ell)} + \sigma^2_b,\\
    \label{eq:relu-c}
    c^{(\ell+1)} q^{(\ell+1)} &= \frac{1}{2} \sigma^2 q^{(\ell)} \mathbb{J}_1\left(\frac{c^{(\ell)}q^{(\ell)}}{q^{(\ell)}}\right) + \sigma^2_b,\\
    \mathbb{J}_1(c) &= \frac{1}{\pi}\left( \sqrt{1-c^2}+(\pi - \arccos{(c)})c \right), \nonumber
\end{align}
The edge of chaos initialization for the DCN with ReLU activation is known as He initialization~\citep{he2015delving}, which sets the initial variance parameter as $\sigma^2 = 2$ and initial bias as $b^{(\ell)}_i = 0$ for~\cref{eq:mf-dcn}. From the similarity of the DCN and the FNO, we can derive the FNO version of the He initialization that achieves $\chi_{c^*}=1$ by setting $\sigma^2 = 2,\ b^{(\ell)}_i = 0$ for~\cref{eq:mf-fno}. The He initialization scheme for the simplified FNO with the activation $\phi = \operatorname{ReLU}$ is derived as follows.
\begin{align*}
    \label{eq:he-initialization-fno}
    \Theta_{i,j}^{(\ell, k)} \overset{\iid}{\sim} \mathcal{N}(0, D^{-1}),\ \Xi_{i,j}^{(\ell, k)} \overset{\iid}{\sim} \mathcal{N}(0, D^{-1}).
\end{align*}
\Cref{fig:gradient_norm_relu} demonstrates that the choice of $\sigma^2=2$ preserves the magnitude of the gradient norm during backpropagation of deep simplified FNO with ReLU activation. 

{\textbf{Original FNO}.} \\
In the original architecture of the FNO proposed by~\citet{li2020fourier}, the DC module is used together with the Fourier convolution module as shown in~\cref{eq:fno-ori}. We initialize the weights of both layers consistently as follows. For all $\ell \in [L]$, $k \in [K]$, and $i, j \in [D]$,
\begin{gather*}
    \Theta_{i,j}^{(\ell, k)} \overset{\iid}{\sim} \mathcal{N}\left(0, \frac{\sigma^2}{4D}\right),\
    \Xi_{i,j}^{(\ell, k)} \overset{\iid}{\sim} \mathcal{N}\left(0,  \frac{\sigma^2}{4D} \right),\\ W^{(\ell)}_{i, j} \overset{\iid}{\sim} \mathcal{N}\left(0, \frac{\sigma^2}{2D} \right),\ b^{(\ell)}_{i} \sim \mathcal{N}\left(0, \sigma_b^2 \right).
\end{gather*}
From the similarity of the initial network behavior of the FNO and the DCN, the fixed point with $c^*=1$ of the simplified FNO is also a fixed point of the original FNO. In the neighborhood of the fixed point $\bm{\Sigma}^*$, the eigencomponents spanned by $\{ \bm{\psi}^{(k)} \}_{k=1}^K$ will decay or increase at the rate of $\chi_{c^*}$. Following the derivation of~\cref{thm:backprop}, the eigenvalues of the $\cos$ function eigencomponents of the gradient covariance are also $\chi_{c^*}$. These results show that the edge of chaos initialization scheme can be used for the original FNO with each activation function. \Cref{fig:gradient_norm_dcn} shows that the original FNO with ReLU activation and the parameter fixed as $b_i^{(\ell)} = 0$ exhibits similar backpropagation behavior as the simplified FNO, \ie, $\sigma^2 \approx 2$ is an appropriate choice.

\section{Experiments}
\label{sec:exp}
In this section, we experimentally demonstrate that deep FNO training requires appropriate initialization settings on a variety of datasets, consistent with the theory discussed in~\cref{sec:mf_theory}. 

\subsection{Datasets}
\begin{table*}[!htb]
\caption{Overview of dataset with number of spatial dimensions $M$, time dependence, spatial resolution $N_s = N \times \cdots \times N$, temporal resolution $N_t$, and number of samples for training, validation, and testing.}
\label{tab:dataset_overview}
\vskip 0.15in
\begin{center}
\begin{tabular}{cccccc}
\toprule
PDE & $M$ & time & $N_s$ & $N_t$ & Number of samples (Train / Val / Test) \\ \midrule 
advection & 1 & - & $64$ & - & 8000 / 1000 / 1000\\
Burgers' & 1 & - & $64$ & - & 8000 / 1000 / 1000 \\
Darcy flow & 2 & - & $64 \times 64$ & - & 900 / 100 / 100 \\
Navier-Stokes ($\nu=1e\text{-}3$) & 2 & $\checkmark$ & $64 \times 64$ & 25 & 1000 / 100 / 100 \\
Navier-Stokes ($\nu=1e\text{-}4$) & 2 & $\checkmark$ & $64 \times 64$ & 20 &
8000 / 1000 / 1000 \\
Navier-Stokes ($\nu=1e\text{-}5$) & 2 & $\checkmark$ & $64 \times 64$ & 20 & 1000 / 100 / 100 \\ \bottomrule
\end{tabular}
\end{center}
\vskip -0.1in
\end{table*}
We evaluated three models on commonly used PDEs: the advection equation, Burgers' equation, Darcy Flow equation, and incompressible Navier-Stokes (NS) equation. 
All datasets were generated by numerical simulations used in~\citep{takamoto2022pdebench,li2020fourier} and are publicly available. A summary of the dataset is provided in~\cref{tab:dataset_overview}, and more details are given in~\cref{app:exp-details}. 

\subsection{Experimental Settings}
In the experiments on the 1D advection and Burgers' equation, we compared the results of simplified FNOs defined in~\cref{eq:mf-fno} with Tanh and ReLU activations for varying number of layer $L$ and initial parameters $\sigma^2$. 
In the experiments on the 2D Darcy Flow and NS equations, we compared the results of the original FNO~\citep{li2020fourier} as shown in~\cref{eq:fno-ori} for varying number of layer $L$ and initial parameters $\sigma^2$. All models have a width of $D=32$ and the Fourier mode of $K=12$. When using Tanh as the activation, we fixed the initial bias parameter $\sigma_b^2=0.1$, and when using ReLU activation, we fixed the initial bias $b^{(\ell)}_i=0$. We used the AdamW optimizer and cosine annealing scheduler. Training was stopped early at the epoch of minimal nMSE on the validation data. Details of the experimental setup are given in~\cref{app:exp-details}. In the experiments on the NS equation, we trained the original FNO with the autoregressive scheme using a teacher-forcing technique, input normalization, and regularization that adds small-Gaussian noise to the input~\citep{tran2022factorized}. During the evaluation phase, only the initial $10$ steps are provided as input, and the rollout results of all subsequent steps are evaluated. 
For all tasks, the mean squared error (MSE) is used as the training loss and the normalized mean squared error (nMSE) as the validation and testing~\citep{li2020neural,tran2022factorized}.

\subsection{Results}
\label{sec:exp_simple_fno}

\begin{figure*}[!tb]
    \captionsetup[subfigure]{justification=centering}
    \centering
    \subcaptionbox{Tanh on advection eq.\label{fig:adv-tanh-train}}[0.24\textwidth]{
    \includegraphics[width=0.24\textwidth]{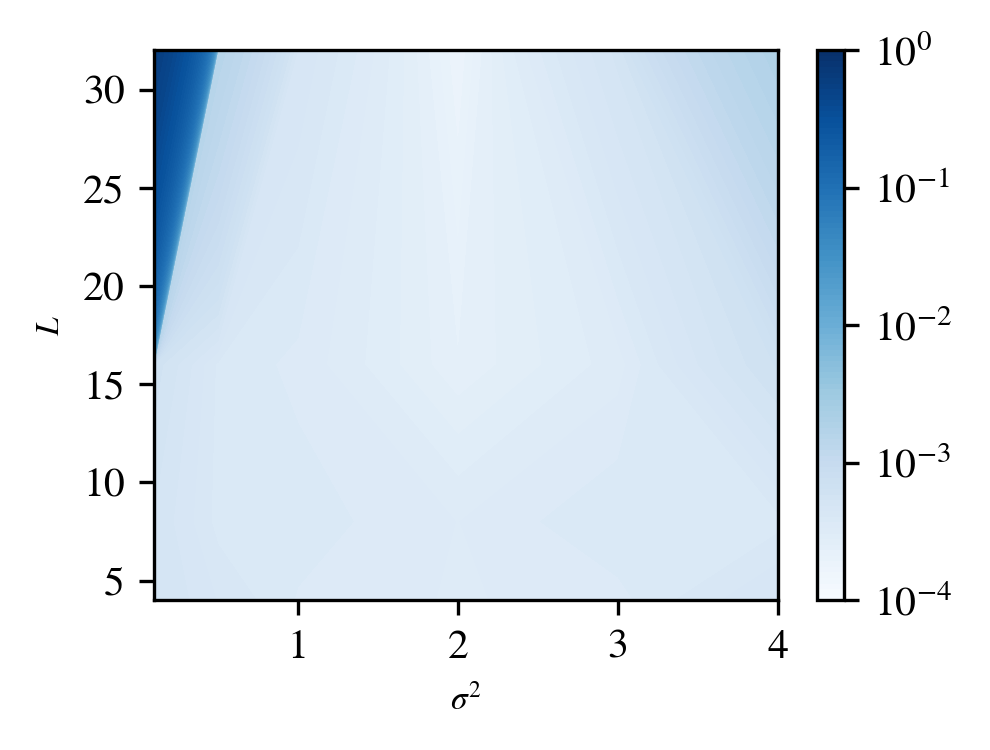}
    }
    \centering
    \subcaptionbox{ReLU on advection eq. \label{fig:adv-relu-train}}[0.24\textwidth]{
    \includegraphics[width=0.24\textwidth]{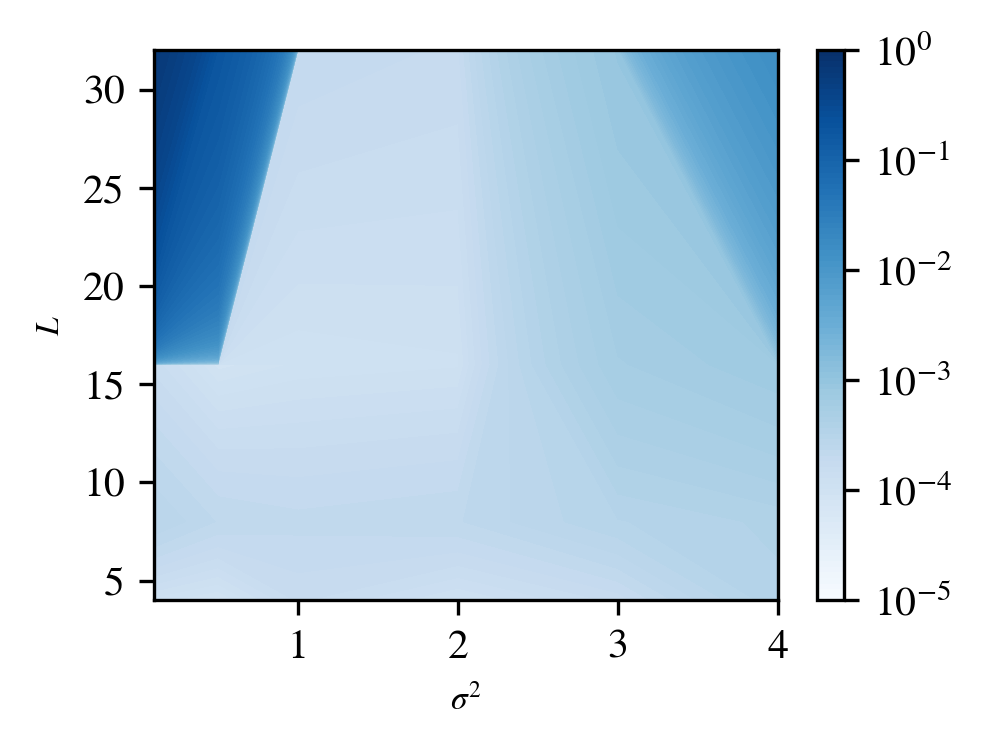}
    }%
    \centering
    \subcaptionbox{Tanh on Burgers' eq. \label{fig:bgs-tanh-train}}[0.24\textwidth]{
    \includegraphics[width=0.24\textwidth]{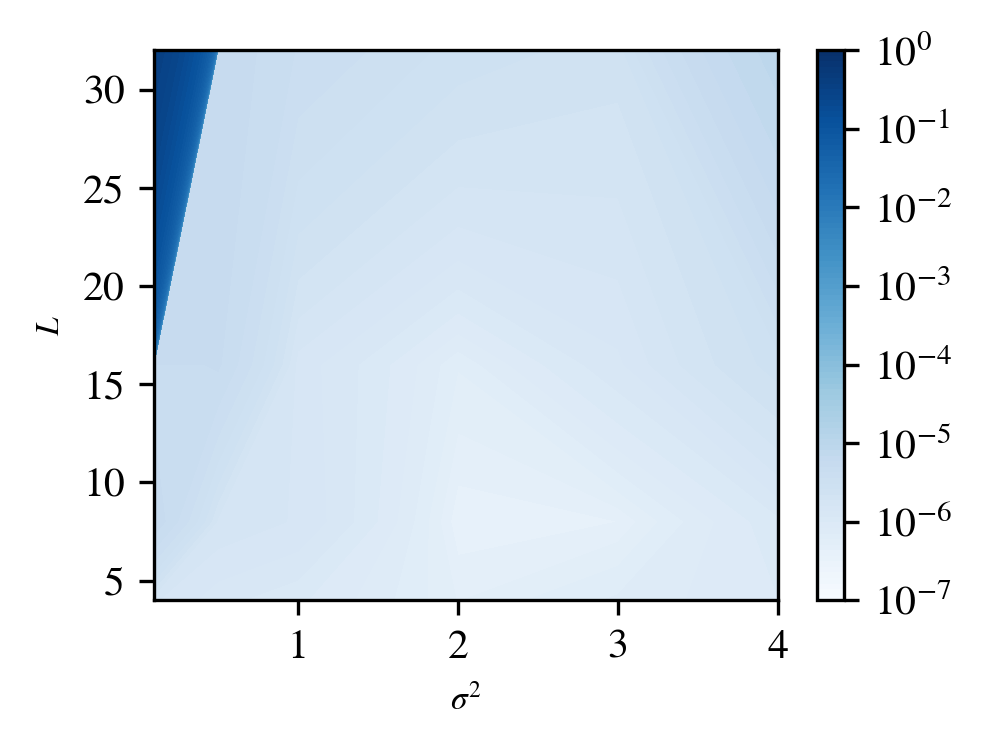}
    }
    \centering
    \subcaptionbox{ReLU on Burgers' eq. \label{fig:bgs-relu-train}}[0.24\textwidth]{
    \includegraphics[width=0.24\textwidth]{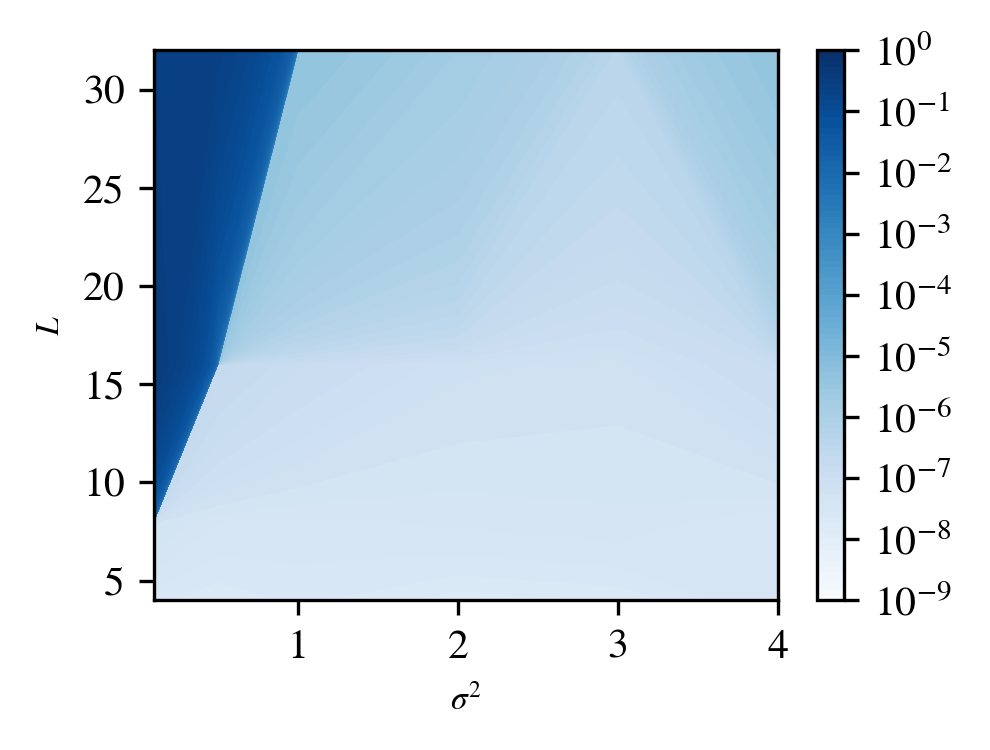}
    }%
    \vskip 0.1in
    \centering
    \subcaptionbox{ReLU on Darcy Flow eq.\label{fig:darcy-train}}[0.24\textwidth]{
    \includegraphics[width=0.24\textwidth]{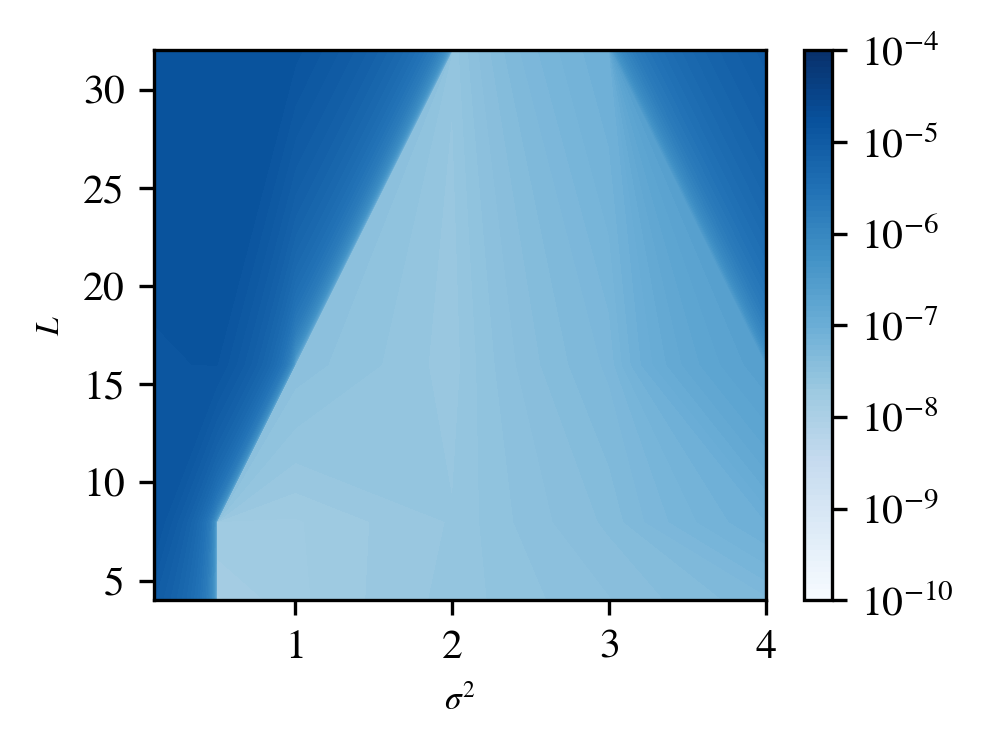}
    }
    \centering
    \subcaptionbox{ReLU on NS eq. (1e-3) \label{fig:ns-1e-3-train}}[0.24\textwidth]{
    \includegraphics[width=0.24\textwidth]{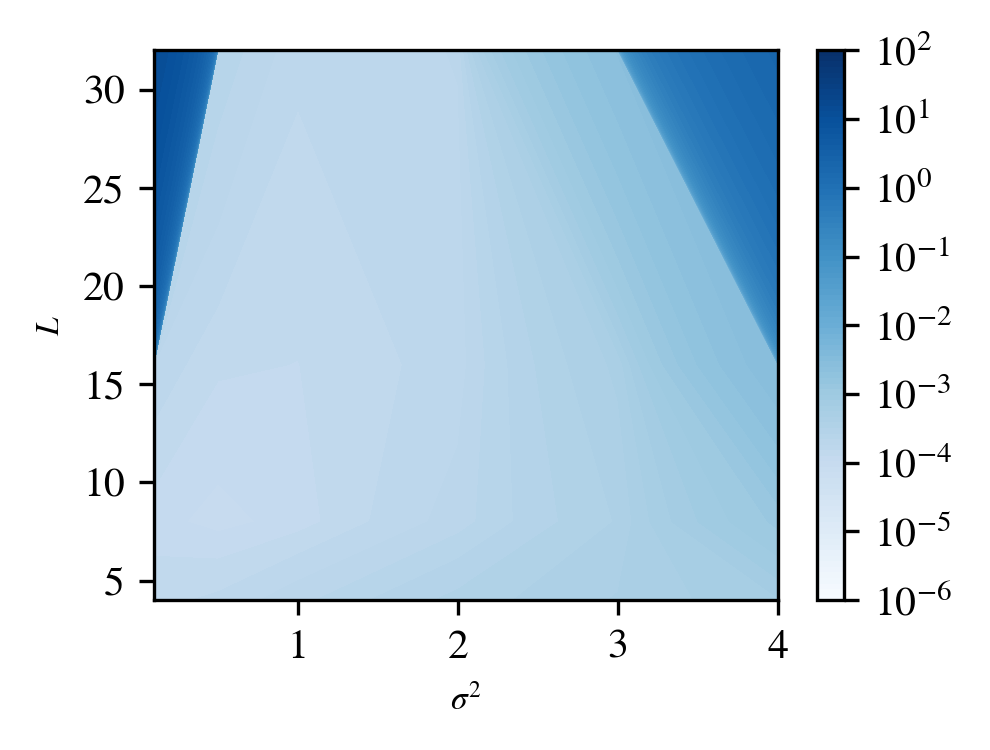}
    }%
    \centering
    \subcaptionbox{ReLU on NS eq. (1e-4) \label{fig:ns-1e-4-train}}[0.24\textwidth]{
    \includegraphics[width=0.24\textwidth]{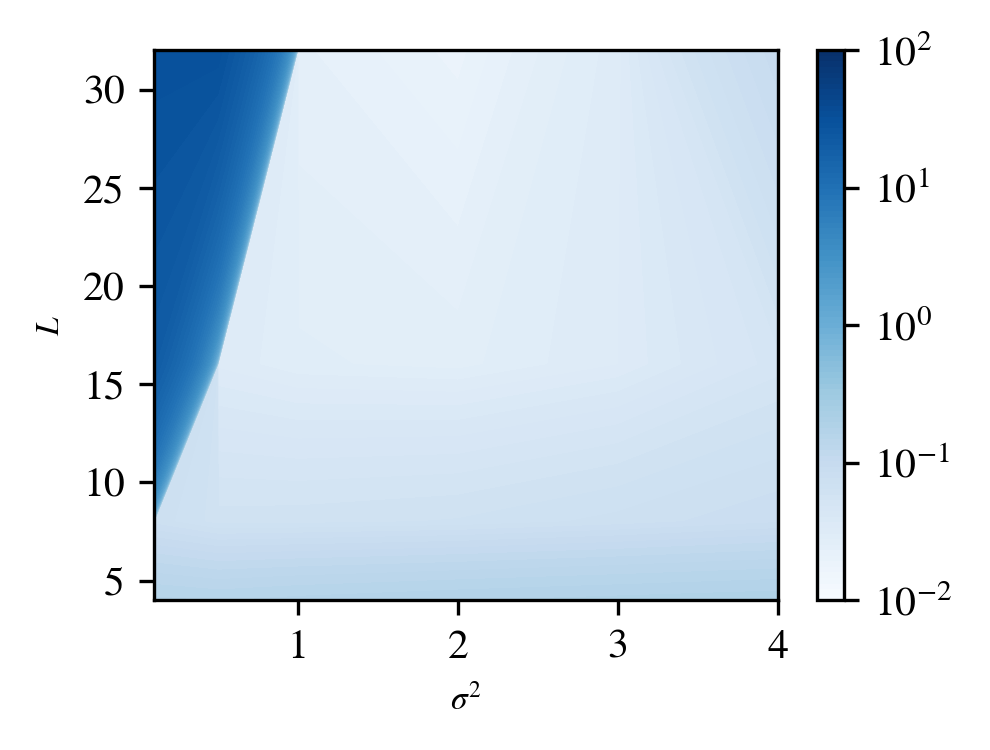}
    }
    \centering
    \subcaptionbox{ReLU on NS eq. (1e-5) \label{fig:ns-1e-5-train}}[0.24\textwidth]{
    \includegraphics[width=0.24\textwidth]{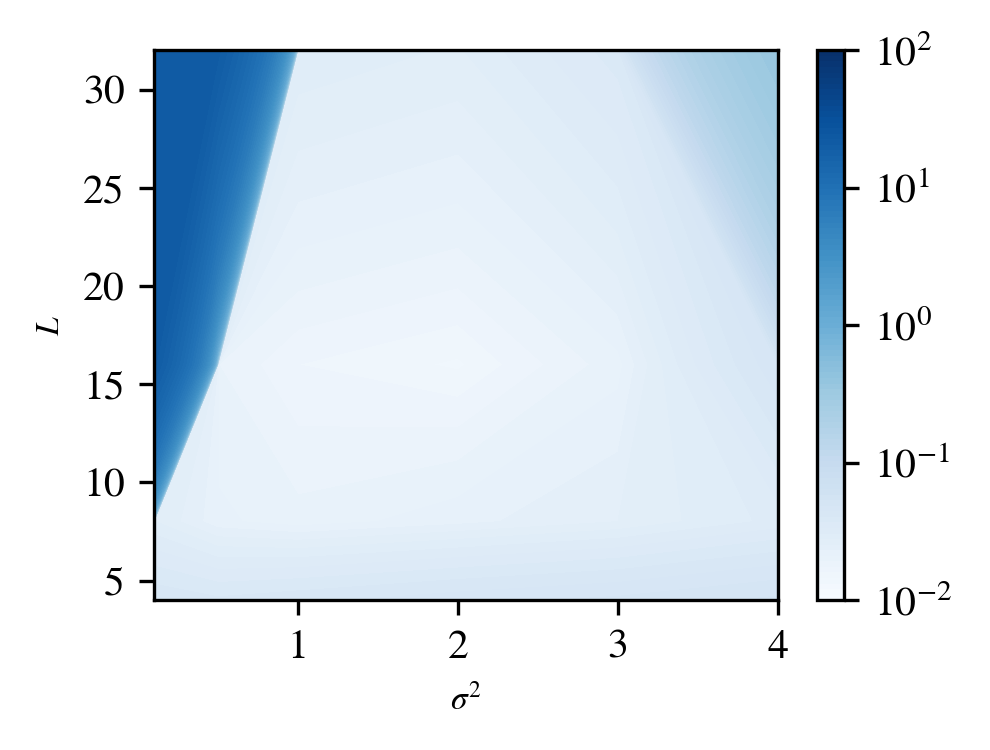}
    }
    \caption{Training loss of FNOs at last epoch for four distinct PDEs. \textbf{(a, b):} the advection equation, \textbf{(c, d):} the Burgers' equation, \textbf{(e):} Darcy Flow, \textbf{(f-h):} the NS equation. The heatmaps represents the training loss values for varying depth $L \in \{ 4, 8, 16, 32 \}$ and initial weight parameter $\sigma^2 \in \{ 0.1, 0.5, 1.0, 2.0, 3.0, 4.0 \}$, with lighter colors signifying lower training loss. The presented results are the mean training loss at the last epoch over three different seeds. }
    \label{fig:heatmap-sigam-L-train}
\end{figure*}
Heatmaps of the training loss measured by MSE at the last epoch and the test performance measured by nMSE on six different datasets, for different depths $L$ and initial parameters $\sigma^2$, are shown in~\cref{fig:heatmap-sigam-L-train,fig:heatmap-sigam-L-test}, respectively. 
Despite the differences in architectures and datasets, \Cref{fig:heatmap-sigam-L-train} shows the same trend supporting the theory in all experiments. 
As the number of layers $L$ increases, the range of acceptable initial $\sigma^2$ value settings becomes narrower, and initialization near \emph{the edge of chaos} ($\sigma^2 \approx 2$) is required for stable training of deep FNO. 
Detailed analyses on training loss and test performance are presented in~\cref{app:detail-analysis:train_loss} and~\cref{app:detail-analysis:test_loss}, respectively.

\section{Conclusion}
\label{sec:conclusion}
In this study, we developed a mean-field theory for FNO. We showed the expressivity and trainability of the FNO, which is characterized by the ordered-chaos phases. Furthermore, we observed both unique FNO-specific behaviors caused by mode truncation, as well as common behaviors akin to those of DCN. With our analysis as a basis, we identified the necessity of initializing FNO near \emph{the edge of chaos} for stable training of the FNO. Experimental results supported our theoretical results. 

A limitation of our analysis is that it is limited to the network at initialization and does not address the stability of the entire optimization process. While we do not provide sufficient conditions for stable training, we do offer one necessary condition for achieving stable training. 
Future work may consider a mean-field analysis of the FNO when using skip-connection~\citep{tran2022factorized}, Dropout and batch normalization, as well as initialization methods that ensure the input-output Jacobian of the FNO satisfies dynamical isometry~\citep{pennington2017resurrecting,pennington2018emergence}. 


\bibliography{example_paper}
\bibliographystyle{neurips_2024}

\appendix


\newpage
\section{Proof of \cref{thm:stability}}
\label{app:proof_forward}

\begin{figure}[tb!]
    \centering
    \centerline{\includegraphics[width=0.60\columnwidth]{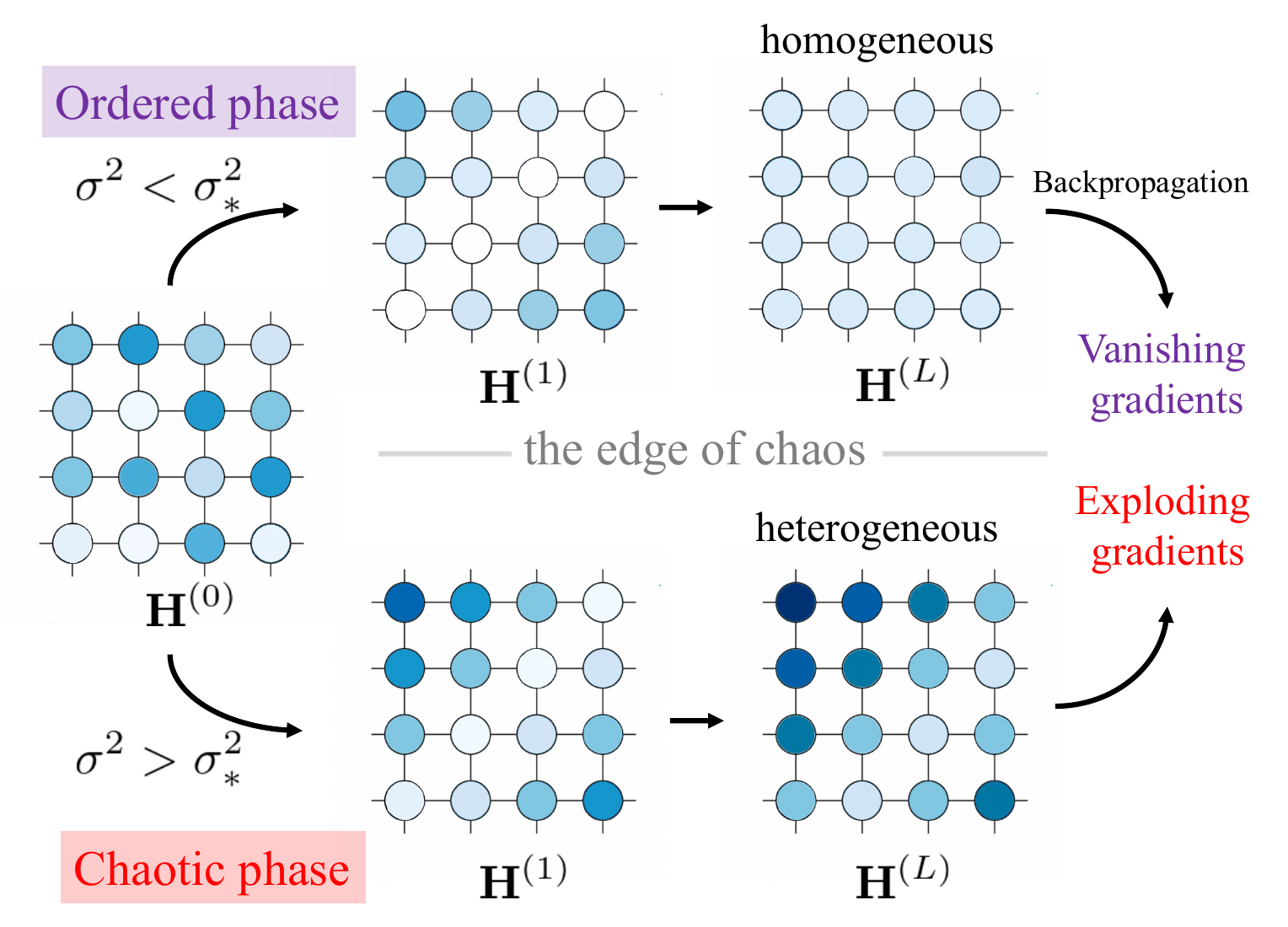}}
    \caption{Illustration of ordered-chaos phase transition for the weight initialization parameter $\sigma^2$. In the ordered phase, the spatial hidden representations $\mathbf{H}^{(\ell)}$ on the grid converge to a uniform state during forward propagation and the gradient vanishes during backpropagation. In the chaotic phase, the representations either converge to a distinct state or diverge and the gradient explodes.}
    \label{fig:phase_diagram}
\end{figure}
\begin{figure}[!tb]
    \captionsetup[subfigure]{justification=centering}
    \centering
    \subcaptionbox{Tanh activation~\citep{pennington2017resurrecting}\label{fig:phase_diagram_tanh}}[0.48\textwidth]{
    \includegraphics[width=0.48\textwidth]{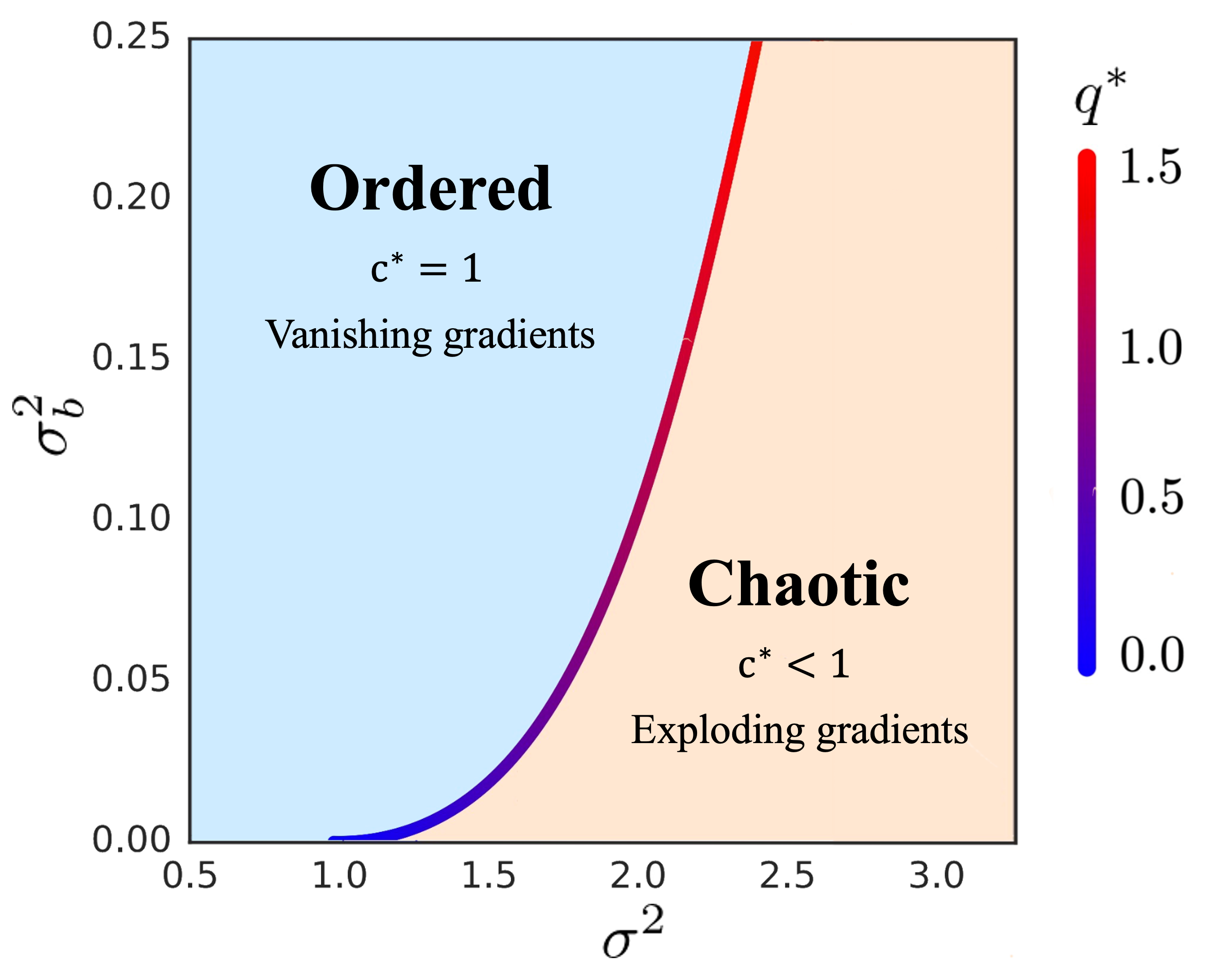}
    }
    \centering
    \subcaptionbox{ReLU activation\label{fig:phase_diagram_relu}}[0.48\textwidth]{
    \includegraphics[width=0.48\textwidth]{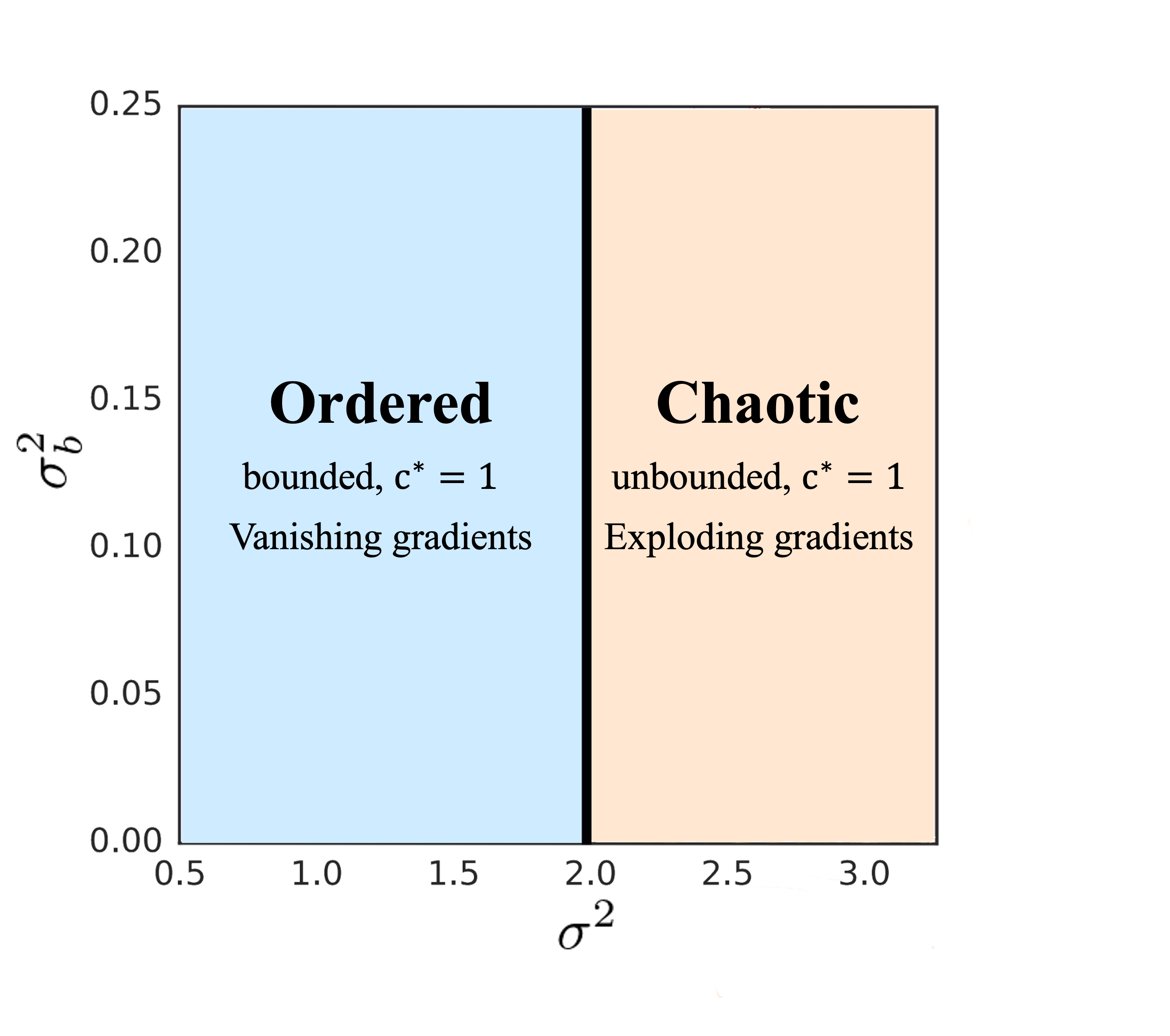}
    }%
    \caption{Ordered-chaos phase transition diagram for the DCN}
    \label{fig:phase_diagram_dcn}
\end{figure}

\stability*
\begin{proof}
The theorem is obtained by the eigenvalue analysis on the the first-order Taylor approximation of the iterated map $\mathcal{C}$ at the fixed point $\bm{\Sigma}^*$. 
The Jacobian matrix $J(\bm{\Sigma}^*) \in \mathbb{R}^{N^2 \times N^2}$ of the iterated map at the fixed point and the Jacobian linear map $J_{\bm{\Sigma}^*}(\cdot) \colon \mathbb{R}^{N \times N} \rightarrow \mathbb{R}^{N \times N}$ are defined as follows.
\begin{gather}
    \label{eq:jacob-matrix}
    [J(\bm{\Sigma}^*)]_{(\alpha, \alpha'), (\beta, \beta')} \coloneqq \left. \frac{\partial [\mathcal{C}(\bm{\Sigma})]_{\alpha, \alpha'}}{\partial \Sigma_{\beta, \beta'}}\right|_{\bm{\Sigma}=\bm{\Sigma}^*},\\
    \label{eq:linearmap-jacob}
    \forall \bm{\Sigma} \in \mathbb{R}^{N \times N},\ J_{\bm{\Sigma}^*}(\bm{\Sigma}) \coloneqq \operatorname{mat}\left(J(\bm{\Sigma}^*) \operatorname{vec}(\bm{\Sigma})\right),
\end{gather}
where $\operatorname{vec}$ performs the vectorization, \ie transforming an $N \times N$ matrix into a vector of size $N^2$, and $\operatorname{mat}$ performs the inverse operation of $\operatorname{vec}$.

From~\cref{lem:sigma_offdiag}, $K-1$ matrices in $\{ \operatorname{vec}(\bm{\psi}^{(k)})
\}_{k \in [K] \backslash \{0\}}$ are eigenbases with the eigenvalue $\chi_{c^*}$ of the Jacobian linear map. From~\cref{lem:sigma_diag}, the matrix $\bm{\psi}$ is the eigenbases with the eigenvalue $\chi$ of the Jacobian linear map. Since the sets of $K$ matrices in $\{ \bm{\psi}^{(k)} \}_{k \in [K] \backslash \{0\}} \cup \{ \bm{\psi} \}$ are linearly independent (yet non-orthogonal) in $\mathbb{R}^{N \times N}$, the subspace $\operatorname{span}\left(\{ \bm{\psi}^{(k)} \}_{k \in [K] \backslash \{0\}} \cup \{ \bm{\psi} \} \right)$ is the $K$-dimensional eigenspace of the Jacobian linear map. From~\cref{lem:jacob-rank}, the rank of the Jacobian matrix $J(\bm{\Sigma}^*)$ at the fixed point $\bm{\Sigma}^*$ is at most $K$, thereby the rank of the Jacobian linear map in~\cref{eq:linearmap-jacob} is at most $K$. Therefore, we have
\begin{gather*}
    \forall \bm{e} \in \operatorname{span}\left(\{ \bm{\psi}^{(k)} \}_{k \in [K] \backslash \{0\}} \cup \{ \bm{\psi} \} \right)^{\bot},\quad J_{\bm{\Sigma}^*}(\bm{e}) = \mO. \\
    \begin{split}
    \mathcal{C}(\bm{\Sigma}^* + \bm{E}^{(\ell-1)}) &\approx \bm{\Sigma}^* + J_{\bm{\Sigma}^*}(\bm{E}^{(\ell-1)}) \\
    &= \bm{\Sigma}^* + \left(\chi^{\ell-1} \epsilon J_{\bm{\Sigma}^*}(\bm{\psi}) + \sum_{k=1}^{K-1} \chi_{c^*}^{\ell-1} \epsilon_k J_{\bm{\Sigma}^*}(\bm{\psi}^{(k)}) \right)\\
    &= \bm{\Sigma}^* + \underbrace{\chi^{\ell} \epsilon \bm{\psi} + \sum_{k=1}^{K-1} \chi_{c^*}^{\ell} \epsilon_k \bm{\psi}^{(k)}}_{\displaystyle = \bm{E}^{(\ell)}}.
    \end{split}
\end{gather*}
\end{proof}

\iteratedmap*
\begin{proof}
For simplicity, we introduce $Y_{\alpha, k, \beta, i}$ as follows.
\begin{align*}
    H_{\alpha, d}^{(\ell, k)} =  \sum_{i=1}^D\sum_{\beta=0}^{N-1} \underbrace{F^{\dagger}_{\alpha, k} F_{k, \beta} X^{(\ell-1)}_{\beta, i}}_{\displaystyle \eqqcolon Y_{\alpha, k, \beta, i}} \left(\Theta^{(\ell, k)}_{i, d} + \left(1- \left(\delta_{k, 0}+\delta_{k, N/2}\right)\right)\sqrt{-1}\Xi^{(\ell, k)}_{i, d} \right).
\end{align*}

Since the weights are sampled independently, for different $k \neq k'$, 
\begin{align*}
    \mathbb{E}\left[H_{\alpha, d}^{(\ell, k)} H_{\alpha, d}^{(\ell, k')} \right] = \mathbb{E}\left[H_{\alpha, d}^{(\ell, k)}\overline{H}_{\alpha, d}^{(\ell, k')}\right] = \mathbb{E}\left[\overline{H}_{\alpha, d}^{(\ell, k)}\overline{H}_{\alpha, d}^{(\ell, k')}\right] = 0.
\end{align*} 

From~\cref{eq:mf-fno}, we have
\begin{align*}
\mathbb{E}_{\Theta^{(\ell)}, \Xi^{(\ell)}}[H^{(\ell)}_{\alpha, d} H^{(\ell)}_{\alpha', d}] =\sum_{k=0}^{K-1} \frac{c_k}{2} \mathbb{E}\left[\left( H_{\alpha, d}^{(\ell, k)} + \overline{H}_{\alpha, d}^{(\ell, k)} \right)\left( H_{\alpha', d}^{(\ell, k)} + \overline{H}_{\alpha', d}^{(\ell, k)}\right)\right] + \mathbb{E}\left[(b^{(\ell)}_d)^2\right].
\end{align*}

First, we calculate the term $k \neq 0, \frac{N}{2}$, where $c_k = 2$.
\begin{align*}
& \mathbb{E}\left[\left( H_{\alpha, d}^{(\ell, k)} + \overline{H}_{\alpha, d}^{(\ell, k)} \right)\left( H_{\alpha', d}^{(\ell, k)} + \overline{H}_{\alpha', d}^{(\ell, k)}\right)\right] \\
&= \sum_{i, i'=1}^D \sum_{\beta, \beta'=0}^{N-1} Y_{\alpha, k, \beta, i} Y_{\alpha', k, \beta', i'} \delta_{i,i'}\left( \sigma^2 - \sigma^2 \right)/2D \\
&\qquad + \sum_{i, i'=1}^D \sum_{\beta, \beta'=1}^N \overline{Y}_{\alpha, k, \beta, i} \overline{Y}_{\alpha', k, \beta', i'} \delta_{i,i'}\left( \sigma^2 - \sigma^2 \right)/2D \\
&\qquad + \sum_{i, i'=1}^D \sum_{\beta, \beta'=0}^{N-1} Y_{\alpha, k, \beta, i} \overline{Y}_{\alpha', k, \beta', i'} \delta_{i,i'}\left( \sigma^2 + \sigma^2 \right)/2D \\
&\qquad + \sum_{i, i'=1}^D \sum_{\beta, \beta'=0}^{N-1} \overline{Y}_{\alpha, k, \beta, i} Y_{\alpha', k, \beta', i'} \delta_{i,i'}\left( \sigma^2 + \sigma^2 \right)/2D \\
&= \frac{\sigma^2}{D} \sum_{i=1}^D \sum_{\beta, \beta'=0}^{N-1} Y_{\alpha, k, \beta, i} \overline{Y}_{\alpha', k, \beta', i}+\overline{Y}_{\alpha, k, \beta, i} Y_{\alpha', k, \beta', i} \\
&= \frac{\sigma^2}{D} \sum_{i=1}^D \underbrace{\left(\sum_{\beta=0}^{N-1} F_{k, \beta} X_{\beta, i}^{(\ell-1)}\right)}_{= \hat{X}_{k, i}^{(\ell - 1)}} \overline{\left(\sum_{\beta'=0}^{N-1} F_{k, \beta'} X_{\beta', i}^{(\ell-1)}\right)} \left(F^{\dagger}_{\alpha, k} F_{k, \alpha'} + \overline{F^{\dagger}_{\alpha, k} F_{k, \alpha'}}\right)\\
&=\frac{2\sigma^2}{D} \sum_{i=1}^D \left|\hat{X}_{k, i}^{(\ell-1)}\right|^2 \cos\left(\theta^{(k)}_{\alpha, \alpha'} \right),
\end{align*}
where $\hat{X}_{k, i}^{(\ell-1)}$ is the $k$-th Fourier mode of the representaion $\rmX_{:, i}^{(\ell-1)}$. 

Second, we calculate the terms $k=0, \frac{N}{2}$, where $c_k = 1$ and $H_{\alpha, d}^{(\ell, k)} = \overline{H}_{\alpha, d}^{(\ell, k)}$. 
\begin{align*}
\mathbb{E}\left[ \frac{c_k}{2} \left( H_{\alpha, d}^{(\ell, k)} + \overline{H}_{\alpha, d}^{(\ell, k)} \right)\left( H_{\alpha', d}^{(\ell, k)} + \overline{H}_{\alpha', d}^{(\ell, k)}\right) \right] &= 2 \sum_{i, i'=1}^D \sum_{\beta, \beta'=0}^{N-1} \left(Y_{\alpha, k, \beta, i} Y_{\alpha', k, \beta', i'} \right) \delta_{i, i'} \frac{\sigma^2}{2D} \\
&= \frac{\sigma^2}{D} \sum_{i=1}^D \left|\hat{X}^{(\ell-1)}_{k, i}\right|^2 \cos\left(\theta^{(k)}_{\alpha, \alpha'} \right).
\end{align*}

Since the Fourier modes $|\hat{X}^{(\ell-1)}_{k, i}|^2$ are \iid for each hidden dimension $i \in [D]$, we have
\begin{align}
    \mathbb{E}_{\bm{\Theta}^{0:\ell-1},\bm{\Xi}^{0:\ell-1}, \bm{b}^{0:\ell-1}}\mathbb{E}_{\bm{\Theta}^{(\ell)}, \bm{\Xi}^{(\ell)}, \bm{b}^{(\ell)}}\left[H^{(\ell)}_{\alpha, d} H^{(\ell)}_{\alpha', d}\right] 
    &= \sigma^2 \sum_{k=0}^{K-1} c_k \mathbb{E}_{\bm{\Theta}^{0:\ell-1},\bm{\Xi}^{0:\ell-1}}\left[\left|\hat{X}_{k, i}^{(\ell-1)}\right|^2 \right] \cos\left(\theta^{(k)}_{\alpha, \alpha'} \right) + \sigma_b^2 \nonumber \\ 
    \label{lem:eq:cmap_1}
    &= \sigma^2 \sum_{k=0}^{K-1} c_k \mathbb{E}_{\rmH_{:, d} \sim \mathcal{N}\left(0, \bm{\Sigma}^{(\ell-1)}\right)}\left[\left|\hat{X}_{k, i}^{(\ell-1)}\right|^2 \right] \cos\left(\theta^{(k)}_{\alpha, \alpha'} \right) + \sigma_b^2.
\end{align}

To obtain a more tractable expression in the subsequent proofs of theorems, we express the iterated map without using Fourier modes as follows.
\begin{align}
    \label{lem:eq:cmap_2}
    &\mathbb{E}_{\bm{\Theta}^{0:\ell},\bm{\Xi}^{0:\ell}, \bm{b}^{0:\ell}}\left[H^{(\ell)}_{\alpha, d} H^{(\ell)}_{\alpha', d}\right] \nonumber \\
    &= \frac{\sigma^2}{N^2} \sum_{k=0}^{K-1} c_k \sum_{\beta, \beta'=0}^{N-1} \cos\left(\theta^{(k)}_{\alpha, \alpha'}\right) \cos\left(\theta_{\beta,\beta'}^{(k)}\right) \mathbb{E}_{\rmH_{:, d} \sim \mathcal{N}\left(0, \bm{\Sigma}^{(\ell-1)}\right)}\left[\phi(H_{\beta, d}) \phi(H_{\beta', d})\right] + \sigma_b^2.
\end{align}
\end{proof}

\exsistancelemma*
\begin{proof}
    Using the properties of cosine functions, the following holds. 
    \begin{align}
        \label{eq:bias-in}
        \sigma_b^2 = \frac{1}{N^2} \sum_{k=0}^{K-1} c_{k} \cos\left(\theta^{(k)}_{\alpha, \alpha'}\right)  \underbrace{\sum_{\beta, \beta'=0}^{N-1} \cos\left(\theta^{(k)}_{\beta, \beta'}\right)}_{= N^2 \delta_{k, 0}} \sigma_b^2.
    \end{align}
    Then, the following holds for all $\alpha, \alpha' \in [N]$. 
    \begin{align*}
        [\mathcal{C}(\bm{\Sigma}^*)]_{\alpha, \alpha'} &= \frac{1}{N^2} \sum_{k=0}^{K-1} c_k \cos\left(\theta^{(k)}_{\alpha, \alpha'}\right) \sum_{\beta, \beta'=0}^{N-1}  \cos\left(\theta^{(k)}_{\beta, \beta'}\right) \left( \sigma^2\mathbb{E}_{\rmH_{:, d} \sim \mathcal{N}(0, \bm{\Sigma}^*)}\left[\phi(H_{\beta, d}) \phi(H_{\beta', d})\right] + \sigma_b^2 \right).
    \end{align*}
    When the DCN is applied pointwise to each spatial representation $\mH_{\beta, :},\ \beta \in [N]$, the iterated map of the random DCN~\citep{poole2016exponential} is given by $\mathcal{C}_{\mathrm{DCN}}(\bm{\Sigma}) \coloneqq \sigma^2\mathbb{E}_{\rmH_{:, d} \sim \mathcal{N}(0, \bm{\Sigma})}\left[\phi(H_{\beta, d}) \phi(H_{\beta', d})\right] + \sigma_b^2$.
    Since the covariance $\bm{\Sigma}^*$ is a fixed point with respect to the iterative map of the DCN, \ie $\mathcal{C}_{\mathrm{DCN}}(\bm{\Sigma}^*) = \bm{\Sigma}^*$, the following holds.
    \begin{align*}
        [\mathcal{C}(\bm{\Sigma}^*)]_{\alpha, \alpha'} &= \frac{1}{N^2} \sum_{k=0}^{K-1} c_k \cos\left(\theta^{(k)}_{\alpha, \alpha'}\right) \underbrace{\sum_{\beta, \beta'=0}^{N-1}  \cos\left(\theta^{(k)}_{\beta, \beta'}\right)}_{= N^2 \delta_{k, 0}} \underbrace{q^*(\delta_{\beta, \beta'} + (1- \delta_{\beta, \beta'})c^*)}_{=q^*} = q^*.
    \end{align*}
    Thus, we confirm that the covariance $\bm{\Sigma}^*$ satisfies the definition of a fixed point in the map $\mathcal{C}$, \ie $\bm{\Sigma}^* = \mathcal{C}(\bm{\Sigma}^*)$. This means that the fixed point for the iterated map $\bm{\Sigma}^{*}$ of the DCN also serves as a fixed point for the iterated map of the simplified FNO. 
\end{proof}

\begin{lemma} \label{lem:chi}
Let $\bm{\Sigma}^*$ be the fixed point of the form in~\cref{eq:sigma_fixed}. Suppose that the symmetric perturbation $\bm{E} \in \mathbb{R}^{N \times N}$, where $E_{\beta, \beta} = E_{\beta', \beta'}$ and $E_{\beta, \beta'}$ are non-zero for some $\beta, \beta' \in [N],\ \beta \neq \beta$, and all other elements are zero. Then, we have
\begin{gather}
    \label{eq:diag}
    \sigma^2 \mathbb{E}_{\rmH_{:, d} \sim \mathcal{N}\left(0, \bm{\Sigma}^{*} + \bm{E} \right)}\left[\phi(H_{\beta, d})^2 \right] + \sigma^2_b = q^* + E_{\beta, \beta} \chi_{q^*} + \mathcal{O}\left(|E_{\beta, \beta}|^2\right), \\
    \label{eq:off-diag}
    \sigma^2 \mathbb{E}_{\rmH_{:, d} \sim \mathcal{N}\left(0, \bm{\Sigma}^{*} + \bm{E} \right)}\left[\phi(H_{\beta, d}) \phi(H_{\beta', d})\right] + \sigma^2_b = q^*c^* + E_{\beta, \beta} \chi_{\kappa}  + E_{\beta, \beta'} \chi_{c^*} + \mathcal{O}\left(|E_{\beta, \beta'}|^2\right),
\end{gather}
where $\chi_{q^*},\ \chi_{c^*},$ and $\chi_{\kappa}$ are the constants defined by
\begin{gather*}
    \chi_{q^*} \coloneqq \sigma^2 \mathbb{E}_{\rmH_{:, d} \sim \mathcal{N}(0, \bm{\Sigma}^*)}\left[ \phi'^2(H_{\beta, d}) + \phi''(H_{\beta, d}) \phi(H_{\beta, d}) \right], \\
    \chi_{c^*} \coloneqq \sigma^2\mathbb{E}_{\rmH_{:, d} \sim \mathcal{N}(0, \bm{\Sigma}^*)}\left[ \phi'(H_{\beta, d}) \phi'(H_{\beta', d}) \right], \\
    \chi_{\kappa} \coloneqq \frac{\sigma^2}{2} \mathbb{E}\left[ \phi''(H_{\alpha, d}) \phi(H_{\alpha', d}) + \phi(H_{\alpha, d}) \phi''(H_{\alpha', d}) + 2c^*\phi'(H_{\alpha, d}) \phi'(H_{\alpha', d}) \right].
\end{gather*}
\begin{proof}
    \Cref{eq:diag} is obviously shown by the result of Section 2 in~\citep{poole2016exponential} and Section 3 in~\citep{schoenholz2016deep}. 
    We prove \cref{eq:off-diag} with reference to the results of~\citep{schoenholz2016deep}.
    \begin{align*}
        \sigma^2 \mathbb{E}_{\rmH_{:, d} \sim \mathcal{N}\left(0, \bm{\Sigma}^{*} + \bm{E} \right)}\left[\phi(H_{\beta, d}) \phi(H_{\beta', d})\right] + \sigma_b^2 = \sigma^2 \int \mathcal{D}z_1 \mathcal{D}z_2 \phi(u_1) \phi(u_2) + \sigma_b^2,
    \end{align*}
    where $\int \mathcal{D}\mathrm{~d}z = \frac{1}{\sqrt{2\pi}} \int \mathrm{~d}z e^{-\frac{1}{2}z^2}$ is the measure for a standard Gaussian distribution, $u_1 = \sqrt{q} z_1$, $u_2 = \sqrt{q}\left(c_{\beta, \beta'} z_1 + \sqrt{1 - c_{\beta, \beta'}^2}z_2\right)$, $q = q^* + E_{\beta, \beta}$ and $c_{\beta, \beta'} = c^* + \frac{E_{\beta, \beta'}}{q}$. 

    We consider the case where $c^* < 1$ and $c^* = 1$ separately. Later, we will show that the two results agree with each other. First, we consider the case where $c^* < 1$.  
    Using Taylor expansion, we can approximate $\phi(u_1)$ and $\phi(u_2)$ as follows. For the simplicity, we assume $\mathcal{O}(|E_{\beta, \beta}|) = \mathcal{O}(|E_{\beta, \beta'}|)$. 
    \begin{gather*}
        \phi(u_1) = \phi(u_1^*) + \frac{1}{2}\frac{E_{\beta, \beta}}{\sqrt{q^*}} z_1 \phi'(u_1^*) + \mathcal{O}(|E_{\beta, \beta}|^2),\\
        \phi(u_2) = \phi\left(u_2^*\right) + \frac{E_{\beta, \beta'}}{\sqrt{q^*}} \left( z_1 - \frac{c^*}{\sqrt{1- (c^*)^2}}z_2\right) \phi'(u_2^*) + \frac{E_{\beta, \beta}}{2 \sqrt{q^*}} (c^* z_1 + \sqrt{1 - (c^*)^2} z_2) \phi'(u_2^*)  + \mathcal{O}(|E_{\beta, \beta}|^2),
    \end{gather*}
    where $u_1^* = \sqrt{q^*} z_1$ and $u_2^* = \sqrt{q^*}(c^* z_1 + \sqrt{1 - (c^*)^2} z_2)$. 

    Thus, we have
    \begin{align*}
        &\sigma^2 \int \mathcal{D}z_1 \mathcal{D}z_2 \phi(u_1) \phi(u_2) + \sigma^2_b \\
        &= \underbrace{\sigma^2 \int \mathcal{D}z_1 \mathcal{D}z_2 \phi(u_1^*) \phi(u_2^*) + \sigma^2_b}_{=q^*c^*} + \sigma^2 \int \mathcal{D}z_1 \mathcal{D}z_2 \frac{E_{\beta, \beta'}}{\sqrt{q^*}} \left( z_1 - \frac{c^*}{\sqrt{1- (c^{*})^2}}z_2\right) \phi(u_1^*) \phi'(u_2^*) \\
        &\quad + \sigma^2 \int \mathcal{D}z_1 \mathcal{D}z_2 \frac{1}{2}\frac{E_{\beta, \beta}}{\sqrt{q^*}}(c^* z_1 + \sqrt{1 - (c^*)^2} z_2)\phi(u_1^*) \phi'(u_2^*) + \sigma^2 \int \mathcal{D}z_1 \mathcal{D}z_2 \frac{1}{2}\frac{E_{\beta, \beta}}{\sqrt{q^*}} z_1 \phi'(u_1^*)\phi(u_2^*) + \mathcal{O}(|E_{\beta, \beta}^2|). 
    \end{align*}

    The results of the second term are obtained from the transformation of equations 36 to 39 in Appendix 7.2 of~\citep{schoenholz2016deep}.  
    \begin{align*}
        \sigma^2 \frac{E_{\beta, \beta'}}{\sqrt{q^*}} \int \mathcal{D}z_1 \mathcal{D}z_2  \left( z_1 - \frac{c^*}{\sqrt{1- (c^{*})^2}}z_2\right) \phi(u_1^*) \phi'(u_2^*) = E_{\beta, \beta'} \underbrace{\sigma^2 \int \mathcal{D}z_1 \mathcal{D}z_2 \phi'(u_1^*) \phi'(u_2^*)}_{=\chi_{c^*}}.
    \end{align*}
    
    Utilizing the identity, $\int \mathcal{D} z zf(z) = \int \mathcal{D} z f'(z)$, we obtain the third term as follows. 
    \begin{align*}
        &\sigma^2 \int \mathcal{D}z_1 \mathcal{D}z_2 \frac{1}{2}\frac{E_{\beta, \beta}}{\sqrt{q^*}}(c^* z_1 + \sqrt{1 - (c^*)^2} z_2)\phi(u_1^*) \phi'(u_2^*) \nonumber \\
        &= \sigma^2 \frac{1}{2} E_{\beta, \beta} \int \mathcal{D}z_1 \mathcal{D}z_2  \left( c^* \left( \phi'(u_1^*) \phi'(u_2^*)  +  c^*\phi(u_1^*) \phi''(u_2^*)\right) + (1 - (c^*)^2) \phi(u_1^*) \phi''(u_2^*) \right)  \\
        &= \sigma^2 \frac{1}{2} E_{\beta, \beta} \int \mathcal{D}z_1 \mathcal{D}z_2 \left(c^*\phi'(u_1^*) \phi'(u_2^*) + \phi(u_1^*) \phi''(u_2^*) \right).
    \end{align*}

    The last term is calculated as follows.
    \begin{align*}
        \sigma^2 \int \mathcal{D}z_1 \mathcal{D}z_2 \frac{1}{2}\frac{E_{\beta, \beta}}{\sqrt{q^*}} z_1 \phi'(u_1^*)\phi(u_2^*) \nonumber &= \sigma^2 \frac{1}{2} E_{\beta, \beta} \int \mathcal{D}z_1 \mathcal{D}z_2 (\phi''(u_1^*) \phi(u_2^*) + c^*\phi'(u_1^*) \phi'(u_2^*)).
    \end{align*}

    Summing the last two terms, we obtain the term $E_{\beta, \beta} \chi_{\kappa}$. 
    
    Next, we consider the case where $c^* = 1$. As with the discussion of~\citep{schoenholz2016deep}, the perturbed correlation is defined by $c_{\beta, \beta'} = c^* - \frac{E_{\beta, \beta'}}{q}$ where $E_{\beta, \beta'} > 0$ and then $\phi(u_2)$ is expanded as follows.
    \begin{align*}
        \phi(u_2) &= \phi(u_2^*) + \left( \sqrt{2 \frac{E_{\beta, \beta'}}{q^*}}z_2 - \frac{E_{\beta, \beta'}}{\sqrt{q^*}} z_1 \right) \phi'(u_2^*) + E_{\beta, \beta'} z_2^2 \phi''(u_2^*) + \frac{E_{\beta, \beta}}{2 \sqrt{q^*}} z_1 \phi'(u_2^*) + \mathcal{O}(|E_{\beta, \beta'}|^{3/2}).
    \end{align*}

    Thus, we have
    \begin{align}
        \label{eq:lem.a.2.eq1}
        \begin{split}
        & \sigma^2 \int \mathcal{D}z_1 \mathcal{D}z_2 \phi(u_1) \phi(u_2) + \sigma^2_b \\
        &= \underbrace{\sigma^2 \int \mathcal{D}z_1 \mathcal{D}z_2 \phi(u_1^*) \phi(u_2^*) + \sigma^2_b}_{=q^*c^*} + \sigma^2 \int \mathcal{D}z_1 \mathcal{D}z_2 \sqrt{2 \frac{E_{\beta, \beta'}}{q^*}}z_2 \phi(u_1^*) \phi'(u_2^*) \\
        &\quad - \sigma^2 \int \mathcal{D}z_1 \mathcal{D}z_2 \frac{E_{\beta, \beta'}}{\sqrt{q^*}} z_1 \phi(u_1^*) \phi'(u_2^*) + \sigma^2 \int \mathcal{D}z_1 \mathcal{D}z_2 E_{\beta, \beta'} z_2^2 \phi(u_1^*) \phi''(u_2^*) \\
        &\quad  + \sigma^2 \int \mathcal{D}z_1 \mathcal{D}z_2 \frac{E_{\beta, \beta}}{2 \sqrt{q^*}}z_1 \phi(u_1^*)\phi'(u_2^*) + \sigma^2 \int \mathcal{D}z_1 \mathcal{D}z_2 \frac{1}{2}\frac{E_{\beta, \beta}}{\sqrt{q^*}} z_1 \phi'(u_1^*)\phi(u_2^*) + \mathcal{O}(|E_{\beta, \beta}^{3/2}|).
        \end{split}
    \end{align}

    Using the fact that $u_2^*=u_1^*$ and $u_2^*$ is independent of $z_2$, 
    \begin{gather*}
        \sigma^2 \int \mathcal{D}z_1 \mathcal{D}z_2 \sqrt{2 \frac{E_{\beta, \beta'}}{q^*}}z_2 \phi(u_1^*) \phi'(u_2^*) = \sigma^2 \int \mathcal{D}z_1 \sqrt{2 \frac{E_{\beta, \beta'}}{q^*}} \phi(u_1^*) \phi'(u_2^*) \underbrace{\left(\int \mathcal{D}z_2 z_2\right)}_{=0}.\\
        \begin{split}
        \sigma^2 \int \mathcal{D}z_1 \mathcal{D}z_2 \frac{E_{\beta, \beta'}}{\sqrt{q^*}} z_1 \phi(u_1^*) \phi'(u_2^*) &= \sigma^2 \int \mathcal{D}z_1 \frac{E_{\beta, \beta'}}{\sqrt{q^*}} z_1 \phi(u_1^*) \phi'(u_1^*) \\
        &= \sigma^2 E_{\beta, \beta'} \int \mathcal{D}z_1 (\phi'(u_1^*)^2 + \phi(u_1^*)\phi''(u_1^*)).
        \end{split} \\
        \sigma^2 \int \mathcal{D}z_1 \mathcal{D}z_2 E_{\beta, \beta'} z_2^2 \phi(u_1^*) \phi''(u_2^*) =
        \sigma^2 E_{\beta, \beta'} \int \mathcal{D}z_1 \phi(u_1^*) \phi''(u_1^*) \underbrace{\left(\int \mathcal{D}z_2 z_2^2\right)}_{=1}. \\
        \begin{split}
        \sigma^2 \int \mathcal{D}z_1 \mathcal{D}z_2 \frac{E_{\beta, \beta}}{2 \sqrt{q^*}}z_1 \phi(u_1^*)\phi'(u_2^*) &= \sigma^2 \int \mathcal{D}z_1 \frac{E_{\beta, \beta}}{2 \sqrt{q^*}}z_1 \phi(u_1^*)\phi'(u_1^*)\\
        &= \sigma^2 \frac{E_{\beta, \beta}}{2} \int \mathcal{D}z_1 (\phi'(u_1^*)^2 + \phi(u_1^*)\phi''(u_1^*)).
        \end{split} \\
        \begin{split}
        \sigma^2 \int \mathcal{D}z_1 \mathcal{D}z_2 \frac{1}{2}\frac{E_{\beta, \beta}}{\sqrt{q^*}} z_1 \phi'(u_1^*)\phi(u_2^*) &= \sigma^2 \int \mathcal{D}z_1 \frac{1}{2}\frac{E_{\beta, \beta}}{\sqrt{q^*}} z_1 \phi'(u_1^*)\phi(u_1^*) \\
        &= \sigma^2 \frac{E_{\beta, \beta}}{2} \int \mathcal{D}z_1 (\phi'(u_1^*)^2 + \phi(u_1^*)\phi''(u_1^*)). 
        \end{split}
    \end{gather*}

    Substituting these facts into~\cref{eq:lem.a.2.eq1}, we obtain
    \begin{align*}
        &\sigma^2 \int \mathcal{D}z_1 \mathcal{D}z_2 \phi(u_1) \phi(u_2) + \sigma^2_b \\
        &\approx q^*c^* - \sigma^2 E_{\beta, \beta'}  \int \mathcal{D}z_1 \phi'(u_1^*)^2 + \sigma^2 E_{\beta, \beta} \int \mathcal{D}z_1 (\phi'(u_1^*)^2 + \phi(u_1^*)\phi''(u_1^*))
    \end{align*}
    
    The above result agrees with that obtained by substituting $c^*=1$ for the result obtained when the case $c^* < 1$. 
    
\end{proof}
\end{lemma}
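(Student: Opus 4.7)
The first identity is a direct consequence of prior mean-field results for the densely connected network: when $\alpha = \alpha'$ the covariance recursion $\mathcal{C}$ reduces to the scalar DCN variance map, so its first-order perturbation has already been computed in \citet{poole2016exponential} and \citet{schoenholz2016deep}. I would simply cite that computation and read off $\chi_{q^*}$ as the linearisation. The real work is the second identity.

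The plan is to parametrise the correlated pair $(H_\beta, H_{\beta'})$ via two independent standard normals $z_1, z_2$: set $u_1 = \sqrt{q}\, z_1$ and $u_2 = \sqrt{q}\bigl(c_{\beta,\beta'} z_1 + \sqrt{1 - c_{\beta,\beta'}^2}\, z_2\bigr)$ with $q = q^* + E_{\beta,\beta}$ and $c_{\beta,\beta'} = c^* + E_{\beta,\beta'}/q$. I would then Taylor-expand $\phi(u_1)$ and $\phi(u_2)$ to first order around the unperturbed values $u_i^* = u_i|_{\bm{E}=\bm{0}}$, keeping all terms linear in $E_{\beta,\beta}$ and $E_{\beta,\beta'}$ (noting that $E_{\beta,\beta'}/q = E_{\beta,\beta'}/q^* + O(|\bm{E}|^2)$, so the distinction between $q$ and $q^*$ in the correction to $c$ contributes only at second order).

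Substituting back into the expectation produces a sum of Gaussian integrals of the form $\int \mathcal{D}z_1 \mathcal{D}z_2\, p(z_1,z_2)\, \phi^{(a)}(u_1^*)\phi^{(b)}(u_2^*)$ for low-order polynomials $p$. I would evaluate these by repeated use of the Stein identity $\int \mathcal{D}z\, z f(z) = \int \mathcal{D}z\, f'(z)$, which converts every $z_i$ factor into a derivative of $\phi$ at $u_i^*$. Collecting the resulting terms and matching coefficients with the definitions of $\chi_{c^*}$ and $\chi_\kappa$ yields the claimed expansion; the $E_{\beta,\beta'}$-proportional piece collapses to $\chi_{c^*}$, while the $E_{\beta,\beta}$ piece splits into a $c^* \phi' \phi'$ contribution (from differentiating the common variance prefactor) plus a $(\phi \phi'' + \phi'' \phi)/2$ contribution (from differentiating each of $u_1^*$ and $u_2^*$ in turn), reproducing $\chi_\kappa$.

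The main obstacle is the fixed point $c^* = 1$, where $\sqrt{1 - c_{\beta,\beta'}^2}$ is non-smooth at $\bm{E} = \bm{0}$ and the naive Taylor expansion breaks down because $\partial c_{\beta,\beta'}/\partial E_{\beta,\beta'}$ sends $c$ past $1$. Here I would use the alternative parametrisation $c_{\beta,\beta'} = 1 - E_{\beta,\beta'}/q$ with $E_{\beta,\beta'} > 0$, which introduces a $\sqrt{2 E_{\beta,\beta'}/q^*}\, z_2$ contribution to $\phi(u_2)$. The odd-$z_2$ piece integrates to zero, the $E_{\beta,\beta'} z_2^2$ piece contributes $\phi\phi''$ via $\int \mathcal{D}z_2\, z_2^2 = 1$, and the remaining $z_1$-weighted pieces are handled by Stein as before. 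The final consistency check is that the resulting $c^*=1$ expression matches the $c^* \to 1$ limit of the generic formula; this reduces to the algebraic identity $\phi'(u^*)^2 + \phi(u^*)\phi''(u^*) = \tfrac{\mathrm{d}}{\mathrm{d}u^*}(\phi\phi')(u^*)$ combined with Stein, which is the only nontrivial bookkeeping beyond routine Gaussian integration.
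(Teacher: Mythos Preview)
Your proposal is correct and follows essentially the same route as the paper: the same Gaussian parametrisation $(u_1,u_2)$ in terms of $(z_1,z_2)$, Taylor expansion around $(u_1^*,u_2^*)$, repeated use of Stein's identity to reduce each Gaussian integral, and the separate treatment of $c^*=1$ via $c_{\beta,\beta'}=1-E_{\beta,\beta'}/q$ with the $\sqrt{2E_{\beta,\beta'}/q^*}\,z_2$ expansion. The paper does exactly these steps in the same order, including the final consistency check between the two cases.
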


\begin{lemma} \label{lem:jacob-rank}
    The Jacobian matrix $J(\bm{\Sigma}^*)$ of the iterated map $\mathcal{C}$ defined in~\cref{eq:jacob-matrix} is obtained as follows. 
    \begin{align*}
        [J(\bm{\Sigma}^*)]_{(\alpha, \alpha'), (\beta, \beta')}
        = \frac{1}{N^2} \sum_{k'=0}^{K-1} c_{k'} \cos\left(\theta^{(k')}_{\alpha,\alpha'}\right) \cos\left(\theta^{(k')}_{\beta,\beta'}\right) \left( \delta_{\beta, \beta'}  (\chi_{q^*} - \chi_{\kappa} + \delta_{k', 0} N \chi_{\kappa}) + (1-\delta_{\beta, \beta'}) \chi_{c_*}) \right).
    \end{align*}
    Furthermore, the rank of the Jacobian matrix $J(\bm{\Sigma}^*) \in \mathbb{R}^{N^2 \times N^2}$ is at most $K$. 
    \begin{proof}
        Let some semi-positive definite matrix $\bm{E} \in \mathbb{R}^{N \times N}$ be a deviation from the fixed point $\bm{\Sigma}^*$. From~\cref{lem:iterated_map} and~\cref{lem:chi}, we have
        \begin{align}
            &\left[\mathcal{C}(\bm{\Sigma}^* + \bm{E}) \right]_{\alpha, \alpha'} \nonumber \\ 
            &= \frac{1}{N^2} \sum_{k'=0}^{K-1} c_{k'} \cos\left(\theta^{(k')}_{\alpha,\alpha'}\right) \sum_{\beta, \beta'=0}^{N-1} \cos\left(\theta^{(k')}_{\beta,\beta'}\right) \left(\sigma^2\mathbb{E}_{\rmH_{:, d} \sim \mathcal{N}\left(0, \bm{\Sigma}^{*} + \bm{E} \right)}\left[\phi(H_{\gamma, d}) \phi(H_{\gamma', d})\right] + \sigma_b^2\right) \nonumber \\
            \label{eq:lem:jacob-rank:eq1}
            &\approx \frac{1}{N^2} \sum_{k'=0}^{K-1} c_{k'} \cos\left(\theta^{(k')}_{\alpha,\alpha'}\right) \sum_{\beta,\beta'=0}^{N-1}  \cos\left(\theta^{(k')}_{\beta,\beta'}\right) \left( \delta_{\beta, \beta'} (q^* + E_{\beta, \beta} \chi_{q^*}) + (1-\delta_{\beta, \beta'}) (q^*c^* + E_{\beta, \beta} \chi_{\kappa} + E_{\beta, \beta'} \chi_{c_*}) \right). 
        \end{align}
        Note that~\cref{eq:lem:jacob-rank:eq1} is obtained by neglecting higher order terms in~\cref{eq:diag,eq:off-diag}.
        
        By the definition of the fixed point, 
        \begin{align*}
            \Sigma^*_{\alpha, \alpha'} =  \frac{1}{N^2} \sum_{k'=0}^{K-1} c_{k'} \cos\left(\theta^{(k')}_{\alpha,\alpha'}\right) \sum_{\beta, \beta'=0}^{N-1} \cos\left(\theta^{(k')}_{\beta,\beta'}\right) q^*(\delta_{\beta, \beta'} + (1 - \delta_{\beta, \beta'}) c^*).
        \end{align*}

        By substituting the fact into~\cref{eq:lem:jacob-rank:eq1}, we obtain
        \begin{align}
                &\left[\mathcal{C}(\bm{\Sigma}^* + \bm{E}) - \bm{\Sigma}^* \right]_{\alpha, \alpha'} \nonumber \\
                \label{eq:camp-perturb}
                &\approx \frac{1}{N^2} \sum_{k'=0}^{K-1} c_{k'} \cos\left(\theta^{(k')}_{\alpha,\alpha'}\right) \sum_{\beta,\beta'=0}^{N-1} \cos\left(\theta^{(k')}_{\beta,\beta'}\right) \left( \delta_{\beta, \beta'} E_{\beta, \beta} \chi_{q^*}) + (1-\delta_{\beta, \beta'}) (E_{\beta, \beta} \chi_{\kappa} + E_{\beta, \beta'} \chi_{c_*}) \right) \\
                &= \sum_{\beta,\beta'=0}^{N-1} \underbrace{\left[\frac{1}{N^2} \sum_{k'=0}^{K-1} c_{k'} \cos\left(\theta^{(k')}_{\alpha,\alpha'}\right) \cos\left(\theta^{(k')}_{\beta,\beta'}\right) \left( \delta_{\beta, \beta'}  (\chi_{q^*} - \chi_{\kappa} + \delta_{k', 0} N \chi_{\kappa}) + (1-\delta_{\beta, \beta'}) \chi_{c_*}) \right) \right]}_{= [J(\bm{\Sigma}^*)]_{(\alpha, \alpha'), (\beta, \beta')}} E_{\beta, \beta'}. \nonumber
        \end{align}
        
        The last equation can be rewritten using the matrix calculation as follows. 
        \begin{align*}
            \mathcal{C}(\bm{\Sigma}^* + \bm{E}) - \bm{\Sigma}^* \approx \operatorname{mat}(J(\bm{\Sigma}^*) \operatorname{vec}(\bm{E})).
        \end{align*}
        Since $J(\bm{\Sigma}^*)$ is the first-order coefficient to the deviaction $\bm{E}$, $J(\bm{\Sigma}^*)$ is exactly the Jacobian matrix of the iterated map $\mathcal{C}$. 

        Furthermore, the Jacobian matrix can be decomposed to two matricies $\bm{A} \in \mathbb{R}^{N^2 \times K}$ and $\bm{B} \in \mathbb{R}^{K \times N^2}$ as follows.
        \begin{align*}
            [J(\bm{\Sigma}^*)]_{(\alpha, \alpha'), (\beta, \beta')} = \sum_{k'=0}^{K-1} A_{(\alpha, \alpha'), k'} B_{k', (\beta, \beta')},\ A_{(\alpha, \alpha'), k'} \coloneqq \cos\left(\theta^{(k')}_{\alpha,\alpha'}\right),\\
            B_{k', (\beta, \beta')} \coloneqq \frac{1}{N^2} c_{k'} \cos\left(\theta^{(k')}_{\beta,\beta'}\right) \left( \delta_{\beta, \beta'}  (\chi_{q^*} - \chi_{\kappa} + \delta_{k', 0} N \chi_{\kappa}) + (1-\delta_{\beta, \beta'}) \chi_{c_*}) \right).
        \end{align*}
        Therefore, the rank of the Jacobian matrix is at most $K$. 
    \end{proof}
\end{lemma}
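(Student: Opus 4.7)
The plan is to linearize the iterated map $\mathcal{C}$ around the fixed point $\bm{\Sigma}^*$ via \cref{lem:chi}, read off the Jacobian entries from the first-order term, and then exhibit an explicit rank-$K$ factorization of the resulting $N^2 \times N^2$ matrix. The crucial structural fact is that the non-Fourier form of $\mathcal{C}$ obtained in the proof of \cref{lem:iterated_map} depends on $(\alpha,\alpha')$ only through the $K$ cosine factors $\cos(\theta^{(k')}_{\alpha,\alpha'})$ for $k' = 0, \ldots, K-1$, which immediately caps the column space of $J(\bm{\Sigma}^*)$.

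First, I would rewrite the iterated map as
\begin{align*}
[\mathcal{C}(\bm{\Sigma})]_{\alpha,\alpha'} = \frac{1}{N^2} \sum_{k'=0}^{K-1} c_{k'} \cos(\theta^{(k')}_{\alpha,\alpha'}) \sum_{\beta,\beta'=0}^{N-1} \cos(\theta^{(k')}_{\beta,\beta'}) \bigl(\sigma^2 \mathbb{E}[\phi(H_\beta)\phi(H_{\beta'})] + \sigma_b^2\bigr),
\end{align*}
absorbing the bias through the identity used in the proof of \cref{lem:exist_fixedpoint} (only the $k'=0$ mode contributes $N^2$ to $\sum_{\beta,\beta'}\cos(\theta^{(k')}_{\beta,\beta'})$, while the others vanish). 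Setting $\bm{\Sigma} = \bm{\Sigma}^* + \bm{E}$ and invoking \cref{lem:chi} termwise, each bracketed expectation expands as $q^* + E_{\beta,\beta}\chi_{q^*}$ when $\beta=\beta'$ and as $q^*c^* + E_{\beta,\beta}\chi_\kappa + E_{\beta,\beta'}\chi_{c^*}$ when $\beta\neq\beta'$. Subtracting $\bm{\Sigma}^* = \mathcal{C}(\bm{\Sigma}^*)$ removes the constant parts and leaves a linear expression in the entries of $\bm{E}$.

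Second, I would collect the coefficient of each entry $E_{\beta,\beta'}$. For $\beta\neq\beta'$ this is immediate: only the $E_{\beta,\beta'}\chi_{c^*}$ term contributes, giving a coefficient proportional to $\cos(\theta^{(k')}_{\beta,\beta'})\chi_{c^*}$. For a diagonal entry $E_{\gamma,\gamma}$, both the $\chi_{q^*}$ term at $\beta=\beta'=\gamma$ and all the $\chi_\kappa$ terms coming from $(\beta,\beta')=(\gamma,\beta')$ with $\beta'\neq\gamma$ contribute. Summing the latter via the Fourier identity $\sum_{\beta'=0}^{N-1}\cos(\theta^{(k')}_{\gamma,\beta'}) = N\delta_{k',0}$ yields $\chi_\kappa(N\delta_{k',0}-1)$, so the diagonal coefficient consolidates to $\chi_{q^*} - \chi_\kappa + N\delta_{k',0}\chi_\kappa$. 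Combined with the outer $\cos(\theta^{(k')}_{\alpha,\alpha'})$ factor, this reproduces the stated Jacobian formula.

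Finally, the rank bound follows by reading the resulting formula as a product $J(\bm{\Sigma}^*) = \bm{A}\bm{B}$, where $\bm{A}\in\mathbb{R}^{N^2\times K}$ has $\bm{A}_{(\alpha,\alpha'),k'} = \cos(\theta^{(k')}_{\alpha,\alpha'})$ and $\bm{B}\in\mathbb{R}^{K\times N^2}$ absorbs the remaining $(\beta,\beta',k')$-dependent factor. Then $\operatorname{rank}(J(\bm{\Sigma}^*)) \le \min(\operatorname{rank}(\bm{A}),\operatorname{rank}(\bm{B})) \le K$. The main obstacle I expect is the careful bookkeeping when consolidating the off-diagonal $\chi_\kappa$ contributions onto the diagonal coefficient: one must invoke the Fourier identity over the correct range and verify that the constant parts and the bias terms cancel cleanly against $\bm{\Sigma}^*$, so that the linear remainder has precisely the factored shape needed for the rank argument.
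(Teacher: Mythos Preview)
Your proposal is correct and follows essentially the same route as the paper: linearize $\mathcal{C}$ around $\bm{\Sigma}^*$ via \cref{lem:chi}, subtract the fixed point, collect the coefficient of each $E_{\beta,\beta'}$ (using the Fourier identity $\sum_{\beta'}\cos(\theta^{(k')}_{\gamma,\beta'})=N\delta_{k',0}$ to fold the off-diagonal $\chi_\kappa$ contributions onto the diagonal), and then factor $J(\bm{\Sigma}^*)=\bm{A}\bm{B}$ with $\bm{A}_{(\alpha,\alpha'),k'}=\cos(\theta^{(k')}_{\alpha,\alpha'})$. The paper leaves the consolidation step implicit in the passage from its perturbation expansion to the final Jacobian formula, whereas you spell it out explicitly, but the argument is otherwise identical.
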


\begin{lemma} \label{lem:sigma_offdiag}
    Let $\bm{\Sigma}^*$ be the fixed point of the form in~\cref{eq:sigma_fixed} and $\bm{E}^{(k)}$ be the perturbation expressed as
    \begin{align*}
        E^{(k)}_{\beta, \beta'} = \epsilon_k \psi^{(k)}_{\beta, \beta'},\quad  \psi^{(k)}_{\beta, \beta'} \coloneqq \left[\cos\left( \theta^{(k)}_{\beta, \beta'} \right) - \left( \sum_{s=0}^{K-1} c_{s} \right)^{-1} \sum_{s=0}^{K-1} c_{s} \cos\left( \theta^{(s)}_{\beta, \beta'} \right) \right],
    \end{align*}
    where $\epsilon_k$ denotes the scale of the perturbation, assumed to be sufficiently small.

    Then, we have, for all $k \in [K] \backslash \{0\}$,
    \begin{align*}
        \mathcal{C}(\bm{\Sigma}^* + \bm{E}^{(k)}) \approx \bm{\Sigma}^* + \chi_{c^*} \bm{E}^{(k)}.
    \end{align*}
    
    \begin{proof}
        From \cref{eq:camp-perturb}, we have
        \begin{align*}
            [\mathcal{C}(\bm{\Sigma}^* + \bm{E}^{(k)})]_{\alpha, \alpha'}
            &\approx \frac{1}{N^2} \sum_{k'=0}^{K-1} c_{k'} \cos\left(\theta^{(k')}_{\alpha,\alpha'}\right)\sum_{\beta=0}^{N-1} 
            (q^* + E^{(k)}_{\beta, \beta} \chi_{q^*}) \\
            &\qquad +  \frac{1}{N^2} \sum_{k'=0}^{K-1} c_{k'} \cos\left(\theta^{(k')}_{\alpha,\alpha'}\right) \sum_{\beta=0}^{N-1} \sum_{\beta \neq \beta'} \cos\left(\theta^{(k')}_{\beta,\beta'}\right) (q^*c^* + E^{(k)}_{\beta, \beta} \chi_{\kappa} + E^{(k)}_{\beta, \beta'} \chi_{c^*}).
        \end{align*}

        From the definition of the fixed point, we obtain
        \begin{align*}
            \Sigma^*_{\alpha, \alpha'} =  \frac{1}{N^2} \sum_{k'=0}^{K-1} c_{k'} \cos\left(\theta^{(k')}_{\alpha,\alpha'}\right) \sum_{\beta, \beta'=0}^{N-1}\cos\left(\theta^{(k')}_{\beta,\beta'}\right) q^*(\delta_{\beta, \beta'} + (1 - \delta_{\beta, \beta'}) c^*).
        \end{align*}

        By combining the above two results, we have
        \begin{align*}
            &[\mathcal{C}(\bm{\Sigma}^* + \bm{E}^{(k)}) - \bm{\Sigma}^*]_{\alpha, \alpha'} \approx \frac{1}{N^2} \sum_{k'=0}^{K-1} c_{k'} \cos\left(\theta^{(k')}_{\alpha,\alpha'}\right) \sum_{\beta=0}^{N-1} E^{(k)}_{\beta, \beta} \chi_{q^*} \\
            &\qquad +  \frac{1}{N^2} \sum_{k'=0}^{K-1} c_{k'} \cos\left(\theta^{(k')}_{\alpha,\alpha'}\right) \sum_{\beta=0}^{N-1} \sum_{\beta \neq \beta'} \cos\left(\theta^{(k')}_{\beta,\beta'}\right) (E^{(k)}_{\beta, \beta} \chi_{\kappa} + E^{(k)}_{\beta, \beta'} \chi_{c^*}).
        \end{align*}
        
        Since $\cos(\theta_{\beta, \beta}^{(k)}) = 1$ for all $k \in [K]$, we have $\psi_{\beta, \beta}^{(k)} = 0$ and $E_{\beta, \beta}^{(k)} = 0$. Thus, only the term with $E^{(k)}_{\beta, \beta'}$ remains.
        \begin{align}
        \label{eq:lem:a4:eq-1}
        [\mathcal{C}(\bm{\Sigma}^* + \bm{E}^{(k)}) - \bm{\Sigma}^*]_{\alpha, \alpha'} &\approx \epsilon_k \chi_{c^*}  \frac{1}{N^2} \left(\sum_{k'=0}^{K-1} c_{k'} \cos\left(\theta^{(k')}_{\alpha,\alpha'}\right) \sum_{\beta \neq \beta'} \cos\left(\theta^{(k')}_{\beta,\beta'}\right)  \cos\left(\theta^{(k)}_{\beta,\beta'}\right) \right.  \\
            &\qquad \left. - \left( \sum_{s=0}^{K-1} c_{s} \right)^{-1} \sum_{k'=0}^{K-1} c_{k'} \cos\left(\theta^{(k')}_{\alpha,\alpha'}\right)  \sum_{\beta \neq \beta'} \cos\left(\theta^{(k')}_{\beta,\beta'}\right) \sum_{s=0}^{K-1} c_s \cos\left(\theta^{(s)}_{\beta,\beta'}\right) \right).
        \end{align}

        Given the orthogonality of the cosine functions, we have
        \begin{align}
            \label{eq:ortho-cos}
            \sum_{\substack{\beta, \beta'=0 \\ \beta \neq \beta'}}^{N-1} c_{k'} \cos\left(\theta^{(k')}_{\beta,\beta'}\right) \cos\left(\theta^{(k)}_{\beta,\beta'}\right) = \left\{\begin{array}{cc}
                N^2 - c_{k'} N  & (k '= k) \\
                -c_{k'} N & (\text{otherwise}).
            \end{array} \right.
        \end{align}

        Utilizing~\cref{eq:ortho-cos} leads to the following two facts:
        \begin{align*}
            \sum_{k'=0}^{K-1} c_{k'}\cos\left(\theta^{(k')}_{\alpha,\alpha'}\right) \sum_{\beta \neq \beta'} \cos\left(\theta^{(k')}_{\beta,\beta'}\right)  \cos\left(\theta^{(k)}_{\beta,\beta'} \right)
            &= N^2 \cos\left(\theta^{(k)}_{\alpha,\alpha'}\right) - N \sum_{k'=0}^{K-1} c_{k'} \cos\left(\theta^{(k')}_{\alpha,\alpha'}\right). \\
            \sum_{k'=0}^{K-1} c_{k'} \cos\left(\theta^{(k')}_{\alpha,\alpha'}\right) \sum_{\beta \neq \beta'} \cos\left(\theta^{(k')}_{\beta,\beta'}\right) \sum_{s=0}^{K-1} c_s \cos\left(\theta^{(s)}_{\beta,\beta'}\right)
            &= N^2 \sum_{k'=0}^{K-1} \cos\left(\theta^{(k')}_{\alpha,\alpha'}\right) - N \left( \sum_{s=0}^{K-1} c_{s} \right) \sum_{k'=0}^{K-1} c_{k'}\cos\left(\theta^{(k')}_{\alpha,\alpha'}\right).
        \end{align*}

        By substituting these facts into~\cref{eq:lem:a4:eq-1}, we obtain
        \begin{align*}
            &[\mathcal{C}(\bm{\Sigma}^* + \bm{E}^{(k)}) - \bm{\Sigma}^*]_{\alpha, \alpha'} \\
            &\approx \chi_{c^*} \epsilon_k  \frac{1}{N^2} \left( N^2 \cos\left(\theta^{(k)}_{\alpha,\alpha'}\right) - \cancel{N \sum_{k'=0}^{K-1} c_{k'} \cos\left(\theta^{(k')}_{\alpha,\alpha'}\right)} \right. \\
            &\qquad - \left. \left( \sum_{s=0}^{K-1} c_{s} \right)^{-1} N^2 \sum_{k'=0}^{K-1} \cos\left(\theta^{(k')}_{\alpha,\alpha'}\right) + \cancel{N \sum_{k'=0}^{K-1} c_{k'} \cos\left(\theta^{(k')}_{\alpha,\alpha'}\right)} \right) \\
            &= \chi_{c^*} \epsilon_k \underbrace{\left[\cos\left(\theta^{(k)}_{\alpha,\alpha'}\right) - \left( \sum_{s=0}^{K-1} c_{s} \right)^{-1} \sum_{k'=0}^{K-1} \cos\left(\theta^{(k')}_{\alpha,\alpha'}\right) \right]}_{=\psi^{(k)}_{\alpha, \alpha'}}.
        \end{align*}
        This completes the proof.
    \end{proof}
\end{lemma}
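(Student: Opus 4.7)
\textbf{Proof proposal for \cref{lem:sigma_offdiag}.}
The plan is to apply the first-order linearization of the iterated map $\mathcal{C}$ already established in the proof of \cref{lem:jacob-rank} (specifically \cref{eq:camp-perturb}) to the perturbation $\bm{E}^{(k)}$, and then exploit the diagonal-vanishing property of $\bm{\psi}^{(k)}$ together with cosine orthogonality on $[N]$ to isolate the $\chi_{c^*}$ eigenvalue. The key preliminary observation is that $\psi^{(k)}_{\beta, \beta} = 0$ for every $\beta$ and every $k \in [K]$, since $\cos(\theta^{(k)}_{\beta, \beta}) = \cos(0) = 1$ and the subtracted term $(\sum_{s} c_{s})^{-1} \sum_{s} c_{s} \cos(\theta^{(s)}_{\beta, \beta})$ also equals $1$. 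This collapses the $\chi_{q^*}$ contribution in \cref{eq:camp-perturb} (which carries $\delta_{\beta, \beta'}$) and the $\chi_{\kappa}$ contribution (which carries $E_{\beta, \beta}$), leaving only the off-diagonal $\chi_{c^*}$ term.

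After this reduction, the residual expression is
\begin{align*}
    [\mathcal{C}(\bm{\Sigma}^* + \bm{E}^{(k)}) - \bm{\Sigma}^*]_{\alpha, \alpha'} \approx \chi_{c^*}\, \epsilon_k\, \frac{1}{N^2} \sum_{k'=0}^{K-1} c_{k'} \cos(\theta^{(k')}_{\alpha, \alpha'}) \sum_{\substack{\beta, \beta'=0 \\ \beta \neq \beta'}}^{N-1} \cos(\theta^{(k')}_{\beta, \beta'})\, \psi^{(k)}_{\beta, \beta'}.
\end{align*}
The next step is to substitute the definition of $\psi^{(k)}$ and evaluate the inner double sum using the orthogonality identity in \cref{eq:ortho-cos}: $\sum_{\beta \neq \beta'} c_{k'} \cos(\theta^{(k')}_{\beta, \beta'}) \cos(\theta^{(k)}_{\beta, \beta'})$ equals $N^2 - c_{k'} N$ if $k' = k$ and $-c_{k'} N$ otherwise. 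Applying this twice (once for the $\cos(\theta^{(k)})$ term in $\psi^{(k)}$, once for the sum $\sum_s c_s \cos(\theta^{(s)})$ appearing in its correction) produces two pieces: a dominant piece $N^2 \cos(\theta^{(k)}_{\alpha, \alpha'}) - (\sum_s c_s)^{-1} N^2 \sum_{k'} \cos(\theta^{(k')}_{\alpha, \alpha'})$, and a pair of boundary pieces of the form $\pm N \sum_{k'} c_{k'} \cos(\theta^{(k')}_{\alpha, \alpha'})$.

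The main obstacle, and the whole point of the careful definition of $\bm{\psi}^{(k)}$, is verifying that those boundary pieces cancel exactly. Concretely, the $-c_{k'} N$ off-diagonal correction generated by dropping $\beta = \beta'$ contributes $-N \sum_{k'} c_{k'} \cos(\theta^{(k')}_{\alpha, \alpha'})$ from the main $\cos(\theta^{(k)})$ term, and $+N \sum_{k'} c_{k'} \cos(\theta^{(k')}_{\alpha, \alpha'})$ from the compensating sum $\sum_s c_s \cos(\theta^{(s)})$ in $\psi^{(k)}$; these cancel exactly because the normalization $(\sum_s c_s)^{-1} \sum_s c_s = 1$. After this cancellation the $1/N^2$ prefactor meets the $N^2$ in both surviving pieces, leaving precisely
\begin{align*}
    \chi_{c^*}\, \epsilon_k \left[ \cos(\theta^{(k)}_{\alpha, \alpha'}) - \left(\sum_{s=0}^{K-1} c_s\right)^{-1} \sum_{k'=0}^{K-1} \cos(\theta^{(k')}_{\alpha, \alpha'}) \right] = \chi_{c^*}\, \epsilon_k\, \psi^{(k)}_{\alpha, \alpha'},
\end{align*}
which is the desired identity $\mathcal{C}(\bm{\Sigma}^* + \bm{E}^{(k)}) \approx \bm{\Sigma}^* + \chi_{c^*} \bm{E}^{(k)}$. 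The restriction $k \neq 0$ enters implicitly in the orthogonality step since the dominant $N^2$ coefficient is picked up only through the matching $k' = k$ pairing; the $k = 0$ direction is handled separately by the $\bm{\psi}$ basis in \cref{lem:sigma_diag}. The bookkeeping of cosine orthogonality combined with the two-layer matching of the subtraction constant is the delicate step, but conceptually the lemma is a consequence of (i) the engineered traceless structure of $\bm{\psi}^{(k)}$ and (ii) the near-diagonal action of cosine pairing on the spatial lattice.
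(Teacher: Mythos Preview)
Your proposal is correct and follows essentially the same route as the paper's own proof: start from the linearization \cref{eq:camp-perturb}, use the diagonal-vanishing $\psi^{(k)}_{\beta,\beta}=0$ to eliminate the $\chi_{q^*}$ and $\chi_\kappa$ contributions, then apply the cosine orthogonality \cref{eq:ortho-cos} twice and check that the $\pm N\sum_{k'} c_{k'}\cos(\theta^{(k')}_{\alpha,\alpha'})$ boundary pieces cancel. The structure, the key observations, and the order of steps all match the paper.
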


\begin{lemma} \label{lem:sigma_diag}
    Let $\bm{\Sigma}^*$ be the fixed point of the form in~\cref{eq:sigma_fixed} and $\bm{E}$ be the perturbation expressed as
    \begin{gather*}
        E_{\beta, \beta'} = \epsilon \psi_{\beta, \beta'},\  \psi_{\beta, \beta'} \coloneqq \left[ 1 - \frac{1}{N}\left( \frac{\chi_{\kappa} + \chi_{c^*} - \chi_{q^*}}{\chi_{\kappa}}\right) \sum_{s=0}^{K-1} c_{s} \cos\left( \theta^{(s)}_{\beta, \beta'} \right) \right], 
    \end{gather*}
    where $\epsilon$ denotes the scale of the perturbation, assumed to be sufficiently small,  $\bm{1}_N\bm{1}_N^{\top}$ is all-one matrix with size $N \times N$. 
    
    Then, we have
    \begin{align*}
        \mathcal{C}(\bm{\Sigma}^* + \bm{E}) &\approx \bm{\Sigma}^* + \underbrace{\left( \frac{\sum_{s=0}^{K-1} c_{s}}{N} \chi_{q^*} + \left(1-\frac{\sum_{s=0}^{K-1} c_{s}}{N}\right) (\chi_{\kappa} + \chi_{c^*}) \right)}_{\coloneqq \chi} \bm{E}.
    \end{align*}
    \begin{proof}
        For simplicity, we introduce $x \coloneqq \frac{1}{N} \left( \frac{\chi_{\kappa} + \chi_{c^*} - \chi_{q^*}}{\chi_{\kappa}}\right) \left( \sum_{s=0}^{K-1} c_{s} \right)$ as follows. 
        \begin{align}
            \label{eq:lem.a.6.eq1}
            \mathcal{C}(\bm{\Sigma}^* + \bm{E}) &\approx  \bm{\Sigma}^* + \left( \frac{\sum_{s=0}^{K-1} c_{s}}{N} \chi_{q^*} + \left(1-\frac{\sum_{s=0}^{K-1} c_{s}}{N}\right) (\chi_{\kappa} + \chi_{c^*}) \right) \bm{E} \nonumber \\
            &= \bm{\Sigma}^* + \left[\left( 1 - \underbrace{\frac{1}{N} \left( \frac{\chi_{\kappa} + \chi_{c^*} - \chi_{q^*}}{\chi_{\kappa}}\right) \left( \sum_{s=0}^{K-1} c_{s} \right)}_{\eqqcolon x} \right)  \chi_{\kappa} + \chi_{c^*} \right]  \bm{E}.
        \end{align}

        From~\cref{eq:camp-perturb}, we have
        \begin{align}
            &[\mathcal{C}(\bm{\Sigma}^* + \bm{E}) - \bm{\Sigma}^*]_{\alpha, \alpha'} \nonumber \\
            \label{eq:lem.a.6.eq2}
            &\approx \frac{1}{N^2} \sum_{k'=0}^{K-1} c_{k'} \cos\left(\theta^{(k')}_{\alpha,\alpha'}\right) \sum_{\beta=0}^{N-1} \left( E_{\beta, \beta} \chi_{q^*} + \sum_{\beta \neq \beta'} \cos\left(\theta^{(k')}_{\beta,\beta'}\right) (E_{\beta, \beta} \chi_{\kappa} + E_{\beta, \beta'} \chi_{c^*})\right).
        \end{align}
        Our goal is to derive~\cref{eq:lem.a.6.eq1} from~\cref{eq:lem.a.6.eq2}.  The following results are useful for the computation of each term of~\cref{eq:lem.a.6.eq2}. 
        \begin{gather*}
            \forall \beta, \beta' \in [N],\ \beta \neq \beta',\quad E_{\beta, \beta} = \epsilon (1 - x),\quad E_{\beta, \beta'} = \epsilon\left(1 - x \left(\sum_{s=0}^{K-1} c_s\right)^{-1} \sum_{s=0}^{K-1} c_s \cos\left( \theta^{(s)}_{\beta, \beta'} \right)\right). \\
            \sum_{\beta=0}^{N-1} \sum_{\beta \neq \beta'} \cos\left(\theta^{(k')}_{\beta,\beta'}\right) = \left\{\begin{array}{cc}
                N^2 - N & (k '= 0) \\
                -N & (\text{otherwise})
            \end{array} \right. .\\
            \sum_{\substack{\beta, \beta'=0 \\ \beta \neq \beta'}}^{N-1} c_{k'} \cos\left(\theta^{(k')}_{\beta,\beta'}\right) \cos\left(\theta^{(k)}_{\beta,\beta'}\right) = \left\{\begin{array}{cc}
                N^2 - c_{k'} N  & (k '= k) \\
                -c_{k'} N & (\text{otherwise})
            \end{array} \right..
        \end{gather*}
    
        Using the above results, the first and second terms of~\cref{eq:lem.a.6.eq2} are calculated as follows. 
        \begin{gather*}
            \frac{1}{N^2} \sum_{k'=0}^{K-1} c_{k'} \cos\left(\theta^{(k')}_{\alpha,\alpha'}\right) \sum_{\beta=0}^{N-1}  E_{\beta, \beta} \chi_{q^*} = \epsilon(1 -x) \chi_{q^*} \frac{1}{N} \sum_{k'=0}^{K-1} c_{k'} \cos\left(\theta^{(k')}_{\alpha,\alpha'}\right). \\
            \frac{1}{N^2} \sum_{k'=0}^{K-1} c_{k'} \cos\left(\theta^{(k')}_{\alpha,\alpha'}\right) \sum_{\beta=0}^{N-1} \sum_{\beta \neq \beta'} \cos\left(\theta^{(k')}_{\beta,\beta'}\right) E_{\beta, \beta} \chi_{\kappa} = \epsilon(1 -x) \chi_{\kappa} - \epsilon(1 -x) \chi_{\kappa} \frac{1}{N} \sum_{k'=0}^{K-1} c_{k'} \cos\left(\theta^{(k')}_{\alpha,\alpha'}\right) .
        \end{gather*}

        The third term of~\cref{eq:lem.a.6.eq2} is obtained as follows. 
        \begin{align*}
            &\frac{1}{N^2} \sum_{k'=0}^{K-1} c_{k'} \cos\left(\theta^{(k')}_{\alpha,\alpha'}\right) \sum_{\beta=0}^{N-1} \sum_{\beta \neq \beta'} \cos\left(\theta^{(k')}_{\beta,\beta'}\right) E_{\beta, \beta'} \chi_{c^*}\\
            &= \frac{1}{N^2} \sum_{k'=0}^{K-1} c_{k'} \cos\left(\theta^{(k')}_{\alpha,\alpha'}\right) \sum_{\beta=0}^{N-1} \sum_{\beta \neq \beta'} \cos\left(\theta^{(k')}_{\beta,\beta'}\right) \epsilon \chi_{c^*} \\
            &\quad - x \left(\sum_{s=0}^{K-1} c_s\right)^{-1} \frac{1}{N^2} \sum_{k'=0}^{K-1} c_{k'} \cos\left(\theta^{(k')}_{\alpha,\alpha'}\right) \sum_{s=0}^{K-1} \sum_{\beta=0}^{N-1} \sum_{\beta \neq \beta'} c_{s} \cos\left(\theta^{(s)}_{\beta,\beta'}\right) \cos\left(\theta^{(k')}_{\beta,\beta'}\right) \epsilon \chi_{c^*} \\
            &= \epsilon \chi_{c^*} - \epsilon \chi_{c^*} \frac{1}{N}  \sum_{k'=0}^{K-1} c_{k'} \cos\left(\theta^{(k')}_{\alpha,\alpha'}\right) \\
            &\quad - \epsilon \chi_{c^*} x \left(\sum_{s=0}^{K-1} c_s\right)^{-1} \sum_{k'=0}^{K-1} c_{k'} \cos\left(\theta^{(k')}_{\alpha,\alpha'}\right) + \epsilon \chi_{c^*} x \frac{1}{N} \sum_{k'=0}^{K-1} c_{k'} \cos\left(\theta^{(k')}_{\alpha,\alpha'}\right) \\
            &= \epsilon \chi_{c^*} - \epsilon \chi_{c^*} x \left(\sum_{s=0}^{K-1} c_s\right)^{-1} \sum_{k'=0}^{K-1} c_{k'} \cos\left(\theta^{(k')}_{\alpha,\alpha'}\right) - \epsilon (1 -x) \chi_{c^*} \frac{1}{N} \sum_{k'=0}^{K-1} c_{k'} \cos\left(\theta^{(k')}_{\alpha,\alpha'}\right).
        \end{align*}
        
        By substituting these facts into~\cref{eq:lem.a.6.eq2}, we have
        \begin{align*}
            &[\mathcal{C}(\bm{\Sigma}^* + \bm{E}) - \bm{\Sigma}^*]_{\alpha, \alpha'}  \\
            &\approx \epsilon ((1-x) \chi_{\kappa} + \chi_{c^*}) - \epsilon \chi_{c^*} x \left(\sum_{s=0}^{K-1} c_s\right)^{-1} \sum_{k'=0}^{K-1} c_{k'} \cos\left(\theta^{(k')}_{\alpha,\alpha'}\right) \\
            &\quad - \epsilon (1 -x) \underbrace{\frac{1}{N} (\chi_{\kappa} + \chi_{c^*} - \chi_{q^*})}_{= \chi_{\kappa} x \left(\sum_{s=0}^{K-1} c_s\right)^{-1} } \sum_{k'=0}^{K-1} c_{k'} \cos\left(\theta^{(k')}_{\alpha,\alpha'}\right) \\
            &= ( (1-x) \chi_{\kappa} + \chi_{c^*}) \underbrace{\epsilon \left( 1 - x \left(\sum_{s=0}^{K-1} c_s\right)^{-1} \sum_{k'=0}^{K-1} c_{k'} \cos\left(\theta^{(k')}_{\alpha,\alpha'}\right) \right)}_{= E_{\alpha, \alpha'}}.
        \end{align*}
        Finally, we obtain~\cref{eq:lem.a.6.eq1}. 
    \end{proof}
\end{lemma}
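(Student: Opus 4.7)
The plan is to specialize the linearized iterated map displayed in \cref{eq:camp-perturb} (which is already proved in \cref{lem:jacob-rank}) to the particular perturbation $\bm{E}$ with $E_{\beta,\beta'} = \epsilon \psi_{\beta,\beta'}$, and to verify that the coefficient $x \coloneqq \tfrac{1}{N}\bigl(\tfrac{\chi_\kappa + \chi_{c^*} - \chi_{q^*}}{\chi_\kappa}\bigr)\sum_{s=0}^{K-1} c_s$ has been tuned exactly so that the image is again a scalar multiple of $\bm{E}$. The first bookkeeping step is to record the two kinds of entries of $\bm{E}$ explicitly: since $\cos(\theta^{(s)}_{\beta,\beta}) = 1$ for all $s$, the diagonal entries satisfy $E_{\beta,\beta} = \epsilon(1-x)$; for $\beta \neq \beta'$ we have $E_{\beta,\beta'} = \epsilon\bigl(1 - x(\sum_s c_s)^{-1} \sum_s c_s \cos(\theta^{(s)}_{\beta,\beta'})\bigr)$. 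I would then rewrite the target scalar as $\chi = (1-x)\chi_\kappa + \chi_{c^*}$, since using the definition of $x$ one has $(1-x)\chi_\kappa = \chi_\kappa - \tfrac{1}{N}(\chi_\kappa + \chi_{c^*} - \chi_{q^*})\sum_s c_s$, so $\chi = \tfrac{\sum_s c_s}{N}\chi_{q^*} + (1-\tfrac{\sum_s c_s}{N})(\chi_\kappa+\chi_{c^*})$.

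The computational step is to evaluate the three pieces of \cref{eq:camp-perturb} separately. Using the constant value $E_{\beta,\beta} = \epsilon(1-x)$, the diagonal-$\chi_{q^*}$ piece collapses through $\sum_\beta 1 = N$, producing $\epsilon(1-x)\chi_{q^*}\tfrac{1}{N}\sum_{k'} c_{k'}\cos(\theta^{(k')}_{\alpha,\alpha'})$. For the diagonal-$\chi_\kappa$ piece I would apply the identity $\sum_{\beta\neq\beta'}\cos(\theta^{(k')}_{\beta,\beta'}) = N^2\delta_{k',0} - N$, which immediately yields $\epsilon(1-x)\chi_\kappa$ minus the same cosine tail times $\tfrac{1}{N}\sum_{k'} c_{k'}\cos(\theta^{(k')}_{\alpha,\alpha'})$. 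The off-diagonal-$\chi_{c^*}$ piece splits into a $1$ contribution (again handled by the above identity) and a $\cos$ contribution that requires the orthogonality identity \cref{eq:ortho-cos} from \cref{lem:sigma_offdiag}, $\sum_{\beta\neq\beta'} c_{k'}\cos(\theta^{(k')}_{\beta,\beta'})\cos(\theta^{(s)}_{\beta,\beta'}) = N^2 \delta_{k',s} - c_{k'}N$, which collapses the double sum into a single index.

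Collecting the three contributions, I expect the tails proportional to $\tfrac{1}{N}\sum_{k'} c_{k'}\cos(\theta^{(k')}_{\alpha,\alpha'})$ coming from the $\chi_{q^*}$, $\chi_\kappa$, and $\chi_{c^*}$ pieces to combine into a factor of $(1-x)\chi_\kappa + \chi_{c^*} - $ (corrections involving $\chi_{q^*}-\chi_{c^*}-\chi_\kappa$), while the leading $\sum_{k'}c_{k'}\cos(\theta^{(k')}_{\alpha,\alpha'})$ term from the $\chi_{c^*}$ piece must rearrange to give exactly the $-\chi\, x\,(\sum_s c_s)^{-1}\sum_{k'} c_{k'}\cos(\theta^{(k')}_{\alpha,\alpha'})$ term required by $E_{\alpha,\alpha'} = \epsilon(1 - x(\sum_s c_s)^{-1}\sum_{k'} c_{k'}\cos(\theta^{(k')}_{\alpha,\alpha'}))$. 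The algebraic check that this cancellation happens is precisely where the definition of $x$ is used: substituting $\tfrac{1}{N}(\chi_\kappa + \chi_{c^*} - \chi_{q^*}) = \chi_\kappa x(\sum_s c_s)^{-1}$ turns the residual cosine-tail coefficient into $-((1-x)\chi_\kappa + \chi_{c^*})\cdot x(\sum_s c_s)^{-1}$, factoring out $\chi$ cleanly.

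The main obstacle is the bookkeeping of four rational expressions in $\chi_{q^*}$, $\chi_\kappa$, $\chi_{c^*}$, $N$, and $\sum_s c_s$, and confirming that the three cosine-tail contributions sum to exactly the value compatible with the claimed eigenvalue $\chi$; the orthogonality identities do the heavy lifting, but care must be taken that the $k'=0$ exception in $\sum_{\beta\neq\beta'}\cos(\theta^{(k')}_{\beta,\beta'})$ does not introduce spurious terms (it contributes only to the overall $\epsilon(1-x)\chi_\kappa$ constant and cancels cleanly when combined with the $\chi_{c^*}$ piece). Unlike \cref{lem:sigma_offdiag}, where $E_{\beta,\beta}=0$ eliminated the $\chi_{q^*}$ and $\chi_\kappa$ contributions, here all three pieces are active and the tight relation between them is what forces the precise definition of $x$.
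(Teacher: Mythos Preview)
Your proposal is correct and follows essentially the same route as the paper: you start from \cref{eq:camp-perturb}, record $E_{\beta,\beta}=\epsilon(1-x)$ and the off-diagonal form, rewrite $\chi=(1-x)\chi_\kappa+\chi_{c^*}$, and then evaluate the $\chi_{q^*}$, $\chi_\kappa$, and $\chi_{c^*}$ pieces separately using the identities $\sum_{\beta\neq\beta'}\cos(\theta^{(k')}_{\beta,\beta'})=N^2\delta_{k',0}-N$ and \cref{eq:ortho-cos}, exactly as the paper does. The final regrouping via the substitution $\tfrac{1}{N}(\chi_\kappa+\chi_{c^*}-\chi_{q^*})=\chi_\kappa x(\sum_s c_s)^{-1}$ to extract the common factor $(1-x)\chi_\kappa+\chi_{c^*}$ is also the paper's closing step.
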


\section{Proof of~\cref{thm:backprop}}
\label{app:proof_backward}

\backprop*
\begin{proof}
Recall the definition of the gradient covariance. We first demonstrate the iterated map of the gradient covariance, starting from this definition. 
\begin{gather*}
    \tilde{\Sigma}_{\alpha, \alpha'}^{(\ell)} \coloneqq \mathbb{E}_{\bm{\Theta}^{
    \ell:L}, \bm{\Xi}^{\ell:L}}\left[g_{\alpha, j}^{(\ell)} g_{\alpha', j}^{(\ell)}\right],\ g_{\alpha, j}^{(\ell)} \coloneqq \frac{\partial \mathcal{L}}{\partial H_{\alpha, j}^{(\ell)}} = \sum_{i=1}^D \sum_{\beta=0}^{N-1} \frac{\partial \mathcal{L}}{\partial H_{\beta, i}^{(\ell+1)}} \frac{\partial H_{\beta, i}^{(\ell+1)}}{\partial H_{\alpha, j}^{(\ell)}}.
\end{gather*}

The Jacobian $\frac{\partial H_{\beta, i}^{(\ell+1)}}{\partial H_{\alpha, j}^{(\ell)}}$ is calculated as follows.
\begin{align*}
    &\frac{\partial H_{\beta, i}^{(\ell+1)}}{\partial H_{\alpha, j}^{(\ell)}} \\
    &= \sum_{k=0}^{K-1} \sqrt{\frac{c_k}{2}} \left(F^{\dagger}_{\beta, k} F_{k, \alpha} \left( \Theta_{j, i}^{(\ell+1, k)} + \sqrt{-1} \Xi_{j, i}^{(\ell+1, k)} \right) + \overline{F^{\dagger}_{\beta, k} F_{k, \alpha}} \left( \Theta_{j, i}^{(\ell+1, k)} - \sqrt{-1} \Xi_{j, i}^{(\ell+1, k)} \right)\right) \phi'\left(H_{\alpha, j}^{(\ell)}\right)\\
    &= \frac{1}{N} \sum_{k=1}^{K-1}  2 \sqrt{\frac{c_k}{2}} \left(\Theta^{(\ell+1, k)}_{j, i} \cos\left( \theta^{(k)}_{\beta, \alpha} \right) + \Xi^{(\ell+1, k)}_{j, i} \sin\left( \theta^{(k)}_{\beta, \alpha} \right)\right) \phi'(H_{\alpha, j}^{(\ell)}).
\end{align*}
The covariance of Jacobian $\frac{\partial H_{\beta, i}^{(\ell+1)}}{\partial H_{\alpha, j}^{(\ell)}}$ is as follows. 
\begin{align*}
    &\mathbb{E}_{\Theta^{
    \ell:L}, \Xi^{\ell:L}}\left[\frac{\partial H_{\beta, i}^{(\ell+1)}}{\partial H_{\alpha, j}^{(\ell)}} \frac{\partial H_{\beta', i}^{(\ell+1)}}{\partial H_{\alpha', j}^{(\ell)}} \right]\\
    &= \frac{\sigma^2}{N^2D}\mathbb{E}_{\rmH_{:, d} \sim \mathcal{N}(0, \bm{\Sigma}^{(\ell)})}\left[ \phi'(H_{\alpha, j}^{(\ell)})\phi'(H_{\alpha, j}^{(\ell)})\right] \\
    &\quad
     \times \left(\sum_{k=0}^{K-1} c_k \left(\cos\left( \theta^{(k)}_{\beta, \alpha} \right)\cos\left( \theta^{(k)}_{\beta', \alpha'} \right) + \sin\left( \theta^{(k)}_{\beta, \alpha}\right) \sin\left( \theta^{(k)}_{\beta', \alpha'} \right)\right)\right) \\
    &= \frac{\sigma^2}{N^2D}\mathbb{E}_{\rmH_{:, d} \sim \mathcal{N}(0, \bm{\Sigma}^{(\ell)})}\left[ \phi'(H_{\alpha, j}^{(\ell)})\phi'(H_{\alpha, j}^{(\ell)})\right] 
    \left(\sum_{k=0}^{K-1} c_k \cos\left( \theta^{(k)}_{\beta-\beta', \alpha-\alpha'} \right) \right).
\end{align*}

Then, the covariance of the gradient is given by the following recurrence relation for all $\alpha, \alpha' \in [N]$, 
\begin{align*}
    \tilde{\Sigma}_{\alpha, \alpha'}^{(\ell)} &\coloneqq \mathbb{E}_{\bm{\Theta}^{
    \ell:L}, \bm{\Xi}^{\ell:L}}\left[g_{\alpha, j}^{(\ell)} g_{\alpha', j}^{(\ell)}\right] \\
    &= \sum_{i, i'=1}^D \sum_{\beta, \beta'=0}^{N-1} \mathbb{E}_{\bm{\Theta}^{
    \ell:L}, \bm{\Xi}^{\ell:L}}\left[g_{\beta, i}^{(\ell+1)} g_{\beta', i'}^{(\ell+1)} \frac{\partial H_{\beta, i}^{(\ell+1)}}{\partial H_{\alpha, j}^{(\ell)}} \frac{\partial H_{\beta', i'}^{(\ell+1)}}{\partial H_{\alpha', j}^{(\ell)}} \right] \\
    &= \sum_{\beta, \beta'=0}^{N-1} \sum_{i, i'=1}^D \mathbb{E}_{\bm{\Theta}^{
    \ell+1:L}, \bm{\Xi}^{\ell+1:L}}\left[g_{\beta, i}^{(\ell+1)} g_{\beta', i'}^{(\ell+1)} \right] \mathbb{E}_{\bm{\Theta}^{
    \ell:L}, \bm{\Xi}^{\ell:L}}\left[\frac{\partial H_{\beta, i}^{(\ell+1)}}{\partial H_{\alpha, j}^{(\ell)}} \frac{\partial H_{\beta', i'}^{(\ell+1)}}{\partial H_{\alpha', j}^{(\ell)}} \right] \delta_{i,i'} \\
    &= \frac{\sigma^2}{N^2} \sum_{\beta, \beta'=0}^{N-1}  \tilde{\Sigma}_{\beta, \beta'}^{(\ell+1)} \mathbb{E}_{\rmH_{:, d} \sim \mathcal{N}(0, \bm{\Sigma}^{(\ell)})}\left[ \phi'(H_{\alpha, j}^{(\ell)})\phi'(H_{\alpha', j}^{(\ell)})\right] \left(\sum_{k=0}^{K-1} c_k \cos\left( \theta^{(k)}_{\beta-\beta', \alpha-\alpha'} \right) \right).
\end{align*}
As with~\citep{schoenholz2016deep}, we approximate $\bm{\Sigma}^{(\ell+1)} \approx \bm{\Sigma}^*$ since the number of layer $\ell$ is assumed to be sufficiently large. Then, the linear iterated map of the gradient covariance $\bm{\Sigma}^{(\ell+1)} \mapsto \bm{\Sigma}^{(\ell)}$ is given as follows.
\begin{align}
    \label{eq:tilde-cmap}
    \tilde{\Sigma}_{\alpha, \alpha'}^{(\ell)} &= \frac{1}{N^2} \sum_{\beta, \beta'=0}^{N-1} \tilde{\Sigma}_{\beta, \beta'}^{(\ell+1)}  \chi_{c^*} \left(\sum_{k=0}^{K-1} c_k \cos\left( \theta^{(k)}_{\beta-\beta', \alpha-\alpha'} \right) \right).
\end{align}

The rank of the linear iterated map $\bm{\Sigma}^{(\ell+1)} \mapsto \bm{\Sigma}^{(\ell)}$ is less than $K$ since the matrix representation of the linear map can be decomposed into two matrices $\tilde{\bm{A}} \in \mathbb{R}^{N^2 \times K}$ and $\tilde{\bm{B}} \in \mathbb{R}^{K \times N^2}$ as follows.
\begin{gather*}
    \tilde{\Sigma}_{\alpha, \alpha'}^{(\ell)} = \frac{1}{N^2}\sum_{k=0}^{K-1} c_k \exp\left( -\sqrt{-1} \theta^{(k)}_{\beta,\beta'} \right) \exp\left(\sqrt{-1} \theta^{(k)}_{\alpha',\alpha} \right) \chi_{c^*} \tilde{\Sigma}_{\beta, \beta'}^{(\ell+1)} = \left( \sum_{k=0}^{K-1} \tilde{A}_{(\alpha, \alpha'), k} \tilde{B}_{k, (\beta, \beta')} \right) \tilde{\Sigma}_{\beta, \beta'}^{(\ell+1)},\\
    \tilde{A}_{(\alpha, \alpha'), k} \coloneqq \frac{1}{N^2} c_k \exp\left(\sqrt{-1} \theta^{(k)}_{\alpha',\alpha} \right) \chi_{c^*},\ \tilde{B}_{k, (\beta, \beta')} \coloneqq \exp\left( -\sqrt{-1} \theta^{(k)}_{\beta,\beta'} \right).
\end{gather*}

Next, we show that the subspace $\operatorname{span}\left(\{ \cos\left( \theta^{(k)}_{\alpha, \alpha'}\right) \}_{k=0}^{K-1} \right) \subset \mathbb{R}^{N \times N}$ is the $K$-dimensional eigenspace with eigenvalue $\chi_{c^*}$ of the linear iterated map $\bm{\Sigma}^{(\ell+1)} \mapsto \bm{\Sigma}^{(\ell)}$ . 

By substituting $\tilde{\Sigma}^{(\ell+1)}_{\beta, \beta'} = \sum_{k=0}^{K-1} \chi_{c^*}^{L-(\ell+1)}\tilde{\epsilon}_k \cos\left( \theta^{(k)}_{\beta,\beta'} \right)$ into~\cref{eq:tilde-cmap}, we obtain 
\begin{align*}
    \tilde{\Sigma}_{\alpha, \alpha'}^{(\ell)} = \frac{1}{N^2} \sum_{\beta, \beta'=0}^{N-1} \chi_{c^*} \left( \sum_{k=0}^{K-1} \tilde{\epsilon}_k \cos\left( \theta^{(k)}_{\beta,\beta'} \right) \right)  \sum_{k'=0}^{K-1} c_{k'} \cos\left( \theta^{(k')}_{\beta-\beta', \alpha-\alpha'} \right).
\end{align*}

From the orthogonality of the cosine and sine function, we obtain
\begin{align}
\label{eq:c_tilde_cos}
    \begin{split}
    &\frac{1}{N^2} \sum_{\beta, \beta'=0}^{N-1} \sum_{k'=0}^{K-1} c_{k'} \cos\left( \theta^{(k')}_{\beta-\beta', \alpha-\alpha'} \right) \cos\left( \theta^{(k)}_{\beta,\beta'} \right) \\
    &= \frac{1}{N^2} \sum_{\beta, \beta'=0}^{N-1} \sum_{k'=0}^{K-1} c_{k'} \left( \cos\left( \theta^{(k')}_{\beta, \beta'} \right)\cos\left( \theta^{(k')}_{\alpha', \alpha} \right)\cos\left( \theta^{(k)}_{\beta,\beta'} \right) \right. \\
    & \left. \qquad + \sin\left( \theta^{(k')}_{\beta, \beta'}\right) \sin\left( \theta^{(k')}_{\alpha', \alpha} \right) \cos\left( \theta^{(k)}_{\beta,\beta'} \right) \right) \\
    &= \cos\left( \theta^{(k)}_{\alpha, \alpha'} \right).
    \end{split}
\end{align}

Hence, we have
\begin{align*}
    \tilde{\Sigma}_{\alpha, \alpha'}^{(\ell)} = \sum_{k=0}^{K-1} \chi_{c^*}^{L-\ell} \tilde{\epsilon}_k \cos\left( \theta^{(k)}_{\alpha,\alpha'} \right).
\end{align*}

This completes the proof.
\end{proof}

\section{Discussions on the similarity of DCN and CNN}
\label{app:sim_btw_fno_and_dcn}
While CNNs perform local convolutions in the spatial domain, FNOs execute convolutions in the frequency domain, thereby achieving global convolutions in the spatial domain. Our theory for the FNO and the mean-field theory for CNNs~\citep{xiao2018dynamical} share a common focus on the correlation dynamics $\bm{\Sigma}^{(0)}, \bm{\Sigma}^{(1)},\dots, \bm{\Sigma}^{(L)}$ of the spatial representations $\rmH^{(0)}, \rmH^{(1)}, \dots, \rmH^{(L)} \in \mathbb{R}^{N \times D}$. The iterated maps for CNNs and FNOs are obtained as follows.
\begin{gather*}
    \Sigma^{(\ell+1)}_{\alpha, \alpha'} = \frac{\sigma^2}{2r+1} \sum_{\beta \in \mathrm{ker}} \mathbb{E}_{\rmH_{:. d} \sim \mathcal{N}(\bm{0}, \bm{\Sigma}^{(\ell)})}\left[ \phi(H_{\alpha+\beta, d})\phi(H_{\alpha'+\beta, d}) \right] + \sigma_b^2 \eqqcolon \mathcal{C}_{\mathrm{CNN}}(\bm{\Sigma}^{(\ell)}),\\
     \Sigma_{\alpha, \alpha'}^{(\ell)} = \sigma^2 \sum_{k=0}^{K-1} c_k  \mathbb{E}_{\rmH_{:. d} \sim \mathcal{N}(\bm{0}, \bm{\Sigma}^{(\ell)})}\left[\left|\left[\mF\phi\left(\rmH_{:, d}\right)\right]_{k}\right|^2 \right] \cos\left( \theta^{(k)}_{\alpha, \alpha'} \right) + \sigma^2_b \eqqcolon \mathcal{C}_{\mathrm{FNO}}(\bm{\Sigma}^{(\ell)}),
\end{gather*}
where $2r + 1$ is the number of filter width and $\operatorname{ker} = \{ \beta \in \mathbb{Z} | |\beta| \leq r\}$ is set of indices referring to the elements of the filter. 

When we consider the FNO without mode truncation ($K=N/2 +  1$), the propagation of the diagonal components $(\alpha, \alpha)$ for  any $\alpha \in [N]$ is equivalent to a CNN with filter size $N$ performing global convolution. 
\begin{align*}
    \Sigma^{(\ell)}_{\alpha, \alpha} &= \sigma^2 \sum_{k=0}^{\frac{N}{2}} c_k  \mathbb{E}_{\rmH_{:. d} \sim \mathcal{N}(\bm{0}, \bm{\Sigma}^{(\ell)})}\left[\left|\left[\mF\phi\left(\rmH_{:, d}\right)\right]_{k}\right|^2 \right] \underbrace{\cos\left( \theta^{(k)}_{\alpha, \alpha}\right)}_{=1}  + \sigma^2_b \\
    &= \frac{\sigma^2}{N} \sum_{\beta=0}^{N-1} \mathbb{E}_{\rmH_{:. d} \sim \mathcal{N}(\bm{0}, \bm{\Sigma}^{(\ell)})}\left[ \left|\phi(H_{\beta, d})\right|^2\right] + \sigma^2_b \\
    &= \frac{\sigma^2}{N} \sum_{\beta=0}^{N-1} \mathbb{E}_{\rmH_{:. d} \sim \mathcal{N}(\bm{0}, \bm{\Sigma}^{(\ell)})}\left[ \phi(H_{\alpha+\beta, d})\phi(H_{\alpha+\beta, d})\right] + \sigma^2_b.
\end{align*}
The first equality follows from Perseval's equality and the second from the periodic boundary condition. In contrast, the propagation of the off-diagonal components differs between CNN and FNO, and the iterated map is different even in the presence of mode truncation. 

Regarding the fixed point, \citet{xiao2018dynamical} demonstrated that any fixed point for the iterated map of the DCN is also a fixed point for that of the CNN. Consequently, Lemma A.1 indicates that the fixed points for CNN and FNO are consistent.

The behaviour of the iterated map around fixed points reflects the nature of each architecture. CNN possess diagonal eigenspaces associated with eigenvalues $\chi_{q^*}$ and non-diagonal eigenspaces associated with eigenvalues $\chi_{c^*}$. FNOs without mode truncation exhibit a similarity, possessing eigenspaces $\chi_{q^*}$ for zero-frequency and eigenspaces $\chi_{c^*}$ for k-frequencies with diagonal components removed. 

Finally, the iterated map during the backpropagation for CNN and FNO are given by
\begin{align*}
    \tilde{\Sigma}_{\alpha, \alpha'}^{(\ell)} &= \sigma^2 \sum_{\beta \in \operatorname{ker}} v_{\beta}  \tilde{\Sigma}_{\alpha-\beta, \alpha'-\beta}^{(\ell+1)}  \mathbb{E}_{\rmH_{:, d} \sim \mathcal{N}(0, \bm{\Sigma}^{(\ell)})}\left[ \phi'(H_{\alpha, j}^{(\ell)})\phi'(H_{\alpha', j}^{(\ell)})\right]  \\
    \tilde{\Sigma}_{\alpha, \alpha'}^{(\ell)} &= \frac{\sigma^2}{N^2} \sum_{\beta, \beta'=0}^{N-1}  \tilde{\Sigma}_{\beta, \beta'}^{(\ell+1)} \mathbb{E}_{\rmH_{:, d} \sim \mathcal{N}(0, \bm{\Sigma}^{(\ell)})}\left[ \phi'(H_{\alpha, j}^{(\ell)})\phi'(H_{\alpha', j}^{(\ell)})\right] \left(\sum_{k=0}^{K-1} c_k \cos\left( \theta^{(k)}_{\beta-\beta', \alpha-\alpha'} \right) \right),
\end{align*}
where $v_{\beta}$ is the variance weight parameter dependent of the filter position $\beta$, i.e., $w_{i, j}(\beta) \sim \mathcal{N}(0, \sigma^2 v_{\beta}/D),\ \sum_{\beta \in \operatorname{ker}} v_{\beta} = 1$. 

Using the approximation $\bm{\Sigma}^{(\ell+1)} \approx \bm{\Sigma}^*$ as with~\citep{schoenholz2016deep}, the backpropagation of the diagonal components $(\alpha, \alpha)$ of the FNO without mode truncation is equivalent to that of the global CNN (see Eq. (2.16) in \citep{xiao2018dynamical}) with $v_{\beta} = \frac{1}{N}$, as shown below: 
\begin{align*}
    \tilde{\Sigma}_{\alpha, \alpha}^{(\ell)} &\approx \frac{1}{N^2} \sum_{\beta, \beta'=0}^{N-1} \tilde{\Sigma}_{\beta, \beta'}^{(\ell+1)}  \chi_{c^*} \underbrace{\left(\sum_{k=0}^{N/2} c_k \cos\left( \theta^{(k)}_{\beta-\beta', \alpha-\alpha} \right) \right)}_{=N \delta_{\beta, \beta'}} = \chi_{c^*} \sum_{\beta=0}^{N-1} \frac{1}{N} \tilde{\Sigma}_{\beta, \beta}^{(\ell+1)}. 
\end{align*}
This equivalence suggests that the edge of chaos initialization (e.g., He initialization) is also valid for the FNO since the problems of gradient vanishing and explosion are determined by the diagonal components of $\tilde{\bm{\Sigma}}$.

\section{Details of Experimental Setup}
\label{app:exp-details}
In this section, we summarize the detailed setup of the all experiments, including the experiments in~\cref{sec:exp}. 
\subsection{Datasets}
\subsubsection{Advection equation} 
We used the advection equation data published by~\citet{takamoto2022pdebench}. 
The advection equation for the function $u(x, t) \in L^2((0, 1) \times (0, 2]; \mathbb{R})$ is given by 
\begin{align*}
    \partial_t u(x, t)+\beta \partial_x\left(u(x, t) / 2\right) = 0,\ u(x, 0) = u_0(x),
\end{align*}
where $u_0 \in L^2((0, 1); \mathbb{R})$ is the initial condition and $\beta \in \mathbb{R}$ is an advection speed set to $2.0$. The exact solution is given as $u(x, t) = u_0(x - \beta t)$ for any initial condition $u_0$.

Only periodic boundary conditions were used in this dataset. The initial conditions are the super-position of sinusoidal wave given by 
\begin{align}
    \label{eq:init-cond}
    u_0(x) = \sum_{i=1}^{k_{\mathrm{max}}} A_i \sin{(k_i x + \phi_i)}, 
\end{align}
where $k_i = 2 \pi \sum_{j=1}^{N} n_{i, j} / L_x$ are wave numbers whose $n_{i, j}$ are integer numbers randomly chosen in $[1, k_{\mathrm{max}}]$, $L_x = 1$ is the calculation domain size, $N=2$ is the number of wave to be added, and $k_{\mathrm{max}} = 8$ is the maximum wave number. The amplitude $A_i$ is uniformly chosen in $[0, 1]$, and the phase $\phi_i$ is the randomly chosen in $(0, 2\pi)$. The 2nd-order temporal and spatial upwind finite difference scheme was used for generating the data. Settings are described in Appendix D of~\citep{takamoto2022pdebench}. 

\subsubsection{Burgers' equation}
We used the Burgers' equation data published by~\citet{takamoto2022pdebench}.
The Burgers' equation for the function $u(x, t) \in L^2((0, 1) \times (0, 2]; \mathbb{R})$ is given by 
\begin{align*}
    \partial_t u(x, t)+\partial_x\left(u^2(x, t) / 2\right) &= \nu \partial_{x x} u(x, t),\\ 
    u(x, 0) &= u_0(x),
\end{align*}
where $u_0 \in L^2((0, 1); \mathbb{R})$ is the initial condition and $\nu$ is the diffusion coefficient set to $4.0$. 
The periodic boundary conditions and~\Cref{eq:init-cond} are used as the initial conditions. The 2nd-order temporal and spatial upwind finite difference scheme is used for generating the data. Settings are described in Appendix D of~\citep{takamoto2022pdebench}. 

\subsubsection{Darcy Flow equation}
We used the data of 2D Darcy Flow equation on a regular grid published by~\citet{li2020neural}. 
The Darcy Flow equation for the function $u \in H_0^1((0, 1)^2; \mathbb{R}_+)$ with a Dirichlet boundary is given by
\begin{align*}
    - \nabla \cdot (a(x) \nabla u(x)) &= f(x),\ &&x \in (0, 1)^2,\\
    u(x) &= 0,\ &&x \in \partial(0,1)^2,
\end{align*}
where $a \in L^{\infty}((0, 1)^2; \mathbb{R}_+)$ is the diffusion coefficient and $f \in L^2((0, 1)^2; \mathbb{R})$ is the forcing function. The coefficients $a$ was generated by measure $\mu = \psi_{\sharp} \mathcal{N}(0, (-\Delta + 9 I)^{-2})$ using the Laplacian with zero Neumann boundary and the binary point-wise mapping $\psi(x) = 12 \ (x \geq 0),\ 3 \ (x < 0)$. The forcing function is fixed $f(x) = 1$. Our aim is to predict the operator mapping the diffusion coefficient to the solution $a \rightarrow u$. The solution function $u$ was generated by using the second-order finite difference scheme on a $421 \times 421$ grid. Settings are described in Appendix A.3.2 of~\citep{li2020fourier}.   

\subsubsection{Incompressible Navier-Stokes equation}
We used the 2D NS equation on the unit torus defined by
\begin{align*}
    \partial_t \omega(x, t) + u(x, t) \cdot \nabla \omega(x, t) &= \nu \nabla^2 \omega(x, t) + f(x),\\
    \nabla \cdot u(x, t) &= 0,\ \omega(x, 0) = \omega_0(x),
\end{align*}
where $\omega(x, t) \in C([0, T]; H^r((0, 1)^2; \mathbb{R}^2))$ is the vorticity, $\omega_0 \in H^r((0, 1)^2; \mathbb{R}^2)$ is the initial vorticity, $u(x, t) \in C([0, T]; H^r((0, 1)^2; \mathbb{R}^2))$ is the velocity field for any $r > 0$, $\nu \in \mathbb{R}_+$ is the viscosity, and $f \in L^2((0, 1)^2; \mathbb{R})$ is the external forcing function defined by $f(x) = 0.1\left( \sin(2 \pi (x_1 + x_2) +\cos(2 \pi(x_1+x_2)\right)$. The initial vorticity $\omega_0$ was generated by $\omega_0 \sim \mu$ where $\mu = \mathcal{N}(0, 7^{\frac{3}{2}}(- \Delta + 49 I)^{-2.5})$ with periodic boundary conditions. The viscosity was set to $1e\mathrm{-}3$, $1e\mathrm{-}4$, or $1e\mathrm{-}5$. Our aim is to predict the operator that maps a solution $u$ up to time 10 to a solution up to some later time $T > 10$. The data was generated by the pseudo-spectral Crak-Nicholson second-order method on $64 \times 64$ grid. For the data with $\nu=1e\mathrm{-}4$, the time resolution was also downsampled by half.
Settings are described in Section 5.3 of~\citep{li2020fourier}.

\subsection{Training settings}
Our detailed training settings of the experiments in~\cref{sec:exp} are provided in~\cref{tab:training-settings}. Our experimental environment consists of an Intel Xeon Plantinum 8360Y (36-core) CPU and a single NVIDIA A100 GPU. Most of our code for experiments are based on the code of PDEBench (\url{https://github.com/pdebench/PDEBench})~\citep{takamoto2022pdebench}. The only modifications to the model are to multiply the outputs (variable \texttt{out\_ft} in code of class FNO1d and FNO2d) corresponding to mode $k=0, N/2$ by $\sqrt{2}$ and to initialize the weights by Gaussian distribution, as described in~\cref{sec:problem_setting}. 

\begin{table}[!tb]
\caption{Training settings}
\label{tab:training-settings}
\vskip 0.15in
\begin{center}
\begin{tabular}{ccccc}
\toprule
PDE & architecture & batch size & initial lr & max\_epochs \\ \midrule 
advection & Simplified FNO with Tanh & $40$ & $5.0 \times 10^{-5}$ & 200 \\
advection & Simplified FNO with ReLU & $40$ & $5.0 \times 10^{-5}$ & 200 \\
Burgers' & Simplified FNO with Tanh & $40$ & $5.0 \times 10^{-5}$ & 200\\
Burgers' & Simplified FNO with ReLU & $40$ & $1.0 \times 10^{-3}$ & 200\\
Darcy flow & 2D FNO with ReLU & $20$ &  $1.0 \times 10^{-3}$ & 500\\
Navier-Stokes ($\nu=1e\text{-}3$) & 2D FNO with ReLU  & $20$ & $1.0 \times 10^{-4}$ & 500 \\
Navier-Stokes ($\nu=1e\text{-}4$) & 2D FNO with ReLU  & $50$ & $2.5 \times 10^{-3}$ & 400 \\
Navier-Stokes ($\nu=1e\text{-}5$) & 2D FNO with ReLU  & $20$ & $2.5 \times 10^{-3}$ & 500 \\ \bottomrule
\end{tabular}
\end{center}
\vskip -0.1in
\end{table}

\section{Detailed Experimental Analysis}
\label{app:detail-analysis}
\subsection{Analysis of Training Loss}
\label{app:detail-analysis:train_loss}
\Cref{fig:tr-ns1e-3} shows the training loss for each epoch of the 32-layer FNOs with parameters $\sigma^2 \in \{ 0.1, 0.5, 1.0, 2.0, 3.0, 4.0 \}$ on the NS dataset with $\nu = 1e\mathrm{-}3$. When the initial parameter $\sigma^2$ is too small, the training loss is not well reduced due to gradient vanishing. On the other hand, when the initial parameter $\sigma^2$ is too large, the initial training loss blows up due to gradient exploding. The proposed edge of chaos initialization smoothly reduces the training loss in the initial epoch and enables stable training. 

\subsection{Analysis of Test Performance}
\label{app:detail-analysis:test_loss}
The nMSE of the FNOs on test datasets for six distinct PDEs is presented in~\cref{tab:adv-bgs,tab:2D-PDEs}. Results are shown only for the FNOs with initial parameters where training was successful in many cases. For the NS equation with viscosity values of $\nu = 1e\mathrm{-}3, 1e\mathrm{-}4$, where sufficient data is available, \Cref{tab:2D-PDEs} shows that best performance is achieved with 8 or 16 layers. This suggests that while shallow FNOs are currently prevalent, deep FNOs could be advantageous in certain tasks, underscoring the significance of our analysis of the bias in deep FNOs. Conversely, for other equations, the 4-layer FNO performs best and deeper FNOs result in a drop in performance even with the edge of chaos initialization. We will discuss this test performance deterioration in detail. 

The over-fitting phenomenon is observed in the Darcy Flow and NS equation datasets with $\nu=1e\mathrm{-}5$, where only limited training data is available. The training loss for each epoch on the NS equation is depicted in~\cref{fig:tr-ns1e-5}. As demonstrated in \cref{fig:tr-ns1e-5}, the 16 and 32-layer FNOs yield a lower training loss than the 4-layer FNO, but exhibit poorer performance on the test dataset as shown in~\cref{tab:2D-PDEs}. These results suggest over-fitting to the training data, necessitating either abundant training data or appropriate regularization.

Conversely, the under-fitting phenomenon is apparent in the 1D advection and Burgers’ equation datasets in~\cref{tab:adv-bgs}. The training loss of the FNO with ReLU activation for each epoch on the Burgers’ equation is presented in~\cref{fig:tr-burgers}. \Cref{fig:tr-burgers} indicates that the larger the number of layers, the higher the training loss in the final epoch, and the worse the test performance. This under-fitting to the training data could be attributed to the escalating complexity of the loss landscape as the layer count increases, a known issue for DCN and CNN~\citep{li2018visualizing}. This may be due to the emergence of local minima corresponding to operators that generate too complex functions, preventing the attainment of parameters that achieve global minima. This issue could be mitigated by introducing more suitable regularization, an appropriate optimizer, or a skip connection~\citep{tran2022factorized}.

Our theory and experiments suggest that the training of deep FNOs has suffered from problems including gradient vanishing and exploding due to improper initialization, over-fitting caused by insufficient training data, and under-fitting caused by loss landscapes with strong non-convexity. While our edge of chaos initialization prevents the gradient vanishing and exploding, techniques to solve over-fitting and under-fitting problems are still needed in practice.
\begin{figure*}[tb]
    \vskip 0.2in
    \captionsetup[subfigure]{justification=centering}
    \centering
    \subcaptionbox{NS eq. (1e-3)\label{fig:tr-ns1e-3}}[0.31\textwidth]{
    \includegraphics[width=0.3\textwidth]{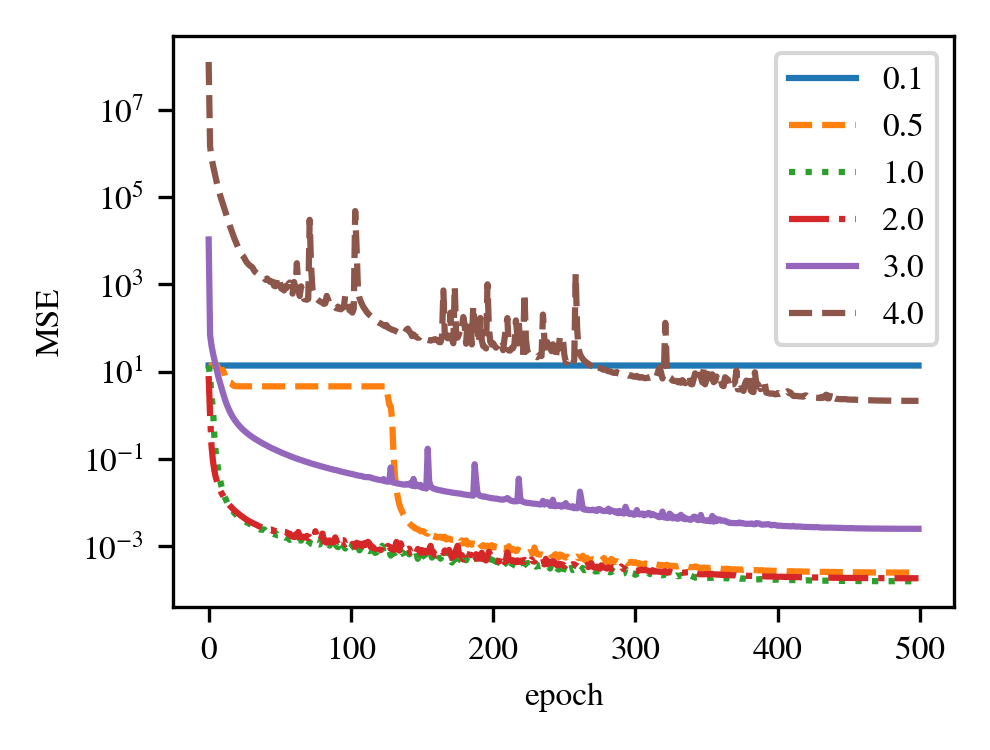}
    }
    \centering
    \subcaptionbox{NS eq. (1e-5)\label{fig:tr-ns1e-5}}[0.31\textwidth]{
    \includegraphics[width=0.3\textwidth]{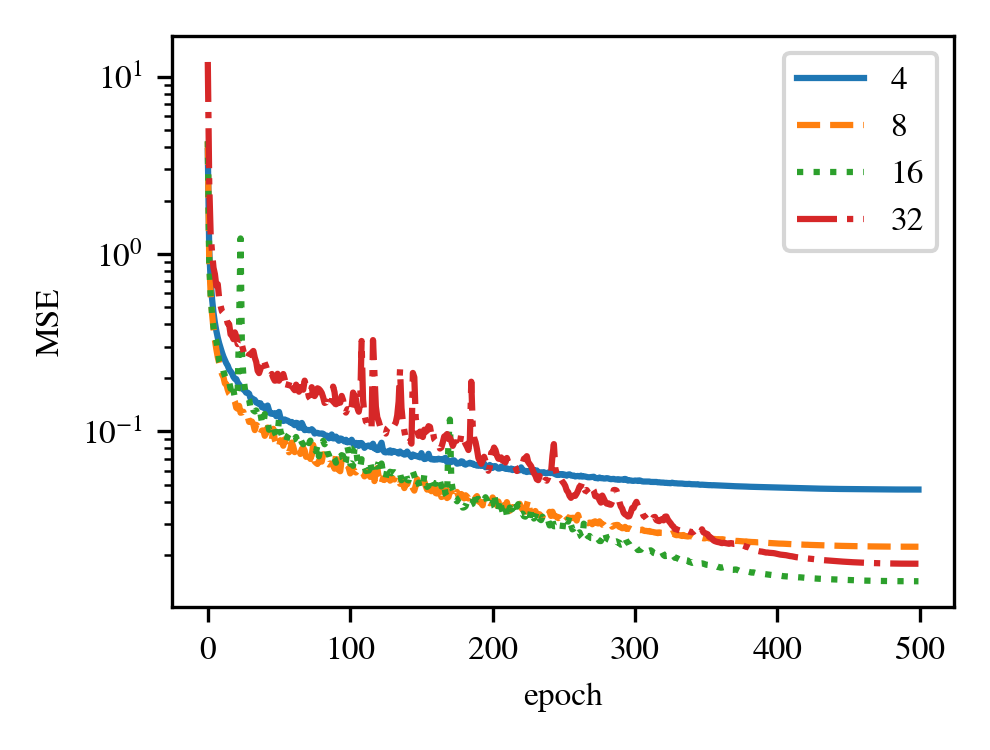}
    }
    \centering
    \subcaptionbox{Burgers' eq.\label{fig:tr-burgers}}[0.31\textwidth]{
    \includegraphics[width=0.3\textwidth]{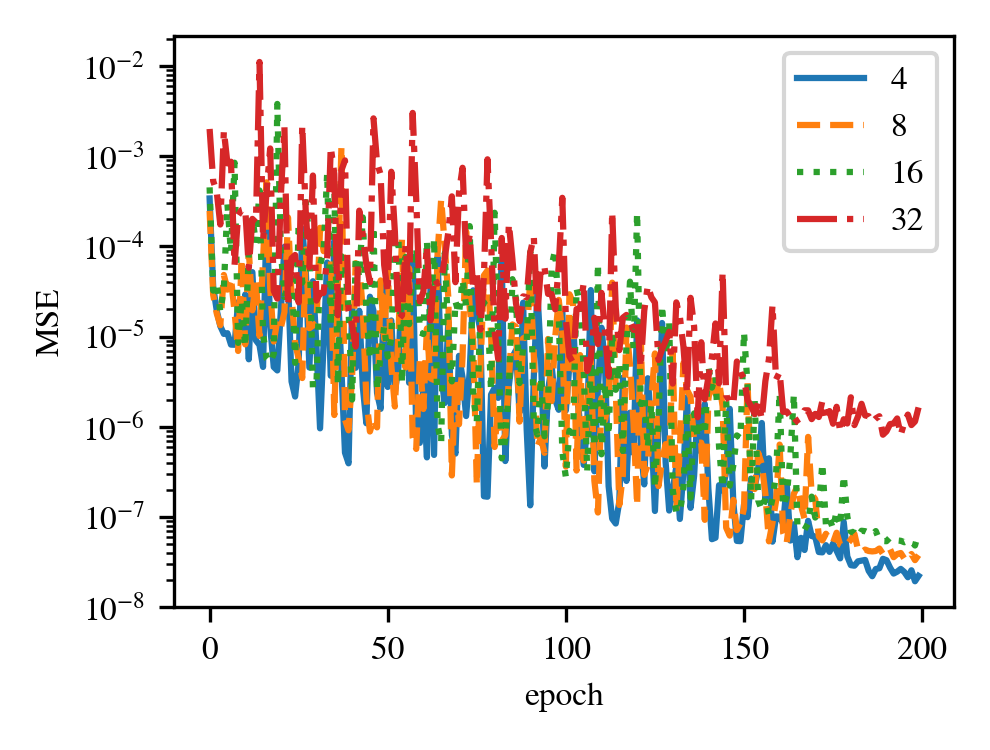}
    }%
    \caption{Training Loss Curve. (a): training loss curve of the 32-layer original FNOs with varying initial parameters $\sigma^2 \in \{ 0.1, 0.5, 1.0, 2.0, 3.0, 4.0 \}$, on the NS equation with $\nu=1e\mathrm{-}3$. (b): training loss curve of the original FNOs with an initial parameter $\sigma^2=2.0$ with a varying number of layers $L \in \{4, 8, 16, 32\}$ on the NS equation with $\nu=1e\mathrm{-}5$. (c): training loss curve of the simplified FNOs with ReLU activation and the initial parameter $\sigma^2=2.0$ with varying number of layers $L \in \{4, 8, 16, 32\}$ on the Burgers' equation.}
    \label{fig:training-loss-curve}
\vskip -0.2in
\end{figure*}

\begin{table*}[tb]
\caption{Test performance measured by nMSE of 1D simplified FNO on 1D PDEs}
\label{tab:adv-bgs}
\vskip 0.15in
\begin{center}
\begin{small}
\begin{tabular}{cccccc}
\toprule
\multirow{2}{*}{nMSE} &\multirow{2}{*}{Layers} & \multicolumn{2}{c}{Advection} & \multicolumn{2}{c}{Burgers'} \\
& & Tanh & ReLU & Tanh & ReLU \\ \midrule
\multirow{ 4}{*}{$\sigma^2 = 1.0$} 
& 4 & 0.013 & 0.015 & 0.0055 & 0.00088  \\
& 8 & 0.013 & 0.015 & 0.0069 & 0.0012 \\
& 16 & 0.013 & 0.015 & 0.0068 & 0.0016 \\
& 32 & 0.016 & 0.018 & 0.0071 & 0.0041 \\ \midrule
\multirow{ 4}{*}{$\sigma^2 = 2.0$}
& 4 & 0.013 & 0.017 & 0.0036 & 0.00098\\
& 8 & 0.012 & 0.018 & 0.0034 & 0.0011\\
& 16 & 0.012 & 0.020 & 0.0050 & 0.0016\\
& 32 & 0.014 & 0.024 & 0.0062 & 0.0027\\ \midrule
\multirow{ 4}{*}{$\sigma^2 = 3.0$}
& 4 & 0.013 & 0.019 & 0.0044 & 0.00096 \\
& 8 & 0.014 & 0.022 & 0.0042 & 0.0012 \\
& 16 & 0.020 & 0.032 & 0.0060 & 0.0016 \\
& 32 & 0.053 & 0.059 & 0.0093 & 0.0045 \\ \bottomrule
\end{tabular}
\end{small}
\end{center}
\vskip -0.1in
\end{table*}
\begin{table*}[tb]
\caption{Test performance measured by nMSE of 2D original FNO with ReLU activation on Darcy Flow and NS equation.}
\label{tab:2D-PDEs}
\vskip 0.15in
\begin{center}
\begin{small}
\begin{tabular}{cccccc}
\toprule
\multirow{ 2}{*}{nMSE} & \multirow{ 2}{*}{Layers} & \multirow{ 2}{*}{Darcy Flow} & \multicolumn{3}{c}{Navier-Stokes} \\
& & & $\nu = 1e\mathrm{-}3$ & $\nu = 1e\mathrm{-}4$ & $\nu = 1e\mathrm{-}5$ \\ \midrule
\multirow{ 4}{*}{$\sigma^2 = 1.0$} 
& 4 & $0.025$ & $0.0063$ & $0.18$ & $0.10$\\ 
& 8 & $0.028$ & $0.0047$ & $0.14$ & $0.094$ \\
& 16 &  $0.035$ & $0.0048$ & $0.12$ & $0.12$ \\
& 32 & $0.56$ & $0.0057$ & $0.13$ & $0.16$ \\ \midrule
\multirow{ 4}{*}{$\sigma^2 = 2.0$} 
& 4 & $0.029$ & $0.0075$ & $0.18$ & $0.10$ \\
& 8 & $0.036$ & $0.0057$  & $0.14$ & $0.11$ \\
& 16 & $0.041$ & $0.0057$ & $0.12$ & $0.11$ \\
& 32 & $0.042$ & $0.0072$ & $0.13$ & $0.18$ \\ \midrule
\multirow{ 4}{*}{$\sigma^2 = 3.0$} 
& 4 & $0.033$ & $0.0089$ & $0.18$ & $0.10$ \\
& 8 & $0.040$ & $0.0080$ & $0.14$ & $0.11$\\
& 16 & $0.052$ & $0.0098$ & $0.13$ & $0.13$\\
& 32 & $0.16$ &  $0.028$ & $0.14$ & $0.19$ \\ 
\bottomrule
\end{tabular}
\end{small}
\end{center}
\vskip -0.1in
\end{table*}

\section{Visualization of Forward Propagation}
\label{app:visualization-forwardprop}
We visualized the behavior of the simplified FNO's covariance matrix $\bm{\Sigma}^{(\ell)}$ with varying initialization parameters $\sigma^2 \in \{ 0.1, 1.0, 2.0, 4.0 \}$ and activation functions. The FNO, with a width of $D=1024$, was used and the input was sampled from the standard normal distribution with a spatial size of $N=32$. The results of the FNO with Tanh activation, both with and without mode truncation, are shown in~\cref{fig:tanh_corr,fig:tanh_var} and~\cref{fig:tanh_corr_trunc,fig:tanh_var_trunc} respectively. Similarly, the results of the FNO with ReLU activation, both with and without mode truncation, are displayed in~\cref{fig:relu_corr,fig:relu_var} and~\cref{fig:relu_corr_trunc,fig:relu_var_trunc} respectively. In the ordered phase, all figures illustrate convergence to the fixed point $\bm{\Sigma}^*$ where $c^*=1$, with the rate of convergence increasing as the parameter $\sigma^2$ decreases. In the chaotic phase, the activation function dictates the covariance behavior. Without mode truncation, the covariance behavior of the FNO is identical to those of the DCN; otherwise non-uniform, FNO-specific periodic covariance is exhibited. 

\begin{figure}[tb]
\centering
\centerline{\includegraphics[width=0.85\columnwidth]{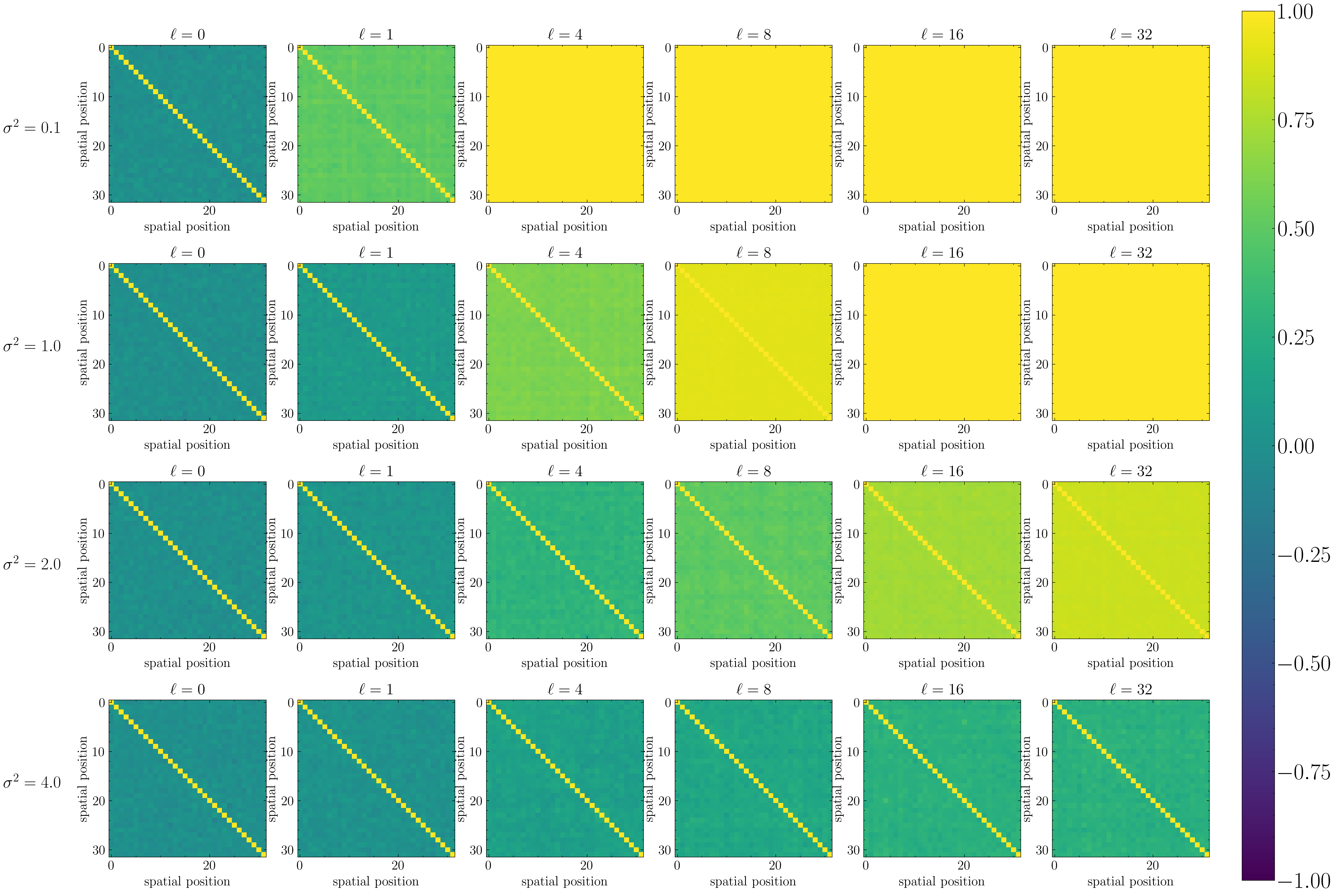}}
\caption{Visualization of the correlation $\Sigma^{(\ell)}_{\beta, \beta'}/\sqrt{\Sigma^{(\ell)}_{\beta, \beta}\Sigma^{(\ell)}_{\beta', \beta'}}$ for the simplified FNO with Tanh activation and no mode truncation.}
\label{fig:tanh_corr}
\end{figure}
\begin{figure}[tb]
\centering
\centerline{\includegraphics[width=0.85\columnwidth]{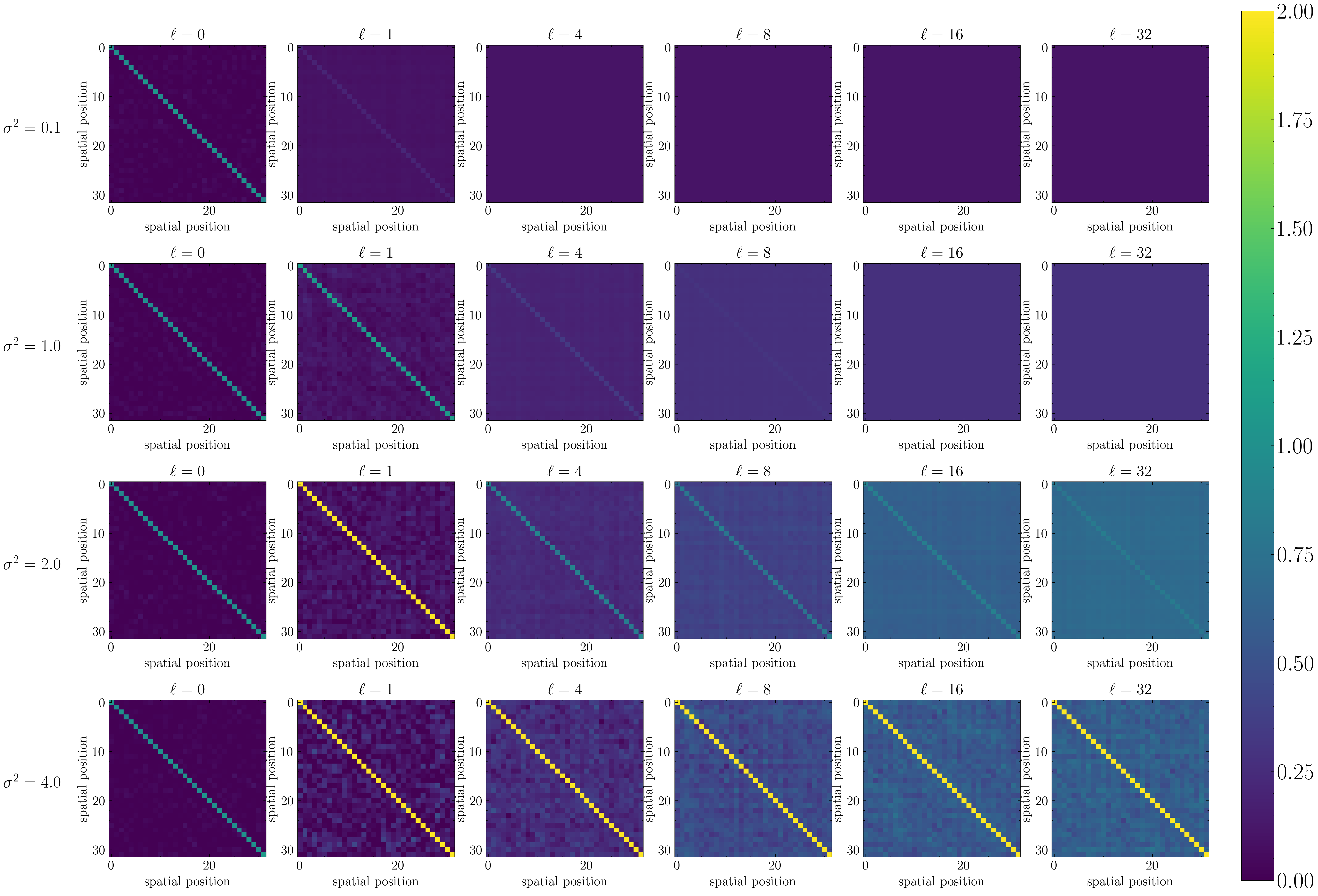}}
\caption{Visualization of the covariance $\bm{\Sigma}^{(\ell)}$ for the simplified FNO with Tanh activation and no mode truncation.}
\label{fig:tanh_var}
\end{figure}
\begin{figure}[tb]
\centering
\centerline{\includegraphics[width=0.85\columnwidth]{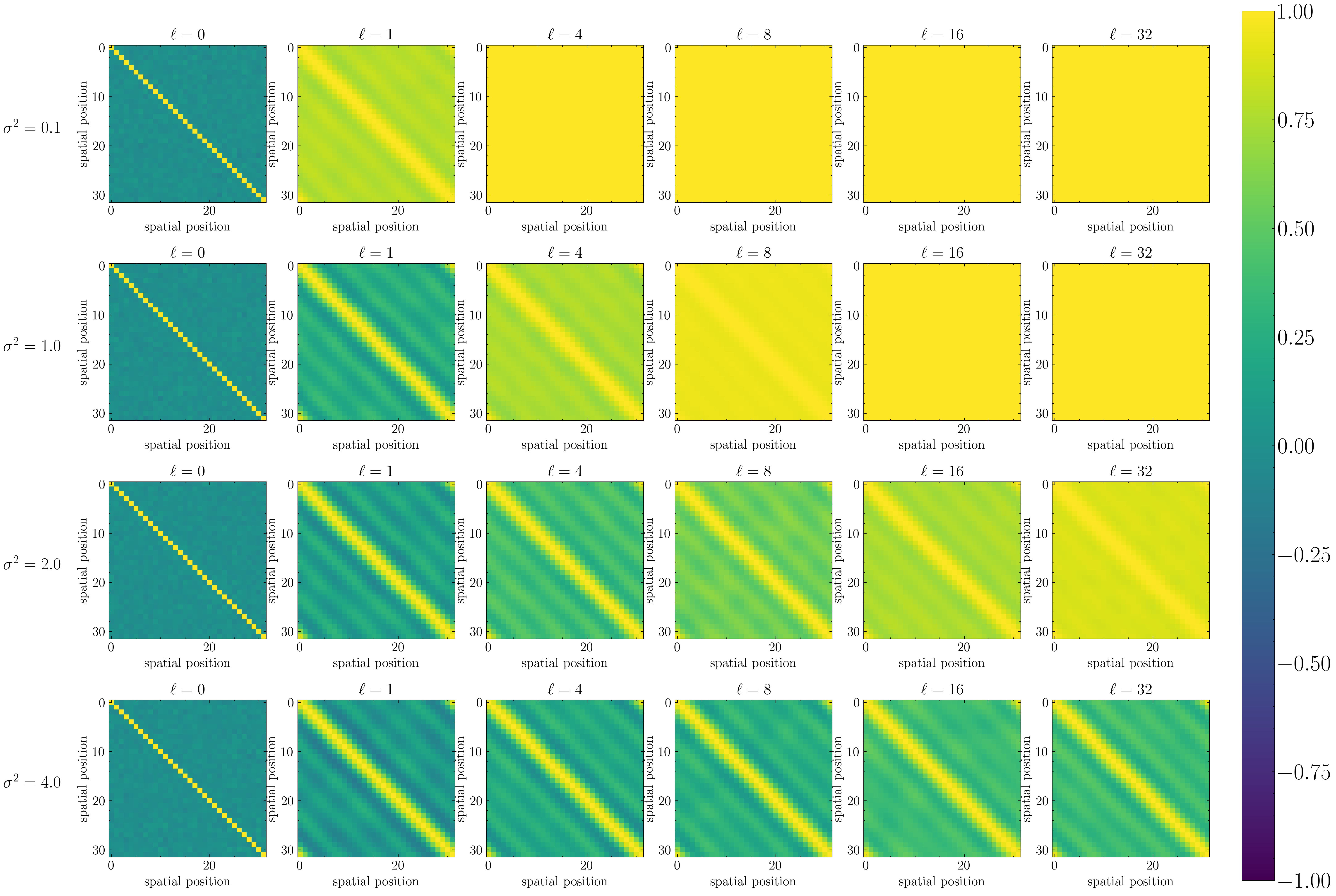}}
\caption{Visualization of the correlation $\Sigma^{(\ell)}_{\beta, \beta'}/\sqrt{\Sigma^{(\ell)}_{\beta, \beta}\Sigma^{(\ell)}_{\beta', \beta'}}$ for the simplified FNO with Tanh activation and the Fourier mode $K=5$.}
\label{fig:tanh_corr_trunc}
\end{figure}
\begin{figure}[tb]
\centering
\centerline{\includegraphics[width=0.85\columnwidth]{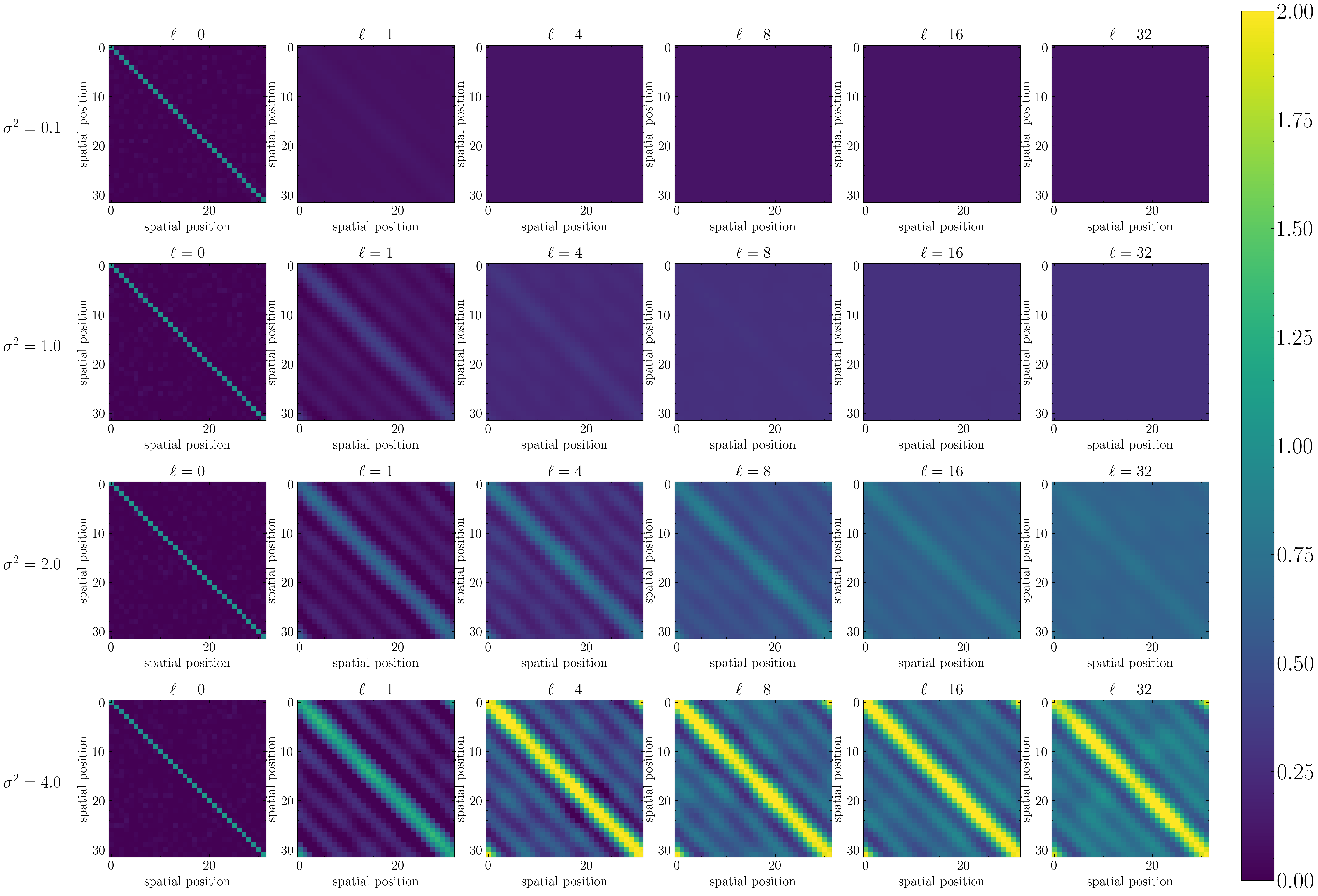}}
\caption{Visualization of the covariance $\bm{\Sigma}^{(\ell)}$ for the simplified FNO with Tanh activation and the Fourier mode $K=5$.}
\label{fig:tanh_var_trunc}
\end{figure}
\begin{figure}[tb]
\centering
\centerline{\includegraphics[width=0.85\columnwidth]{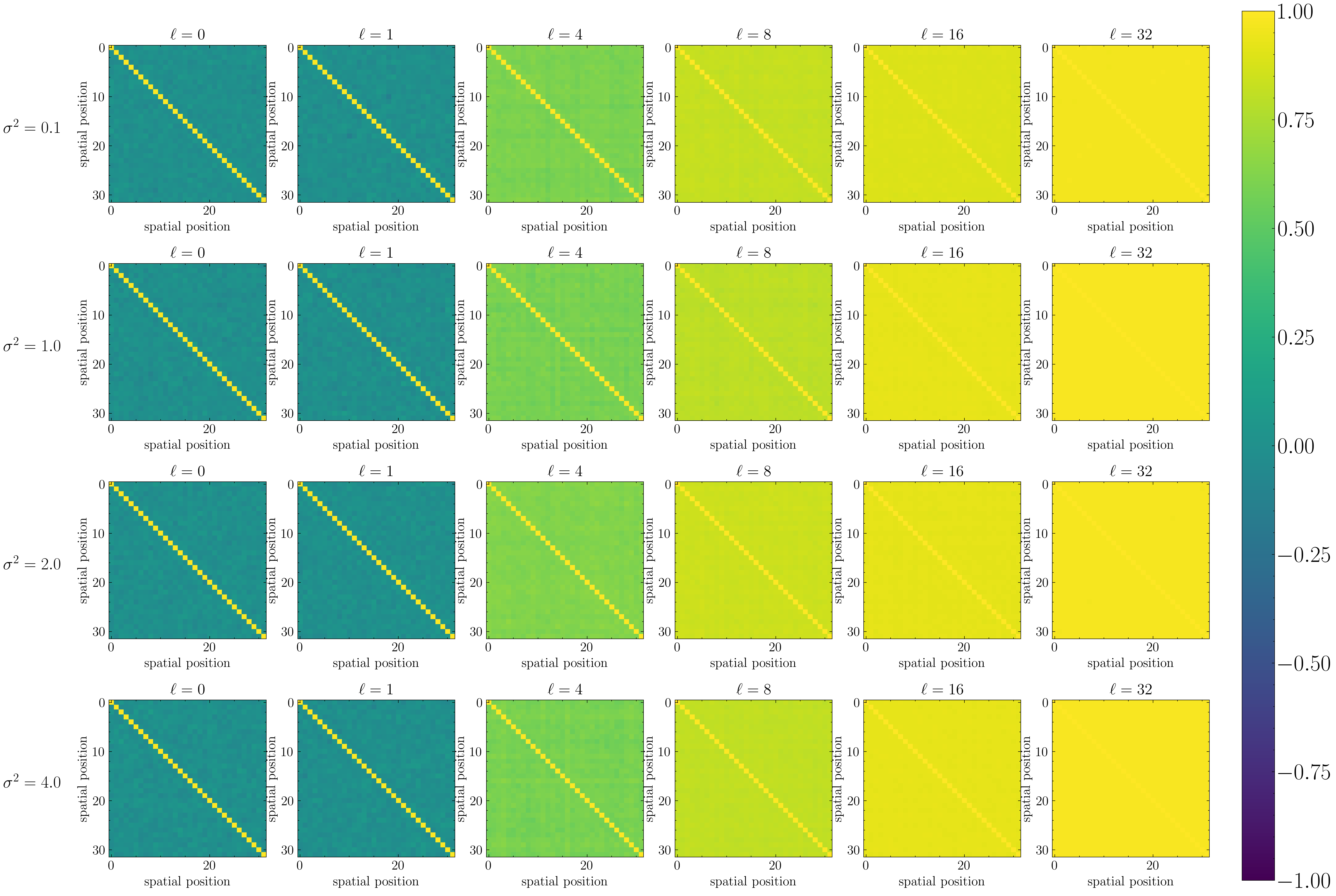}}
\caption{Visualization of the correlation $\Sigma^{(\ell)}_{\beta, \beta'}/\sqrt{\Sigma^{(\ell)}_{\beta, \beta}\Sigma^{(\ell)}_{\beta', \beta'}}$ for the simplified FNO with ReLU activation and no mode truncation.}
\label{fig:relu_corr}
\end{figure}
\begin{figure}[tb]
    \captionsetup[subfigure]{justification=centering}
    \centering
    \subcaptionbox{Initial parameters $\sigma^2 \in \{ 0.1, 1.0, 2.0 \}$.\label{fig:relu_var_s}}[0.85\textwidth]{
        \includegraphics[width=0.85\textwidth]{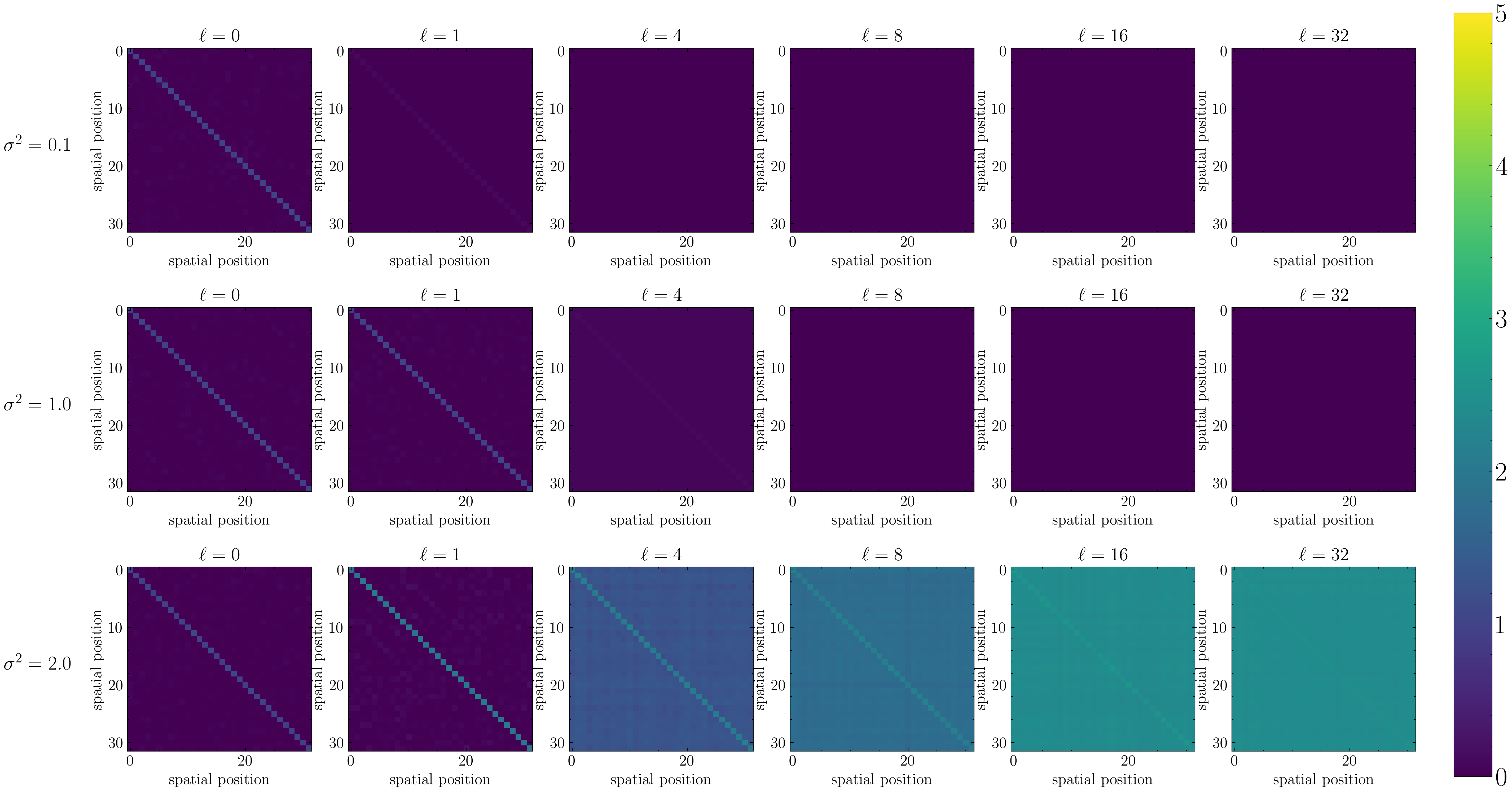}
    }
    \vskip 0.1in
    \centering
    \subcaptionbox{Initial parameter $\sigma^2=4.0$\label{fig:relu_var_chaos}}[0.85\textwidth]{
        \includegraphics[width=0.85\textwidth]{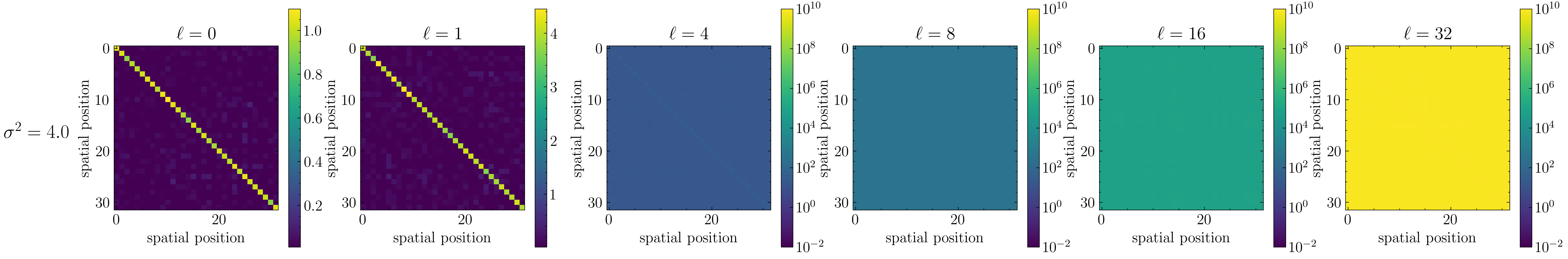}
    }
    \caption{Visualization of the covariance $\bm{\Sigma}^{(\ell)}$ for the simplified FNO with ReLU activation and no mode truncation.}
    \label{fig:relu_var}
\end{figure}

\begin{figure}[tb]
\centering
\centerline{\includegraphics[width=0.85\columnwidth]{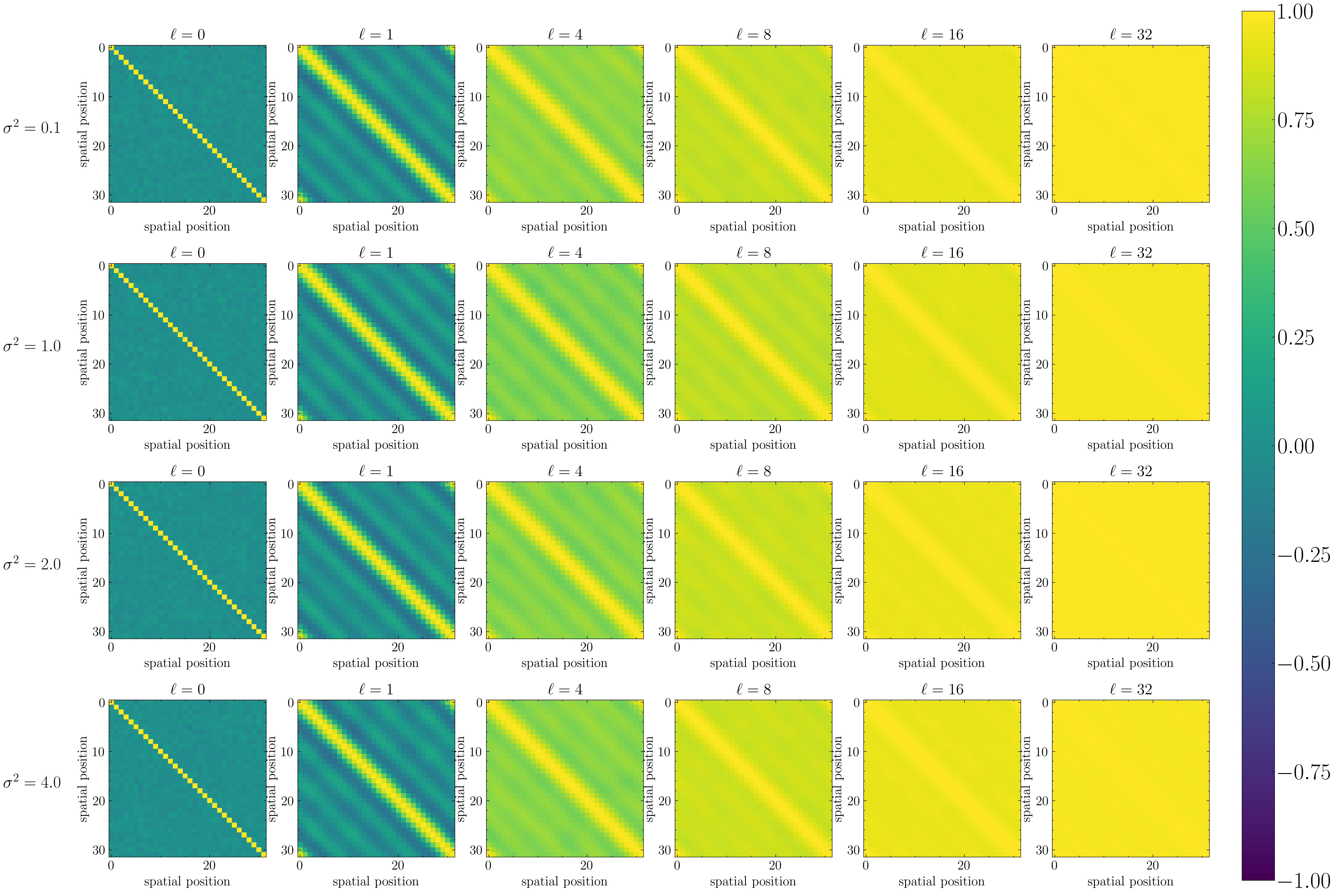}}
\caption{Visualization of the correlation $\Sigma^{(\ell)}_{\beta, \beta'}/\sqrt{\Sigma^{(\ell)}_{\beta, \beta}\Sigma^{(\ell)}_{\beta', \beta'}}$ for the simplified FNO with ReLU activation and the Fourier mode $K=5$.}
\label{fig:relu_corr_trunc}
\end{figure}

\begin{figure}[tb!]
    \captionsetup[subfigure]{justification=centering}
    \centering
    \subcaptionbox{Initial parameters $\sigma^2 \in \{ 0.1, 1.0, 2.0 \}$.\label{fig:relu_var_trunc_s}}[0.85\textwidth]{
        \includegraphics[width=0.85\textwidth]{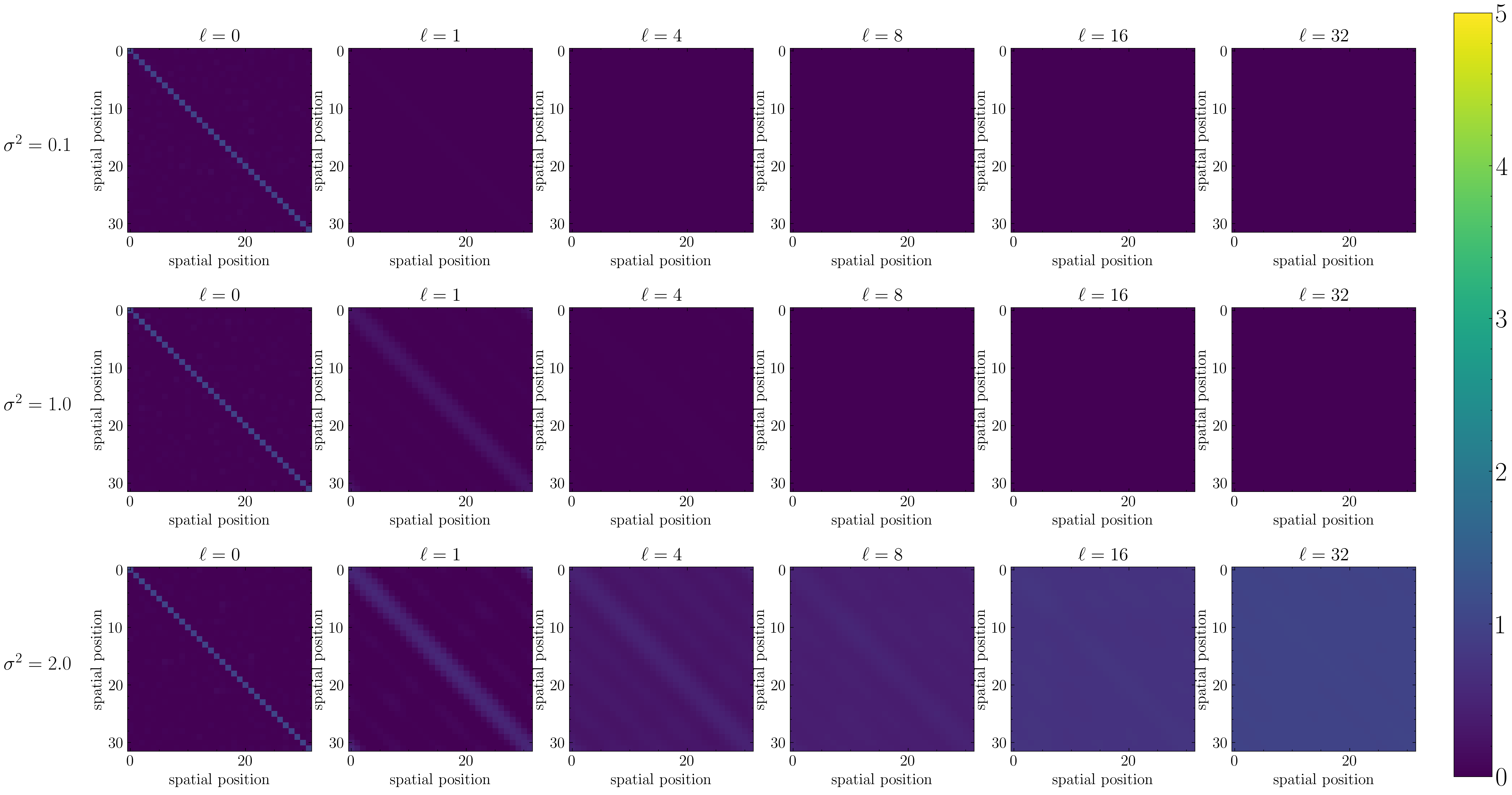}
    }
    \vskip 0.1in
    \centering
    \subcaptionbox{Initial parameter $\sigma^2=4.0$ \label{fig:relu_var_trunc_chaos}}[0.85\textwidth]{
        \includegraphics[width=0.85\textwidth]{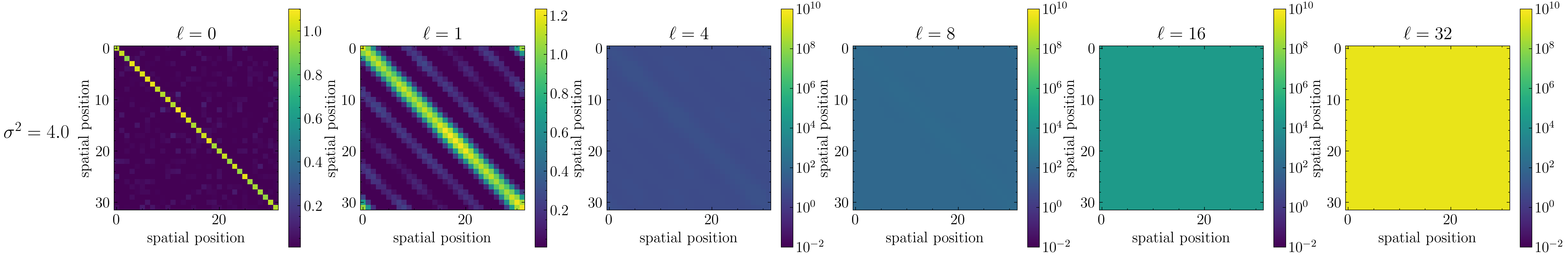}
    }
    \caption{Visualization of the covariance $\bm{\Sigma}^{(\ell)}$ for the simplified FNO with ReLU activation and the Fourier mode $K=5$.}
    \label{fig:relu_var_trunc}
\end{figure}

\begin{figure*}[!tb]
    \captionsetup[subfigure]{justification=centering}
    \centering
    \subcaptionbox{Tanh on advection eq.\label{fig:adv-tanh-test}}[0.24\textwidth]{
    \includegraphics[width=0.24\textwidth]{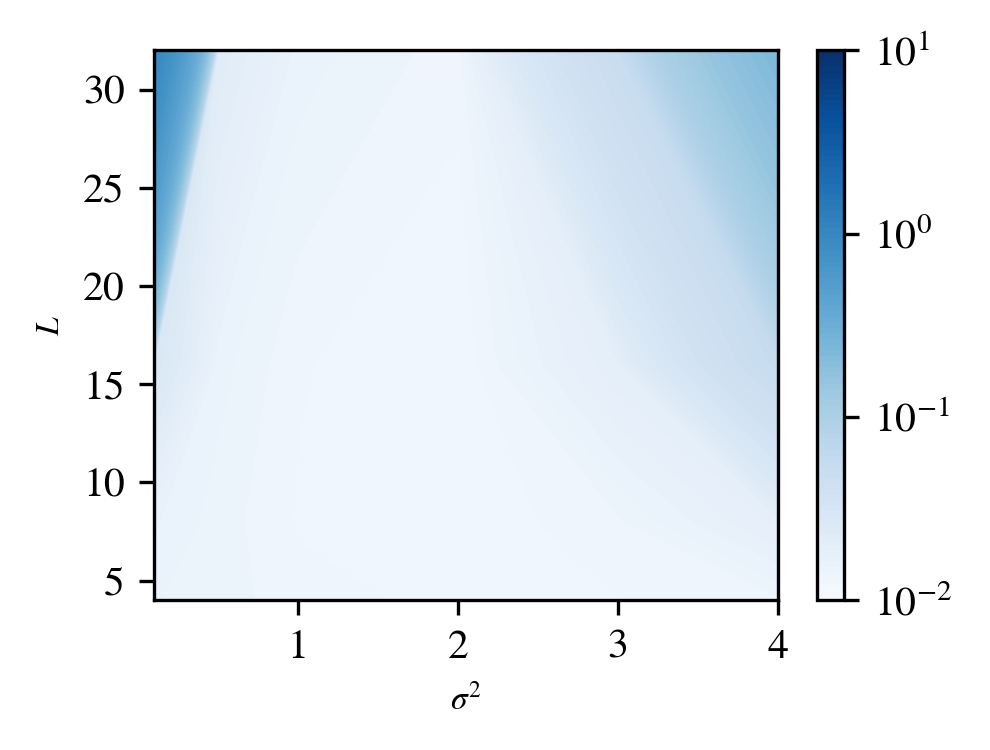}
    }
    \centering
    \subcaptionbox{ReLU on advection eq. \label{fig:adv-relu-test}}[0.24\textwidth]{
    \includegraphics[width=0.24\textwidth]{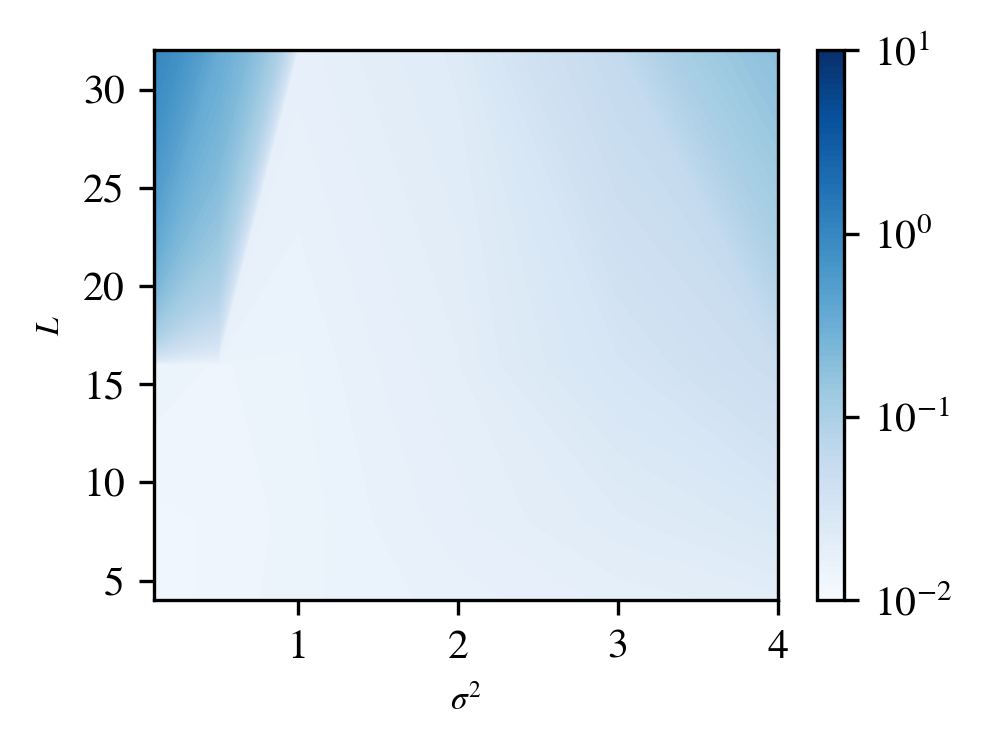}
    }%
    \centering
    \subcaptionbox{Tanh on Burgers' eq. \label{fig:bgs-tanh-test}}[0.24\textwidth]{
    \includegraphics[width=0.24\textwidth]{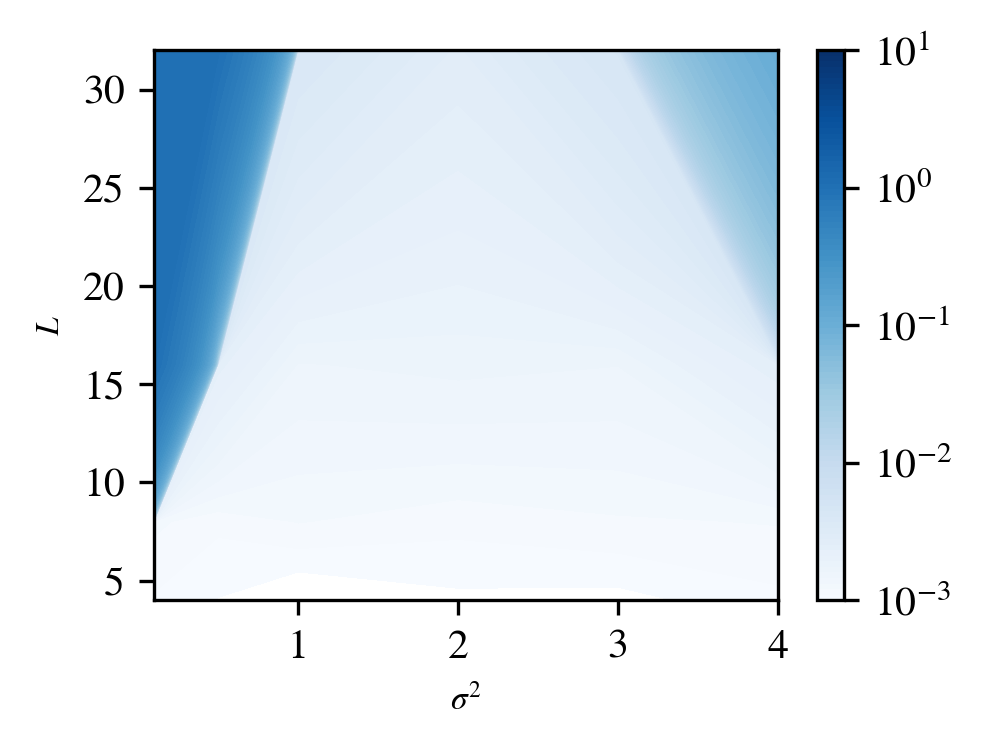}
    }
    \centering
    \subcaptionbox{ReLU on Burgers' eq. \label{fig:bgs-relu-test}}[0.24\textwidth]{
    \includegraphics[width=0.24\textwidth]{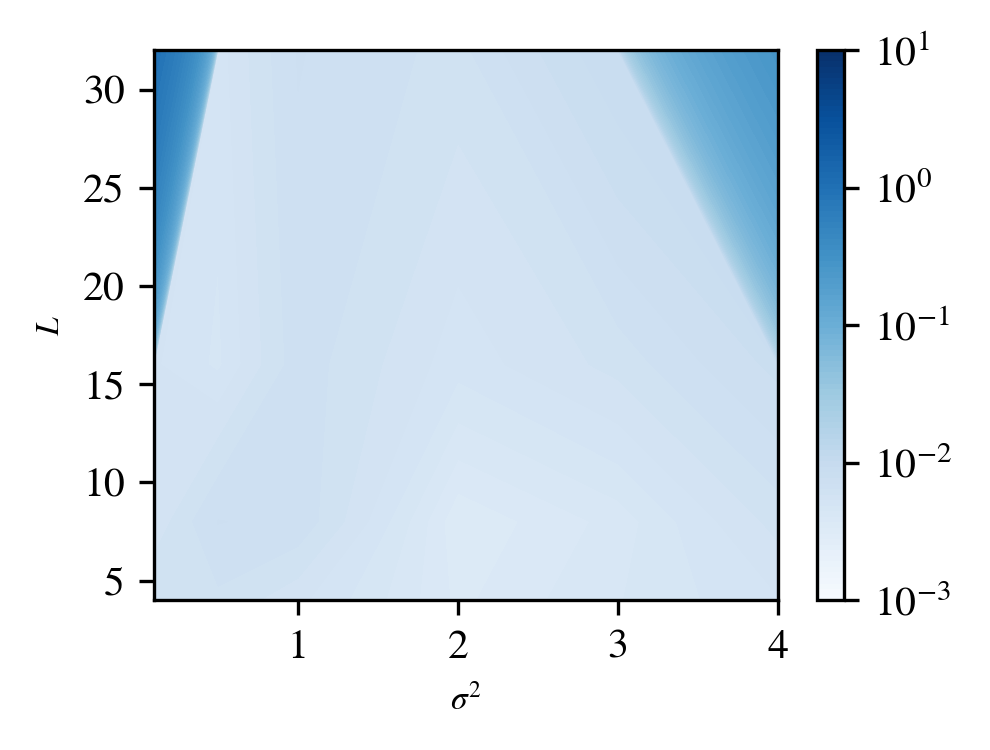}
    }%
    \vskip 0.1in
    \centering
    \subcaptionbox{ReLU on Darcy Flow eq.\label{fig:darcy-test}}[0.24\textwidth]{
    \includegraphics[width=0.24\textwidth]{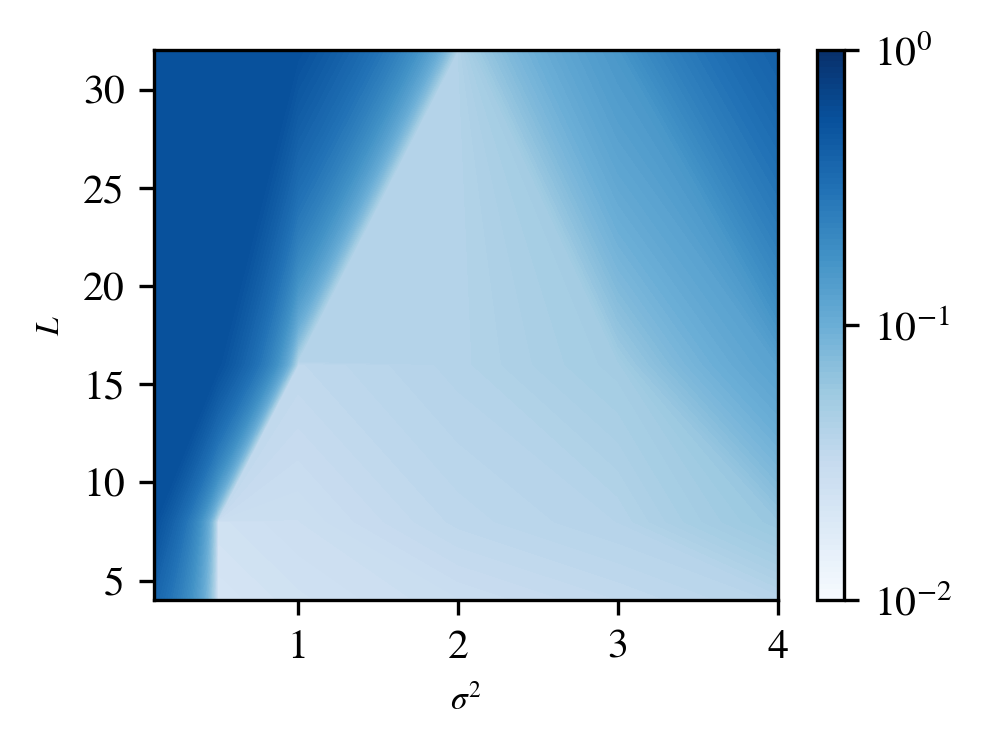}
    }
    \centering
    \subcaptionbox{ReLU on NS eq. (1e-3) \label{fig:ns-1e-3-test}}[0.24\textwidth]{
    \includegraphics[width=0.24\textwidth]{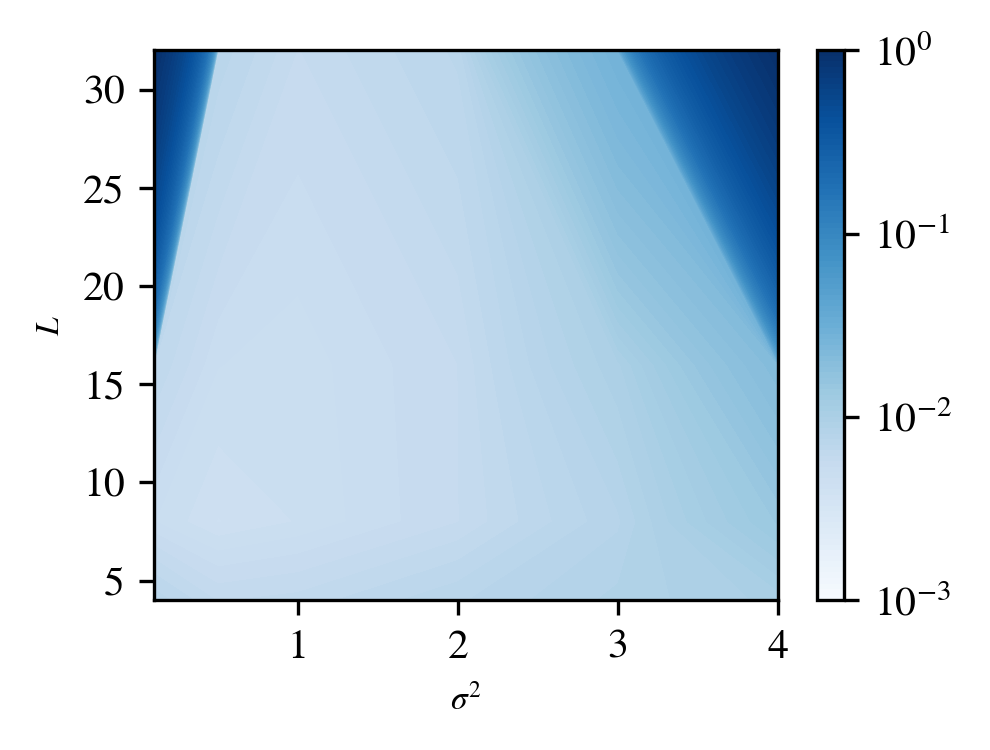}
    }%
    \centering
    \subcaptionbox{ReLU on NS eq. (1e-4) \label{fig:ns-1e-4-test}}[0.24\textwidth]{
    \includegraphics[width=0.24\textwidth]{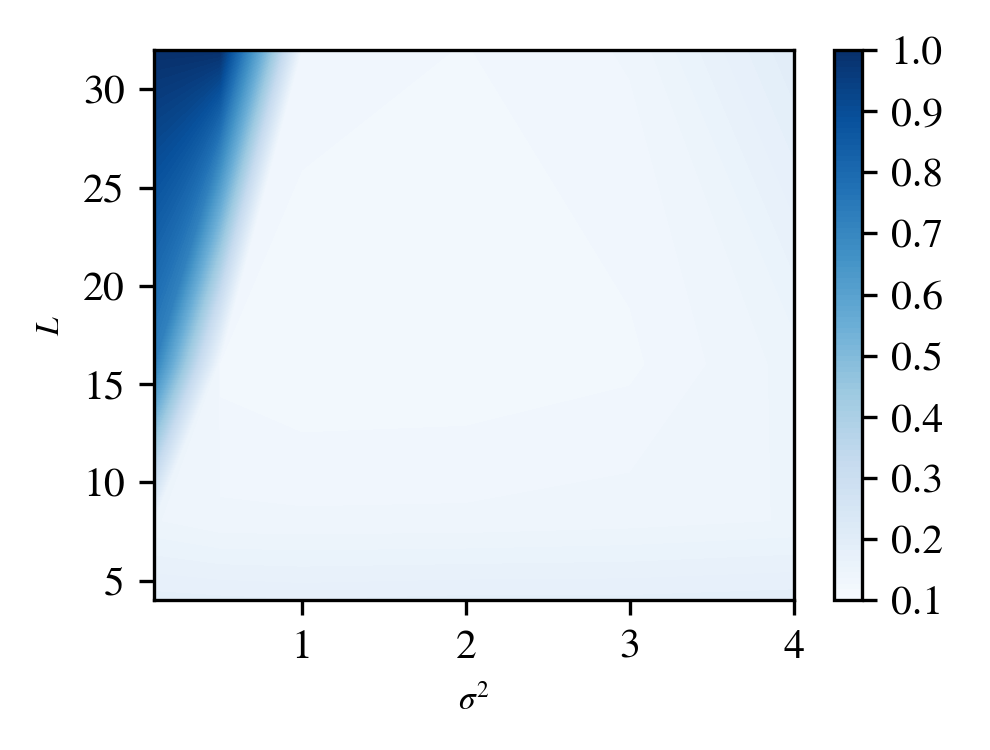}
    }
    \centering
    \subcaptionbox{ReLU on NS eq. (1e-5) \label{fig:ns-1e-5-test}}[0.24\textwidth]{
    \includegraphics[width=0.24\textwidth]{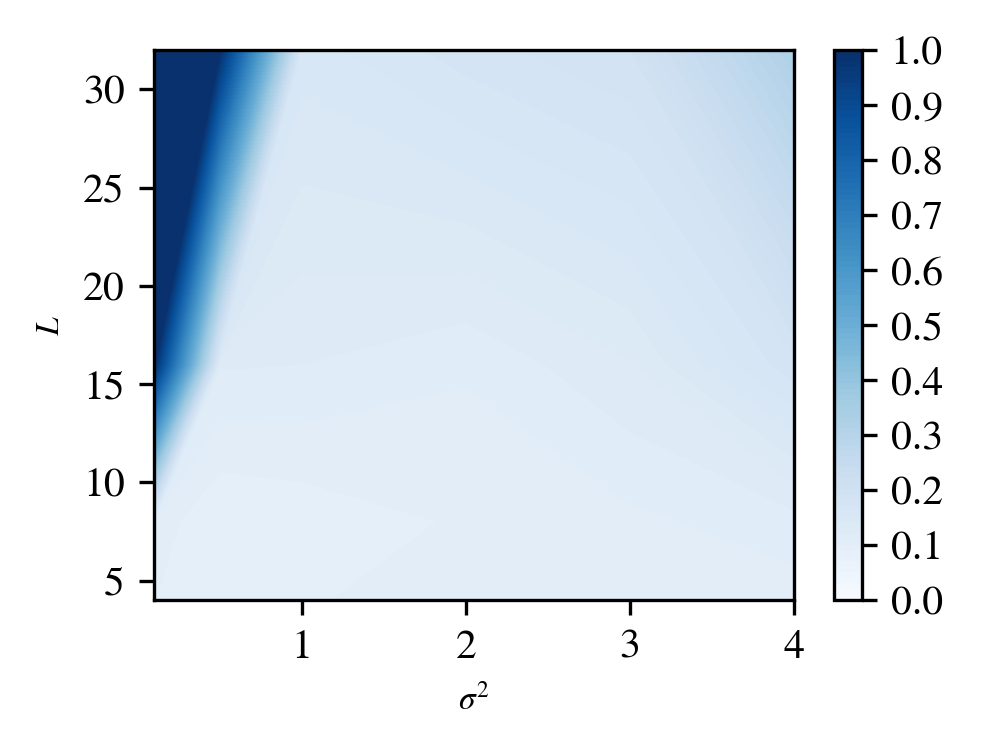}
    }
    \caption{nMSE of FNOs on test datasets for four distinct PDEs. \textbf{(a, b):} the advection equation \textbf{(c, d):} the Burgers' equation, \textbf{(e):} Darcy Flow, \textbf{(f-h):} the NS equation. The heatmaps for each nMSE correspond to the results of each heatmap of training loss in~\cref{fig:heatmap-sigam-L-train}. The lighter colors, the better the test performance. The presented results are the mean nMSE calculated over three different seeds.}
    \label{fig:heatmap-sigam-L-test}
\end{figure*}
\clearpage

\newpage
\section*{NeurIPS Paper Checklist}
\begin{enumerate}
\item {\bf Claims}
    \item[] Question: Do the main claims made in the abstract and introduction accurately reflect the paper's contributions and scope?
    \item[] Answer: \answerYes{} 
    \item[] Justification: The contribution and scope of the paper are described in the abstract and introduction. 
    \item[] Guidelines:
    \begin{itemize}
        \item The answer NA means that the abstract and introduction do not include the claims made in the paper.
        \item The abstract and/or introduction should clearly state the claims made, including the contributions made in the paper and important assumptions and limitations. A No or NA answer to this question will not be perceived well by the reviewers. 
        \item The claims made should match theoretical and experimental results, and reflect how much the results can be expected to generalize to other settings. 
        \item It is fine to include aspirational goals as motivation as long as it is clear that these goals are not attained by the paper. 
    \end{itemize}

\item {\bf Limitations}
    \item[] Question: Does the paper discuss the limitations of the work performed by the authors?
    \item[] Answer: \answerYes{} 
    \item[] Justification: We mention the limitation of this paper in~\cref{sec:conclusion}. 
    \item[] Guidelines:
    \begin{itemize}
        \item The answer NA means that the paper has no limitation while the answer No means that the paper has limitations, but those are not discussed in the paper. 
        \item The authors are encouraged to create a separate "Limitations" section in their paper.
        \item The paper should point out any strong assumptions and how robust the results are to violations of these assumptions (e.g., independence assumptions, noiseless settings, model well-specification, asymptotic approximations only holding locally). The authors should reflect on how these assumptions might be violated in practice and what the implications would be.
        \item The authors should reflect on the scope of the claims made, e.g., if the approach was only tested on a few datasets or with a few runs. In general, empirical results often depend on implicit assumptions, which should be articulated.
        \item The authors should reflect on the factors that influence the performance of the approach. For example, a facial recognition algorithm may perform poorly when image resolution is low or images are taken in low lighting. Or a speech-to-text system might not be used reliably to provide closed captions for online lectures because it fails to handle technical jargon.
        \item The authors should discuss the computational efficiency of the proposed algorithms and how they scale with dataset size.
        \item If applicable, the authors should discuss possible limitations of their approach to address problems of privacy and fairness.
        \item While the authors might fear that complete honesty about limitations might be used by reviewers as grounds for rejection, a worse outcome might be that reviewers discover limitations that aren't acknowledged in the paper. The authors should use their best judgment and recognize that individual actions in favor of transparency play an important role in developing norms that preserve the integrity of the community. Reviewers will be specifically instructed to not penalize honesty concerning limitations.
    \end{itemize}

\item {\bf Theory Assumptions and Proofs}
    \item[] Question: For each theoretical result, does the paper provide the full set of assumptions and a complete (and correct) proof?
    \item[] Answer: \answerYes{} 
    \item[] Justification: The full set of assumptions and a complete proof are provided in~\cref{sec:mf_theory,app:proof_forward,app:proof_backward}. 
    \item[] Guidelines:
    \begin{itemize}
        \item The answer NA means that the paper does not include theoretical results. 
        \item All the theorems, formulas, and proofs in the paper should be numbered and cross-referenced.
        \item All assumptions should be clearly stated or referenced in the statement of any theorems.
        \item The proofs can either appear in the main paper or the supplemental material, but if they appear in the supplemental material, the authors are encouraged to provide a short proof sketch to provide intuition. 
        \item Inversely, any informal proof provided in the core of the paper should be complemented by formal proofs provided in appendix or supplemental material.
        \item Theorems and Lemmas that the proof relies upon should be properly referenced. 
    \end{itemize}

    \item {\bf Experimental Result Reproducibility}
    \item[] Question: Does the paper fully disclose all the information needed to reproduce the main experimental results of the paper to the extent that it affects the main claims and/or conclusions of the paper (regardless of whether the code and data are provided or not)?
    \item[] Answer: \answerYes{} 
    \item[] Justification: All the information needed to reproduce the main experimental results are provided in~\cref{sec:exp,app:exp-details}. 
    \item[] Guidelines:
    \begin{itemize}
        \item The answer NA means that the paper does not include experiments.
        \item If the paper includes experiments, a No answer to this question will not be perceived well by the reviewers: Making the paper reproducible is important, regardless of whether the code and data are provided or not.
        \item If the contribution is a dataset and/or model, the authors should describe the steps taken to make their results reproducible or verifiable. 
        \item Depending on the contribution, reproducibility can be accomplished in various ways. For example, if the contribution is a novel architecture, describing the architecture fully might suffice, or if the contribution is a specific model and empirical evaluation, it may be necessary to either make it possible for others to replicate the model with the same dataset, or provide access to the model. In general. releasing code and data is often one good way to accomplish this, but reproducibility can also be provided via detailed instructions for how to replicate the results, access to a hosted model (e.g., in the case of a large language model), releasing of a model checkpoint, or other means that are appropriate to the research performed.
        \item While NeurIPS does not require releasing code, the conference does require all submissions to provide some reasonable avenue for reproducibility, which may depend on the nature of the contribution. For example
        \begin{enumerate}
            \item If the contribution is primarily a new algorithm, the paper should make it clear how to reproduce that algorithm.
            \item If the contribution is primarily a new model architecture, the paper should describe the architecture clearly and fully.
            \item If the contribution is a new model (e.g., a large language model), then there should either be a way to access this model for reproducing the results or a way to reproduce the model (e.g., with an open-source dataset or instructions for how to construct the dataset).
            \item We recognize that reproducibility may be tricky in some cases, in which case authors are welcome to describe the particular way they provide for reproducibility. In the case of closed-source models, it may be that access to the model is limited in some way (e.g., to registered users), but it should be possible for other researchers to have some path to reproducing or verifying the results.
        \end{enumerate}
    \end{itemize}

\item {\bf Open access to data and code}
    \item[] Question: Does the paper provide open access to the data and code, with sufficient instructions to faithfully reproduce the main experimental results, as described in supplemental material?
    \item[] Answer: \answerYes{} 
    \item[] Justification: We use existing public datasets and cite them appropriately. 
    \item[] Guidelines:
    \begin{itemize}
        \item The answer NA means that paper does not include experiments requiring code.
        \item Please see the NeurIPS code and data submission guidelines (\url{https://nips.cc/public/guides/CodeSubmissionPolicy}) for more details.
        \item While we encourage the release of code and data, we understand that this might not be possible, so “No” is an acceptable answer. Papers cannot be rejected simply for not including code, unless this is central to the contribution (e.g., for a new open-source benchmark).
        \item The instructions should contain the exact command and environment needed to run to reproduce the results. See the NeurIPS code and data submission guidelines (\url{https://nips.cc/public/guides/CodeSubmissionPolicy}) for more details.
        \item The authors should provide instructions on data access and preparation, including how to access the raw data, preprocessed data, intermediate data, and generated data, etc.
        \item The authors should provide scripts to reproduce all experimental results for the new proposed method and baselines. If only a subset of experiments are reproducible, they should state which ones are omitted from the script and why.
        \item At submission time, to preserve anonymity, the authors should release anonymized versions (if applicable).
        \item Providing as much information as possible in supplemental material (appended to the paper) is recommended, but including URLs to data and code is permitted.
    \end{itemize}

\item {\bf Experimental Setting/Details}
    \item[] Question: Does the paper specify all the training and test details (e.g., data splits, hyperparameters, how they were chosen, type of optimizer, etc.) necessary to understand the results?
    \item[] Answer: \answerYes{} 
    \item[] Justification: All training and test details are provided in~\cref{sec:exp,app:exp-details}. 
    \item[] Guidelines:
    \begin{itemize}
        \item The answer NA means that the paper does not include experiments.
        \item The experimental setting should be presented in the core of the paper to a level of detail that is necessary to appreciate the results and make sense of them.
        \item The full details can be provided either with the code, in appendix, or as supplemental material.
    \end{itemize}

\item {\bf Experiment Statistical Significance}
    \item[] Question: Does the paper report error bars suitably and correctly defined or other appropriate information about the statistical significance of the experiments?
    \item[] Answer: \answerNo{} 
    \item[] Justification: In the experimental results depicted in~\cref{fig:heatmap-sigam-L-train,fig:heatmap-sigam-L-test}, we determined that the variation in the reported values has minimal impact. In fact, the inclusion of error bars seems to obstruct intuitive understanding through visualization.
    \item[] Guidelines:
    \begin{itemize}
        \item The answer NA means that the paper does not include experiments.
        \item The authors should answer "Yes" if the results are accompanied by error bars, confidence intervals, or statistical significance tests, at least for the experiments that support the main claims of the paper.
        \item The factors of variability that the error bars are capturing should be clearly stated (for example, train/test split, initialization, random drawing of some parameter, or overall run with given experimental conditions).
        \item The method for calculating the error bars should be explained (closed form formula, call to a library function, bootstrap, etc.)
        \item The assumptions made should be given (e.g., Normally distributed errors).
        \item It should be clear whether the error bar is the standard deviation or the standard error of the mean.
        \item It is OK to report 1-sigma error bars, but one should state it. The authors should preferably report a 2-sigma error bar than state that they have a 96\% CI, if the hypothesis of Normality of errors is not verified.
        \item For asymmetric distributions, the authors should be careful not to show in tables or figures symmetric error bars that would yield results that are out of range (e.g. negative error rates).
        \item If error bars are reported in tables or plots, The authors should explain in the text how they were calculated and reference the corresponding figures or tables in the text.
    \end{itemize}

\item {\bf Experiments Compute Resources}
    \item[] Question: For each experiment, does the paper provide sufficient information on the computer resources (type of compute workers, memory, time of execution) needed to reproduce the experiments?
    \item[] Answer: \answerYes{} 
    \item[] Justification: All sufficient information on the computer resources are provided in~\cref{sec:exp,app:exp-details}. 
    \item[] Guidelines:
    \begin{itemize}
        \item The answer NA means that the paper does not include experiments.
        \item The paper should indicate the type of compute workers CPU or GPU, internal cluster, or cloud provider, including relevant memory and storage.
        \item The paper should provide the amount of compute required for each of the individual experimental runs as well as estimate the total compute. 
        \item The paper should disclose whether the full research project required more compute than the experiments reported in the paper (e.g., preliminary or failed experiments that didn't make it into the paper). 
    \end{itemize}
    
\item {\bf Code Of Ethics}
    \item[] Question: Does the research conducted in the paper conform, in every respect, with the NeurIPS Code of Ethics \url{https://neurips.cc/public/EthicsGuidelines}?
    \item[] Answer: \answerYes{} 
    \item[] Justification: We follow the NeurIPS Code of Ethics. 
    \item[] Guidelines:
    \begin{itemize}
        \item The answer NA means that the authors have not reviewed the NeurIPS Code of Ethics.
        \item If the authors answer No, they should explain the special circumstances that require a deviation from the Code of Ethics.
        \item The authors should make sure to preserve anonymity (e.g., if there is a special consideration due to laws or regulations in their jurisdiction).
    \end{itemize}

\item {\bf Broader Impacts}
    \item[] Question: Does the paper discuss both potential positive societal impacts and negative societal impacts of the work performed?
    \item[] Answer: \answerYes{} 
    \item[] Justification: Our paper is foundational research and not tied to particular applications.
    \item[] Guidelines:
    \begin{itemize}
        \item The answer NA means that there is no societal impact of the work performed.
        \item If the authors answer NA or No, they should explain why their work has no societal impact or why the paper does not address societal impact.
        \item Examples of negative societal impacts include potential malicious or unintended uses (e.g., disinformation, generating fake profiles, surveillance), fairness considerations (e.g., deployment of technologies that could make decisions that unfairly impact specific groups), privacy considerations, and security considerations.
        \item The conference expects that many papers will be foundational research and not tied to particular applications, let alone deployments. However, if there is a direct path to any negative applications, the authors should point it out. For example, it is legitimate to point out that an improvement in the quality of generative models could be used to generate deepfakes for disinformation. On the other hand, it is not needed to point out that a generic algorithm for optimizing neural networks could enable people to train models that generate Deepfakes faster.
        \item The authors should consider possible harms that could arise when the technology is being used as intended and functioning correctly, harms that could arise when the technology is being used as intended but gives incorrect results, and harms following from (intentional or unintentional) misuse of the technology.
        \item If there are negative societal impacts, the authors could also discuss possible mitigation strategies (e.g., gated release of models, providing defenses in addition to attacks, mechanisms for monitoring misuse, mechanisms to monitor how a system learns from feedback over time, improving the efficiency and accessibility of ML).
    \end{itemize}
    
\item {\bf Safeguards}
    \item[] Question: Does the paper describe safeguards that have been put in place for responsible release of data or models that have a high risk for misuse (e.g., pretrained language models, image generators, or scraped datasets)?
    \item[] Answer: \answerNA{} 
    \item[] Justification: The paper poses no such risks. 
    \item[] Guidelines:
    \begin{itemize}
        \item The answer NA means that the paper poses no such risks.
        \item Released models that have a high risk for misuse or dual-use should be released with necessary safeguards to allow for controlled use of the model, for example by requiring that users adhere to usage guidelines or restrictions to access the model or implementing safety filters. 
        \item Datasets that have been scraped from the Internet could pose safety risks. The authors should describe how they avoided releasing unsafe images.
        \item We recognize that providing effective safeguards is challenging, and many papers do not require this, but we encourage authors to take this into account and make a best faith effort.
    \end{itemize}

\item {\bf Licenses for existing assets}
    \item[] Question: Are the creators or original owners of assets (e.g., code, data, models), used in the paper, properly credited and are the license and terms of use explicitly mentioned and properly respected?
    \item[] Answer: \answerYes{} 
    \item[] Justification: We have stated the license and copyright of the data used. 
    \item[] Guidelines:
    \begin{itemize}
        \item The answer NA means that the paper does not use existing assets.
        \item The authors should cite the original paper that produced the code package or dataset.
        \item The authors should state which version of the asset is used and, if possible, include a URL.
        \item The name of the license (e.g., CC-BY 4.0) should be included for each asset.
        \item For scraped data from a particular source (e.g., website), the copyright and terms of service of that source should be provided.
        \item If assets are released, the license, copyright information, and terms of use in the package should be provided. For popular datasets, \url{paperswithcode.com/datasets} has curated licenses for some datasets. Their licensing guide can help determine the license of a dataset.
        \item For existing datasets that are re-packaged, both the original license and the license of the derived asset (if it has changed) should be provided.
        \item If this information is not available online, the authors are encouraged to reach out to the asset's creators.
    \end{itemize}

\item {\bf New Assets}
    \item[] Question: Are new assets introduced in the paper well documented and is the documentation provided alongside the assets?
    \item[] Answer: \answerYes{} 
    \item[] Justification: The content of our public code is well described. 
    \item[] Guidelines:
    \begin{itemize}
        \item The answer NA means that the paper does not release new assets.
        \item Researchers should communicate the details of the dataset/code/model as part of their submissions via structured templates. This includes details about training, license, limitations, etc. 
        \item The paper should discuss whether and how consent was obtained from people whose asset is used.
        \item At submission time, remember to anonymize your assets (if applicable). You can either create an anonymized URL or include an anonymized zip file.
    \end{itemize}

\item {\bf Crowdsourcing and Research with Human Subjects}
    \item[] Question: For crowdsourcing experiments and research with human subjects, does the paper include the full text of instructions given to participants and screenshots, if applicable, as well as details about compensation (if any)? 
    \item[] Answer: \answerNA{} 
    \item[] Justification: The paper does not involve crowdsourcing nor research with human subjects.
    \item[] Guidelines:
    \begin{itemize}
        \item The answer NA means that the paper does not involve crowdsourcing nor research with human subjects.
        \item Including this information in the supplemental material is fine, but if the main contribution of the paper involves human subjects, then as much detail as possible should be included in the main paper. 
        \item According to the NeurIPS Code of Ethics, workers involved in data collection, curation, or other labor should be paid at least the minimum wage in the country of the data collector. 
    \end{itemize}

\item {\bf Institutional Review Board (IRB) Approvals or Equivalent for Research with Human Subjects}
    \item[] Question: Does the paper describe potential risks incurred by study participants, whether such risks were disclosed to the subjects, and whether Institutional Review Board (IRB) approvals (or an equivalent approval/review based on the requirements of your country or institution) were obtained?
    \item[] Answer: \answerNA{} 
    \item[] Justification: The paper does not involve crowdsourcing nor research with human subjects.
    \item[] Guidelines:
    \begin{itemize}
        \item The answer NA means that the paper does not involve crowdsourcing nor research with human subjects.
        \item Depending on the country in which research is conducted, IRB approval (or equivalent) may be required for any human subjects research. If you obtained IRB approval, you should clearly state this in the paper. 
        \item We recognize that the procedures for this may vary significantly between institutions and locations, and we expect authors to adhere to the NeurIPS Code of Ethics and the guidelines for their institution. 
        \item For initial submissions, do not include any information that would break anonymity (if applicable), such as the institution conducting the review.
    \end{itemize}
\end{enumerate}
\end{document}